\def\OSDNARXIV{1}
\documentclass{article}

\newif\ifosdnarxiv
\ifdefined\OSDNARXIV
  \osdnarxivtrue
\else
  \osdnarxivfalse
\fi

\PassOptionsToPackage{numbers,compress}{natbib}
\ifosdnarxiv
  \usepackage[preprint]{neurips_2026}
\else
  \usepackage{neurips_2026}
\fi

\usepackage[utf8]{inputenc}
\usepackage[T1]{fontenc}
\usepackage{hyperref}
\usepackage{url}
\usepackage{graphicx}
\usepackage{booktabs}
\usepackage{amsfonts}
\usepackage{nicefrac}
\usepackage{microtype}
\usepackage[table]{xcolor}

\usepackage{amsmath}
\usepackage{amsthm}
\usepackage{mathrsfs}
\usepackage{bm}
\usepackage{siunitx}

\usepackage{float}
\usepackage{subcaption}
\usepackage{wrapfig}
\usepackage{multirow}
\usepackage{tabularx}
\usepackage{adjustbox}
\usepackage{arydshln}

\usepackage{algorithm}
\usepackage{algpseudocode}
\usepackage{listings}

\usepackage{tikz}
\usetikzlibrary{arrows.meta,positioning,calc,fit,backgrounds,shadows,matrix}
\usepackage[most]{tcolorbox}
\usepackage{pifont}
\usepackage{enumitem}
\usepackage{xspace}
\usepackage{appendix}
\usepackage{pgfgantt}
\usepackage{lipsum}

\makeatletter
\@ifundefined{nolinenumbers}{\newenvironment{nolinenumbers}{}{}}{}
\makeatother

\makeatletter
\newsavebox{\@tabnotebox}

\makeatother

\newcommand{\name}{OSDN\xspace}
\newcommand{\apf}{OSDN-APF\xspace}

\ifosdnarxiv
  \newcommand{\osdncodeurl}{\url{https://github.com/Lhongpei/OSDN}}
\else
  \newcommand{\osdncodeurl}{\url{https://anonymous.4open.science/status/OSDN-2F2D}}
\fi

\definecolor{OSDNBlue}{HTML}{005C7F}
\definecolor{OSDNOrange}{HTML}{F18F3B}
\definecolor{OSDNHighlight}{HTML}{C0005A}
\definecolor{OSDNGreen}{HTML}{5E9F6E}
\definecolor{OSDNRed}{HTML}{B92622}
\definecolor{OSDNPurple}{HTML}{7A5BA6}
\definecolor{OSDNGrey}{HTML}{6B7280}
\definecolor{OSDNLight}{HTML}{F5F7FA}
\definecolor{OSDNRowBlue}{HTML}{E5F1F5}
\definecolor{OSDNRowOrange}{HTML}{FCEBDD}
\definecolor{OSDNRowGreen}{HTML}{E8F3EA}

\newcommand{\osdnrow}{\rowcolor{OSDNRowBlue}}
\newcommand{\apfrow}{\rowcolor{OSDNRowOrange}}

\newcommand{\osdnhi}[1]{\textcolor{OSDNHighlight}{#1}}

\lstdefinestyle{pytorch}{
  basicstyle={\ttfamily\fontsize{7.4}{8.8}\selectfont},
  keywordstyle={\color{OSDNBlue!82!black}\bfseries},
  keywordstyle=[2]{\color{OSDNPurple!72!black}},
  keywordstyle=[3]{\color{OSDNOrange!88!black}},
  commentstyle={\color{OSDNGrey!92!black}\itshape},
  stringstyle={\color{OSDNGreen!55!black}},
  morekeywords={None,True,False,self,lambda,is,not,and,or,in,as,with,from,import},
  morekeywords=[2]{torch,rearrange,np,nn,F,einops},
  morekeywords=[3]{float,ones,ones_like,empty_like,zeros,zeros_like,eye,triu,
    einsum,linalg,solve_triangular,transpose,masked_fill,exp,sum,new_zeros,
    new_ones,clamp_min,cumprod,shape,device,dtype,size,view,reshape},
  numbers=left,
  numberstyle={\fontsize{5.6}{6.8}\selectfont\color{OSDNGrey!65}},
  numbersep=7pt,
  stepnumber=1,
  columns=fullflexible,
  keepspaces=true,
  showstringspaces=false,
  breaklines=true,
  breakatwhitespace=true,
  backgroundcolor={\color{OSDNLight}},
  frame=leftline,
  framerule=1.6pt,
  rulecolor={\color{OSDNBlue!72!black}},
  framesep=14pt,
  xleftmargin=3.6em,
  xrightmargin=0.5em,
  aboveskip=10pt,
  belowskip=2pt,
  captionpos=b,
  abovecaptionskip=6pt,
  belowcaptionskip=2pt
}

\hypersetup{
  colorlinks=true,
  linkcolor=OSDNBlue,
  citecolor=OSDNGreen!65!black,
  urlcolor=OSDNBlue
}

\newtheorem{theorem}{Theorem}[section]
\newtheorem{lemma}[theorem]{Lemma}
\newtheorem{proposition}[theorem]{Proposition}
\newtheorem{corollary}[theorem]{Corollary}
\theoremstyle{definition}

\theoremstyle{remark}
\newtheorem{remark}[theorem]{Remark}

\tcbset{
  osdn theorem box/.style={
    enhanced,
    breakable,
    sharp corners,
    arc=1mm,
    boxrule=0.45pt,
    left=6pt,
    right=6pt,
    top=5pt,
    bottom=5pt,
    before skip=8pt,
    after skip=8pt,
    borderline west={2pt}{0pt}{#1!78!black},
    colframe=#1!34!black,
    colback=#1!5!white
  }
}

\title{OSDN: Improving Delta Rule with Provable Online Preconditioning in Linear Attention}

\ifosdnarxiv
\author{%
  Chenyu Zhou\thanks{Equal contribution.}\\
  Shanghai Jiao Tong University\\
  \texttt{chenyuzhou@sjtu.edu.cn}
  \And
  Hongpei Li\footnotemark[1]\\
  Northwestern University\\
  \texttt{HongpeiLi2031@u.northwestern.edu}
  \And
  Yuerou Liu\\
  Huazhong University of Science and Technology\\
  \texttt{u202210581@hust.edu.cn}
  \AND
  Jianghao Lin\\
  Shanghai Jiao Tong University\\
  \texttt{linjianghao@sjtu.edu.cn}
  \And
  Dongdong Ge\\
  Shanghai Jiao Tong University\\
  \texttt{ddge@sjtu.edu.cn}
  \And
  Yinyu Ye\\
  Stanford University\\
  \texttt{yinyu-ye@stanford.edu}
}
\else
\author{Anonymous Author(s)}
\fi

\begin{document}

\maketitle

\begin{abstract}
Linear attention and state-space models offer constant-memory alternatives to softmax attention, but often struggle with in-context associative recall. The Delta Rule mitigates this by writing each token via one step of online gradient descent. However, its step size relies on a single scalar gate that ignores the feature-wise curvature of the inner objective. We propose \textbf{Online Scaled DeltaNet (\name)}, which augments the scalar gate with a diagonal preconditioner updated online via hypergradient feedback. Crucially, this right-preconditioning is algebraically equivalent to a per-feature scaling of the write-side key. This equivalence allows \name{} to strictly preserve the hardware-friendly chunkwise parallel pipeline of DeltaNet without incurring high-dimensional state overhead. Theoretically, by exploiting the exact-quadratic structure of the inner regression loss, we establish super-geometric convergence against a right-Newton comparator and prove an algorithm-aligned token-local residual contraction bound. To handle non-stationary contexts, we further introduce Adaptive Preconditioner Forgetting (APF) to dynamically refresh stale calibration. Empirically, \name{} demonstrates strong performance across scales. At the 340M-parameter scale, \name{} improves JRT-style in-context recall by 32\% over DeltaNet. Scaling to 1.3B parameters, it achieves a 39\% reduction in the recall residual ratio while maintaining parity on general downstream tasks (e.g., perplexity and LongBench) -- demonstrating that our online-preconditioning mechanism effectively transfers and amplifies at the billion-parameter scale. Code is available at \osdncodeurl.
\end{abstract}

\section{Introduction}

Linear attention~\citep{katharopoulos_transformers_2020,choromanski_rethinking_2021}
and state-space models~\citep{gu_efficiently_2021,gu_mamba_2024,dao_transformers_2024}
compress the prefix into a fixed-size matrix-valued state
$S_t \in \mathbb{R}^{V \times K}$ updated by an additive recurrence,
restoring $\mathcal{O}(N)$ inference at the documented cost of weakened
in-context retrieval~\citep{Sun2023RetentiveNA,arora_zoology_2024,arora_simple_2024,wen_rnns_2024}.
The \emph{delta rule}~\citep{schmidhuber_learning_1992,schlag_linear_2021}
narrows that gap with a read-then-write update
$S_t = S_{t-1}(I - \beta_t k_t k_t^\top) + \beta_t v_t k_t^\top$, which
can be read as one step of online gradient descent on the per-token
regression loss $f_t(S) = \tfrac12\|S k_t - v_t\|_F^2$~\citep{yang_parallelizing_2024}.
Combined with chunkwise WY parallelisation~\citep{yang_parallelizing_2024}
and gated variants~\citep{yang_gated_2024-1,kimiteam2025kimilinearexpressiveefficient},
DeltaNet-style models close much of the recall gap. The optimisation
view, however, exposes one structural choice that has remained
untouched: the learning rate $\beta_t$ is a single scalar, applied
uniformly to every key dimension -- the recurrent counterpart of vanilla
SGD, forgoing the by-now-standard role of diagonal preconditioning in
adaptive optimisation~\citep{duchi_adaptive_2011,kingma_adam_2015,gupta_shampoo_2018}.

This scalar choice is especially restrictive in associative recall. A
single prompt may contain stable identifiers, high-entropy values,
formatting tokens, and distractors whose keys recur at different rates and
with different empirical curvature. A scalar write gate must compromise
across these directions: increasing it helps one association overwrite
stale memory, but can over-correct another direction that is already
well-calibrated. The natural fix is the one used throughout optimisation
-- rescale coordinates before taking the gradient step -- but in a
sequence layer this fix is only useful if it does not require a dense
second-order state, does not look at the high-dimensional memory residual,
and does not break the chunkwise parallel DeltaNet kernel.

\begin{figure*}[t]
    \centering
    \includegraphics[width=0.9\textwidth]{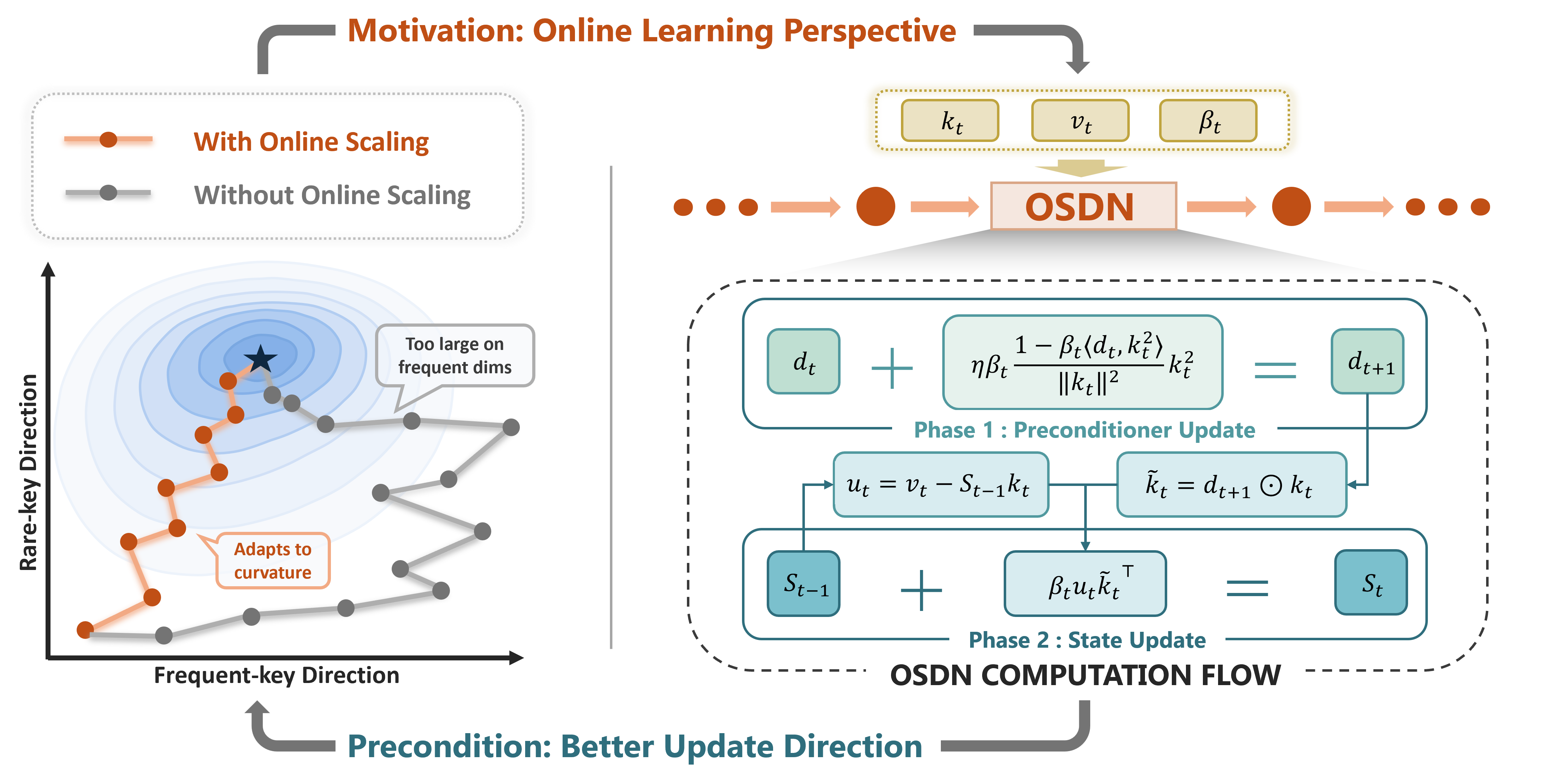}
    \caption{\textbf{Motivation and Computation Flow of Online Scaled DeltaNet (OSDN).} \textbf{Left:} From an online learning perspective, standard DeltaNet applies a uniform scalar learning rate, struggling to adapt to optimization directions with varying curvatures (e.g., frequent vs. rare keys). OSDN resolves this by introducing a diagonal preconditioner that dynamically scales the update directions. \textbf{Right:} The OSDN computation flow is decoupled into two phases: a lightweight preconditioner update (Phase 1) followed by the primary state update (Phase 2). This decoupled design strictly preserves the efficiency of hardware-friendly chunkwise parallelization.}
    \label{fig:osdn_banner}
\end{figure*}

We propose \textbf{Online Scaled DeltaNet (\name)}, which augments the
scalar gate with a diagonal preconditioner
$D_t = \mathrm{diag}(d_t)$ learned online through the hypergradient
feedback of OSGM~\citep{gao2024gradient}. The construction rests on three
properties of the inner regression objective $f_t$.
\textbf{(i) Seamless Integration.} Right-preconditioning the gradient is
algebraically identical to a per-feature scaling of the write-side key,
$\tilde k_t = d_t \odot k_t$; the chunkwise WY pipeline is preserved
under the single substitution $K\mapsto\tilde K$ on the storage side
(Section~\ref{sec:preconditioned_delta}).
\textbf{(ii) Decoupling.} The hypergradient that drives $d_t$ depends
only on the key sequence and the scalar gates, never on $S_t$, $v_t$, or
the residual; the full sequence $\{d_t\}$ thus reduces to an
$\mathcal O(K)$-state affine recurrence schedulable before the
chunkwise write pass.
\textbf{(iii) Provability.} Because the inner loss is exactly quadratic,
its Hessian is constant and the Hessian-Lipschitz constant vanishes; we
obtain a population-limit super-geometric rate against the right-Newton
comparator (Theorem~\ref{thm:main_population}) and an
algorithm-aligned token-local residual-contraction bound on the
per-token hypergradient surrogate the implementation actually
optimises (Theorem~\ref{thm:main_tokenlocal}).

A pure hypergradient update is accumulative -- adequate for stationary
keys, but stale under non-stationary language contexts. We therefore
introduce \emph{Adaptive Preconditioner Forgetting (APF)}, a learned
token-wise, head-wise retention gate applied to $d_t$ alone (not to the
high-dimensional state $S_t$); the resulting recurrence remains affine
and the two-phase scan is preserved (Section~\ref{subsec:apf}). We refer
to the resulting variant as \apf.

\paragraph{Contributions and findings.}
The paper makes four main contributions. First, it identifies diagonal
right-preconditioning as a minimal extension of the Delta rule and proves
its exact write-key scaling equivalence. Second, it derives the
hypergradient update and shows that the whole preconditioner trajectory is
an $\mathcal O(K)$ affine scan over keys and scalar gates. Third, it
introduces APF as a non-stationary refinement that refreshes the
meta-optimiser state without decaying the memory state. Fourth, it proves
two mechanism-level super-geometric guarantees against diagonal and
right-Newton comparators, with the algorithm-facing bound stated directly
on the residual-ratio surrogate measured in our diagnostics.

Empirically, the gains appear where the mechanism predicts. At matched
340M-parameter / 10B-token compute, vanilla \name{} improves JRT-style
in-context recall by 32\% over DeltaNet and lowers the directly measured
repeated-prompt residual-ratio geometric mean from 0.537 to 0.433;
\apf{} retains a 17\% recall gain and is the more stable variant on
long-context perplexity. Scaling the same comparison to 1.3B parameters
and 100B tokens nearly doubles the residual-ratio reduction (0.432 to
0.265, a 39\% drop -- the lowest $q_{\mathrm{geo}}$ across either scale),
while perplexity, commonsense, and LongBench averages stay at parity
with DeltaNet (Section~\ref{sec:exp}, Appendix~\ref{sec:app_scale_1p3b}).

\section{Related Work}
\label{sec:related}

Linear attention and state-space models trade softmax attention's
quadratic cache for a fixed recurrent state~\citep{katharopoulos_transformers_2020,choromanski_rethinking_2021,gu_mamba_2024,dao_transformers_2024}.
This makes long-sequence inference practical, but it also concentrates
all prefix information into a limited matrix state, producing known
weaknesses on associative recall and exact copying~\citep{arora_zoology_2024,arora_simple_2024,wen_rnns_2024,jelassi_repeat_2024}.
Modern recurrent linear-attention layers improve this tradeoff through
decay, data-dependent gating, or more expressive transition maps:
RetNet~\citep{Sun2023RetentiveNA}, GLA~\citep{yang_gated_2024},
Gated DeltaNet~\citep{yang_gated_2024-1}, RWKV-7~\citep{peng_rwkv7_2025},
and KDA / Kimi Linear~\citep{kimiteam2025kimilinearexpressiveefficient}
all modify how the state is retained or overwritten. \name{} is
orthogonal to this line: it leaves the base transition and memory state
layout intact, but changes the geometry of the write step by replacing a
scalar update size with an online diagonal preconditioner.

The delta-rule branch is especially natural for this intervention.
Fast-weight programmers~\citep{schmidhuber_learning_1992,ba_using_2016,schlag_linear_2021}
interpret the recurrent state as weights written by the slow network at
test time; DeltaNet makes this explicit by writing through one online
gradient step on a token-local regression loss~\citep{schlag_linear_2021,yang_parallelizing_2024}.
Prior improvements mainly change parallelisation, gating, or transition
expressivity: chunkwise WY parallelisation closes the hardware gap with
softmax attention~\citep{yang_parallelizing_2024}; Gated DeltaNet adds a
state forget gate~\citep{yang_gated_2024-1}; KDA uses fine-grained channel
decay~\citep{kimiteam2025kimilinearexpressiveefficient}; and
DeltaProduct raises per-step expressivity with Householder
transitions~\citep{siems_deltaproduct_2025}. In contrast, \name{} keeps
DeltaNet's rank-one residual write and scalar gate, then learns a
feature-wise multiplier $d_t$ on the write key. The contribution is
therefore deliberately narrow: an optimizer-style preconditioner inside
the existing recurrence, not a new recurrent backbone.

\name{} also connects to the broader view of sequence layers as
test-time optimisers. Linear attention can implement gradient descent on
in-context regression~\citep{vonoswald_transformers_2023,akyurek_what_2023};
TTT-style models train an explicit inner model during the forward
pass~\citep{sun_learning_2024,wang_testtime_2025}; and MesaNet solves a
cumulative least-squares problem to high accuracy at each token~\citep{vonoswald_mesanet_2025}.
\name{} sits between scalar first-order updates and exact prefix solves.
It imports diagonal preconditioning from adaptive optimisation and online
hypergradient methods~\citep{duchi_adaptive_2011,kingma_adam_2015,baydin_online_2018,gao2024gradient}
while preserving the $\mathcal O(K)$ recurrent state and the chunkwise
DeltaNet kernel. Appendix~\ref{sec:app_related} gives the expanded
taxonomy, including Table~\ref{tab:ttt_view}.

\section{Preliminary}
\label{sec:prelim}

We use lower-case bold for vectors and upper-case for matrices. A sequence
layer maintains a state $S_t \in \mathbb{R}^{V \times K}$ updated from
query/key/value triples $(q_t, k_t, v_t)$, with $q_t, k_t \in \mathbb{R}^K$
and $v_t \in \mathbb{R}^V$. A chunk of length $C$ stacks intra-chunk tokens
row-wise as $K_{[t]}, \osdnhi{\tilde K_{[t]}} \in \mathbb{R}^{C\times K}$ and
$V_{[t]} \in \mathbb{R}^{C\times V}$. We write $\odot$ for the Hadamard
product, $\sigma(\cdot)$ for the elementwise sigmoid,
$\mathrm{tril}(A,-1)$ for the strict lower triangle of $A$, and
$\mathcal H^\dagger$ for the Moore--Penrose pseudoinverse. Let
$\Sigma_k = \mathbb{E}[k_t k_t^\top] \in \mathbb{R}^{K\times K}$ denote the
(uncentred) key covariance and $L = \lambda_{\max}(\Sigma_k)$ its top
eigenvalue, which serves as the smoothness constant of the in-context
regression loss.

\paragraph{Linear attention, gating, and the Delta rule.}
A canonical linear-attention layer~\citep{katharopoulos_transformers_2020}
reads from and writes to $S_t$ via the additive recurrence
$S_t = S_{t-1} + v_t k_t^\top$, $o_t = S_t q_t$. Subsequent work generalises
the transition with multiplicative gating, decay, or a rank-one
perturbation, $S_t = S_{t-1} P_t + \omega_t v_t k_t^\top$. The choices
$(P_t, \omega_t) = (I, 1)$ recover the additive case;
$(\alpha_t I, 1)$ recovers RetNet~\citep{Sun2023RetentiveNA};
$(\mathrm{Diag}(\alpha_t), 1)$ recovers GLA~\citep{yang_gated_2024}; and
$(I - \beta_t k_t k_t^\top, \beta_t)$ recovers DeltaNet's read-then-write
update $S_t = S_{t-1} + \beta_t u_t k_t^\top$ with residual
$u_t = v_t - S_{t-1} k_t \in \mathbb{R}^V$ and scalar gate
$\beta_t = \sigma(w_\beta^\top x_t + b_\beta)\in(0,1)$. Gated DeltaNet adds
a scalar forget gate~\citep{yang_gated_2024-1}; KDA replaces the scalar
gate by a fine-grained vector $\boldsymbol\alpha_t\in(0,1]^K$~\citep{kimiteam2025kimilinearexpressiveefficient}.

The WY chunkwise transform of~\citet{yang_parallelizing_2024} avoids
per-token sequentialism by reducing all intra-chunk computations to dense
matrix multiplications well-suited to tensor cores; only the chunk-boundary
state $S_{[t]} \to S_{[t+1]}$ propagates between chunks. We adopt this
chunkwise framework throughout, and Section~\ref{subsec:recurrent_chunk}
specifies its OSDN form. A unified taxonomy of these layers under the
``sequence layer as inner optimiser'' reading -- which places OSDN
between MesaNet's exact $\arg\min$~\citep{vonoswald_mesanet_2025} and
the additive Hebbian write -- is given in
Appendix~\ref{sec:app_related}.

\paragraph{The Online Scaled Gradient Method.}
Our analysis builds on the OSGM framework of~\citet{gao2024gradient},
which casts the choice of preconditioner in a gradient step as an online
learning problem. For an $L$-smooth convex objective $f$ and a step
$x^+ = x - P\nabla f(x)$, OSGM defines the \emph{hypergradient surrogate}
\begin{equation}
    h_x(P) \;:=\; \frac{f(x - P\nabla f(x)) - f(x)}{\|\nabla f(x)\|^2},
    \label{eq:osgm_surrogate}
\end{equation}
i.e.\ the relative loss change of one step. The surrogate is convex in
$P$, non-positive only for descending preconditioners, and admits a
sublinear-regret online learner whose cumulative regret translates into
a function-value bound
$f(x^{T+1}) - f^* \le (C\,\mathcal R_T/T)^T\,(f(x^1)-f^*)$ via an AM--GM
reduction. On non-quadratic losses the analysis carries a residual
scaling with the Hessian-Lipschitz constant. The DeltaNet sequence-level
loss is exactly quadratic with a constant Hessian, which collapses that
residual to zero; this enables the sharp super-geometric statement we
prove in the idealised setting and motivates the diagonal instantiation
($d_t$) developed next. The full OSGM background, including the three
convexity / descent / regret properties invoked by our theory, is
recapitulated in Appendix~\ref{sec:theory}.

\section{A Preconditioned Delta Rule}
\label{sec:preconditioned_delta}

The motivation is the OSGM view of preconditioned gradient dynamics:
on a quadratic memory-regression objective, an online learner with low
regret against a good preconditioner can drive the residual left by each
write to contract much faster than a fixed scalar step. A direct use of
that population-level hypergradient, however, is not a usable DeltaNet
layer. It is tied to the current memory state \(S_t\), the full gradient
or residual, and a dense right-preconditioner comparator, so the
preconditioner trajectory cannot be isolated from the state-side write
pass; this would destroy the chunkwise WY schedule that makes DeltaNet
efficient.

This section therefore derives an implementation-aligned version of the
OSGM idea. We restrict the right preconditioner to a diagonal vector
\(d_t\), use the token-local DeltaNet regression loss, and exploit its
exact quadratic form. The resulting hypergradient collapses to a
closed-form recurrence depending only on the key stream and scalar gates.
This decoupling lets us first materialise write-side keys
\(\tilde k_t=d_t\odot k_t\), then run the baseline chunkwise DeltaNet
kernel with a single storage-side key substitution. The full chunkwise
derivation, all proofs, and the hardware-level cost analysis are
deferred to Appendices~\ref{app:method_details} and~\ref{app:chunk_impl}.

\subsection{From scalar gate to diagonal preconditioner}
\label{subsec:equivalence}

DeltaNet's update $S_t = S_{t-1} + \beta_t u_t k_t^\top$ with
$u_t = v_t - S_{t-1} k_t$ is one step of online gradient descent on the
per-token regression loss $f_t(S) = \tfrac12 \|Sk_t - v_t\|_F^2$, since
$\nabla f_t(S_{t-1}) = -u_t k_t^\top$~\citep{yang_parallelizing_2024}. The
scalar gate $\beta_t$ acts as a single learning rate shared across key
coordinates; we replace it by a gate--preconditioner composition
$\beta_t \osdnhi{D_t}$ with $\osdnhi{D_t = \mathrm{diag}(d_t)}$ and
$\osdnhi{d_t} \in \mathcal{D} := [d_{\min}, d_{\max}]^K$, $d_{\min}>0$. The
right-preconditioned step then reads
\begin{equation}
    S_t \;=\; S_{t-1} - \beta_t \nabla f_t(S_{t-1})\,\osdnhi{D_t}
        \;=\; S_{t-1} + \beta_t u_t \osdnhi{(d_t \odot k_t)}^\top
        \;=\; S_{t-1} + \beta_t u_t \osdnhi{\tilde k_t^\top},
    \label{eq:equiv_update}
\end{equation}
where the second equality uses $k_t^\top D_t = (d_t \odot k_t)^\top$ to
turn the diagonal preconditioner into a per-feature scaling of the
write-side key. Defining the \emph{preconditioned key}
$\osdnhi{\tilde k_t := d_t \odot k_t}$, the OSDN update is identical to
DeltaNet up to substitution of $k_t$ by $\tilde k_t$ on the write side
only; the read $S k_t$, residual $u_t$, and rank-one structure are
unchanged.

\subsection{Decoupled hypergradient feedback}
\label{subsec:hypergradient}

The preconditioner $d_t$ is updated by online gradient descent on the
hypergradient surrogate adapted from~\citet{gao2024gradient},
$h_t(d_t) = \bigl[f_t(S_t) - f_t(S_{t-1})\bigr]/\|\nabla f_t(S_{t-1})\|_F^2$.
The exact-quadratic structure of $f_t$ closes this surrogate analytically:

\begin{lemma}[Closed-form hypergradient]
\label{lem:hypergradient}
For any $d_t \in \mathbb{R}^K$, $\beta_t \in (0,1)$, and $k_t \neq 0$,
\begin{equation}
    h_t(d_t) \;=\; \frac{(1 - \beta_t \langle d_t, k_t^2\rangle)^2 - 1}{2\,\|k_t\|_2^2},
    \qquad
    \nabla_d h_t(d_t) \;=\; -\beta_t\,\frac{1 - \beta_t \langle d_t, k_t^2\rangle}{\|k_t\|_2^2}\, k_t^2,
\end{equation}
where $k_t^2$ denotes the elementwise square of $k_t$. Proof in
Appendix~\ref{app:method_details}.
\end{lemma}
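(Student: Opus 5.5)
Since $f_t$ is an exact quadratic in $S$, the plan is to evaluate every piece of the surrogate $h_t(d_t)=[f_t(S_t)-f_t(S_{t-1})]/\|\nabla f_t(S_{t-1})\|_F^2$ in terms of the residual $u_t=v_t-S_{t-1}k_t$ and then check that $u_t$ cancels.

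First, the post-write residual. By \eqref{eq:equiv_update}, $S_t=S_{t-1}+\beta_t u_t\tilde k_t^\top$ with $\tilde k_t=d_t\odot k_t$. Hence
\begin{equation*}
S_tk_t-v_t \;=\; (S_{t-1}k_t-v_t)+\beta_t u_t\,\tilde k_t^\top k_t \;=\; -u_t\bigl(1-\beta_t\langle \tilde k_t,k_t\rangle\bigr).
\end{equation*}
The key identity here is $\tilde k_t^\top k_t=\sum_i d_{t,i}k_{t,i}^2=\langle d_t,k_t^2\rangle$. It follows that $f_t(S_t)=\tfrac12(1-\beta_t\langle d_t,k_t^2\rangle)^2\|u_t\|_2^2$ and $f_t(S_{t-1})=\tfrac12\|u_t\|_2^2$.

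Second, the denominator. Since $\nabla f_t(S_{t-1})=-u_tk_t^\top$ is rank one, $\|u_tk_t^\top\|_F^2=\|u_t\|_2^2\|k_t\|_2^2$. Dividing, the factor $\|u_t\|_2^2$ cancels and gives the stated $h_t$.

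This cancellation is the only delicate point. It requires $u_t\neq 0$, so that the surrogate is well defined. I would state that $h_t$ is defined by the closed form, equivalently by continuous extension, when $u_t=0$. The hypothesis $k_t\neq0$ handles the remaining factor of the denominator. I would also remark that the resulting expression is independent of $S_{t-1}$ and $v_t$, which is the decoupling claim (ii).

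Finally, the gradient follows from the chain rule. Set $a=1-\beta_t\langle d_t,k_t^2\rangle$, so $\nabla_d a=-\beta_t k_t^2$. Then $\nabla_d h_t=\frac{2a(-\beta_t k_t^2)}{2\|k_t\|_2^2}$, which is the stated formula. No step poses a real obstacle; the main care is the bookkeeping of $\langle d_t,k_t^2\rangle$ and the $u_t=0$ degenerate case.
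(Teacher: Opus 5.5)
Your proposal is correct and follows the paper's own proof essentially line by line. Both compute the post-write residual $S_tk_t-v_t=-u_t(1-\beta_t\langle d_t,k_t^2\rangle)$, use the rank-one identity $\|\nabla f_t(S_{t-1})\|_F^2=\|u_t\|_2^2\|k_t\|_2^2$ to cancel $\|u_t\|_2^2$, and then differentiate. Your explicit treatment of the degenerate case $u_t=0$ goes beyond the appendix proof, which passes over it; there the closed form is indeed the continuous extension, since $h_t$ does not depend on $u_t\neq0$, whereas the theory appendix instead adopts the convention $h_t\equiv 0$ on that set.
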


Two consequences make this practical. \textbf{(i) Decoupling.} Both $h_t$
and $\nabla_d h_t$ depend only on the key $k_t$ and the scalar gate $\beta_t$;
neither $S_t$, $v_t$, nor the residual $u_t$ enters. The full sequence
$\{d_t\}$ is therefore computable from the key stream alone, before any
state-side work. \textbf{(ii) Affine recurrence.} The OGD step
$d_{t+1} = d_t - \eta\,\nabla_d h_t(d_t)$, followed by projection onto
$\mathcal D$, is a piecewise-affine map in $d_t$:
\begin{equation}
    \bar d_{t+1} \;=\;
    \Bigl(I - \tfrac{\eta\beta_t^2}{\max(\|k_t\|_2^2,\epsilon)}\,k_t^2 (k_t^2)^\top\Bigr) d_t
    + \tfrac{\eta\beta_t}{\max(\|k_t\|_2^2,\epsilon)}\,k_t^2,
    \quad
    d_{t+1} \;=\; \Pi_{\mathcal D}(\bar d_{t+1}),
    \label{eq:dt_update}
\end{equation}
giving an $\mathcal O(K)$ streaming-state scan per head. Under normalised
keys $\|k_t\|_2^2 = 1$ the surrogate is at most $1$-smooth, the bounded box
$\mathcal D=[0.5, 2.0]^K$ used in our experiments yields strict per-token
descent of $f_t$ (Corollary~\ref{cor:monotone_descent} in
Appendix~\ref{app:method_details}), and the reported runs use
$\eta = 0.003$ as the practical online step size
(Appendix~\ref{sec:app_reproducibility}).

\newcommand{\algHi}[1]{\begingroup\setlength{\fboxsep}{1.5pt}\colorbox{OSDNHighlight!10}{\(\displaystyle #1\)}\endgroup}

\begin{algorithm}[H]
\caption{\textbf{Phase~1: online preconditioner sweep.}
Shaded lines emit the write key for phase~2 and update the
preconditioner state. The box clamp realising $\Pi_{\mathcal D}$ is omitted
for clarity.}
\label{alg:precond_online}
\begin{algorithmic}[1]
\Require Keys $k_1,\dots,k_L \in \mathbb{R}^K$, gates
    $\beta_1,\dots,\beta_L \in (0,1)$, initial $d^{(0)} \in \mathbb{R}^K_{>0}$,
    learning rate $\eta>0$, floor $\epsilon>0$.
\Ensure Preconditioned keys $\tilde k_1,\dots,\tilde k_L$ and final $d_{L+1}$.
\State $d \gets d^{(0)}$
\For{$t=1,\ldots,L$}
    \State \algHi{\tilde k_t \gets d \odot k_t}
    \Comment{materialise write key for phase~2}
    \State $k_t^2 \gets k_t \odot k_t$;\quad $n_t \gets \max(\|k_t\|_2^2,\epsilon)$
    \State \algHi{d \gets d \;+\; \eta\beta_t\,
        \dfrac{1 - \beta_t\langle d, k_t^2\rangle}{n_t}\, k_t^2}
    \Comment{hypergradient step; $d$ becomes $d_{t+1}$}
\EndFor
\end{algorithmic}
\end{algorithm}

\subsection{Recurrent and chunkwise form}
\label{subsec:recurrent_chunk}

Once phase~1 has emitted the write-side key \(\tilde k_t\), the state
update is just DeltaNet with an asymmetric storage factor. Substituting
$u_t = v_t - S_{t-1} k_t$ into~\eqref{eq:equiv_update} yields
\begin{equation}
    S_t \;=\; S_{t-1}\bigl(I - \beta_t k_t \osdnhi{\tilde k_t^\top}\bigr)
            + \beta_t v_t \osdnhi{\tilde k_t^\top}.
    \label{eq:recurrent_form}
\end{equation}
Compared with DeltaNet's symmetric transition $(I - \beta_t k_t k_t^\top)$,
the rank-one perturbation becomes asymmetric but retains its identity-minus
form, so the WY chunkwise transform of~\citet{yang_parallelizing_2024}
applies essentially verbatim. The only changes are at the chunk Gram and
the intra-chunk score: for a chunk of length $C$ stacking
$K_{[t]}, \osdnhi{\tilde K_{[t]}} \in \mathbb{R}^{C\times K}$ and gates
$B_{[t]} = \mathrm{diag}(\boldsymbol\beta_{[t]})$,
\begin{equation}
    B_{[t]} K_{[t]} K_{[t]}^\top \;\longmapsto\;
    B_{[t]} K_{[t]} \osdnhi{\tilde K_{[t]}^\top},\qquad
    Q_{[t]} K_{[t]}^\top \;\longmapsto\;
    Q_{[t]} \osdnhi{\tilde K_{[t]}^\top}.
\end{equation}
The UT inverse, matrix shapes, and tensor-core layout carry over from the
DeltaNet kernel; the full chunkwise derivation appears in
Appendix~\ref{app:method_details}.

\subsection{Adaptive preconditioner forgetting}
\label{subsec:apf}

The hypergradient update of~\eqref{eq:dt_update} accumulates evidence
without forgetting, which is appropriate for a stationary key
distribution but stale under non-stationary language contexts where
topics, formats, and local key directions shift within a document. We
therefore add \emph{Adaptive Preconditioner Forgetting (APF)}, a
retention gate applied only to the preconditioner state $d_t$ and
\emph{not} to the high-dimensional memory $S_t$. The implementation
predicts a token-wise, head-wise scalar
$\osdnhi{r_{t,h}} = \sigma(w_{r,h}^\top x_t + b_{r,h})$ and broadcasts it
over the key dimension; the affine recurrence becomes
\begin{equation}
    \bar d_{t+1} \;=\; \osdnhi{r_{t,h}}\, d_t
        + \eta\beta_t\,\frac{1 - \beta_t\langle d_t, k_t^2\rangle}{\max(\|k_t\|_2^2,\epsilon)}\, k_t^2,
    \qquad d_{t+1} = \Pi_{\mathcal D}(\bar d_{t+1}).
    \label{eq:dt_update_apf}
\end{equation}
Setting $r_{t,h}\equiv 1$ recovers~\eqref{eq:dt_update}. The recurrence
remains affine in $d_t$ followed by a coordinate-wise projection, so the
phase-1 sweep is unchanged and adds only $H(d_\mathrm{m}+1)$ parameters
per layer ($\le 0.3\%$ at our 340M scale). APF is \emph{not} memory-state
decay: $S_t$, $v_t$, and the residual update are untouched. A
proposition formalising the affine-recurrence preservation, and its
proof, are in Appendix~\ref{app:method_details}.

\subsection{Two-phase implementation}
\label{subsec:twophase}

The decoupling property organises OSDN into two isolated phases.
\textbf{Phase~1} runs Algorithm~\ref{alg:precond_online} (or its APF
variant) on the key stream and gates to emit
$\{\tilde k_1,\dots,\tilde k_L\}$ and the next $d$; this is an
$\mathcal O(LK)$ scan with $\mathcal O(K)$ streaming state.
\textbf{Phase~2} runs the standard chunkwise DeltaNet pass with
$\tilde K_{[t]}$ replacing $K_{[t]}$ wherever the chunk rule reads the
write-side key (the chunk Gram and intra-chunk score). The UT inverse,
matrix shapes, and tensor-core tile layout are unchanged from DeltaNet,
and the persistent recurrent state grows by only the $K$-vector $d_t$
($\le 0.05\%$ of the recurrent state at our scale; cost breakdown in
Appendix~\ref{app:chunk_impl}). The backward pass factors cleanly:
phase~2 produces $\partial\mathcal L/\partial \tilde k_t$, from which
$\partial\mathcal L/\partial d_t = k_t \odot \partial\mathcal L/\partial \tilde k_t$
and $\partial\mathcal L/\partial k_t \mathrel{+}= d_t \odot \partial\mathcal L/\partial \tilde k_t$;
the gradient through $d_t$ is propagated by reverse-mode sweep over the
projected affine recurrence.

\section{Theory: Residual Contraction}
\label{sec:theory_summary}

The benefit of online scaling is most transparent in the ideal OSGM
picture. For the quadratic memory-regression objective, a right
preconditioner that competes well with the Newton comparator yields a
super-geometric contraction of the fast-weight residual. This is the
first guarantee below, and it is the conceptual reason to replace
DeltaNet's scalar step by a learned preconditioner.

This first guarantee is not, by itself, a chunkwise layer. Its hypergradient is
defined through a population objective and the current memory state, so
the resulting preconditioner cannot be computed independently of the
state-side write pass. Section~\ref{sec:preconditioned_delta} therefore
uses an efficiency-aligned surrogate: diagonal, token-local, and
closed-form in the key stream. The second guarantee shows that this
implemented surrogate retains the same regret-to-contraction shape,
while controlling the product of token-local residual ratios rather than
the suboptimality of a single global objective. Full proofs and
edge-case discussions are in Appendix~\ref{sec:theory}.

\begin{theorem}[Population-limit right-Newton comparator]
\label{thm:main_population}
Let
$f(S)=\tfrac12\mathbb E\|S k_t-v_t\|_F^2$, with key covariance
$\Sigma_k=\mathbb E[k_t k_t^\top]$ and
$L=\lambda_{\max}(\Sigma_k)$. Consider full-gradient right-preconditioned
dynamics
$S_{t+1}=S_t-\nabla f(S_t)D_t$ and the OSGM feedback
\[
    h_t(D)=\frac{f(S_t-\nabla f(S_t)D)-f(S_t)}
                 {\|\nabla f(S_t)\|_F^2}.
\]
If the updates are monotone, $f(S_{t+1})\le f(S_t)$, and the online
preconditioner has regret $\mathcal R_T$ against the ideal right-Newton
comparator $D_\star=\Sigma_k^\dagger$,
\[
    \sum_{t=1}^T \bigl(h_t(D_t)-h_t(D_\star)\bigr)\le \mathcal R_T,
\]
then
\[
    f(S_{T+1})-f^\star
    \le
    \bigl[f(S_1)-f^\star\bigr]
    \left(\frac{2L\,\mathcal R_T}{T}\right)^T .
\]
\end{theorem}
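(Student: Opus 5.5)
The plan follows the OSGM template of \citet{gao2024gradient}: convert per-step surrogate values into per-step contraction factors, then use regret and AM--GM to bound their product. I would proceed in three steps.

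\textbf{Step 1: exact one-step identity.} Write $G_t=\nabla f(S_t)=(S_t\Sigma_k - \mathbb E[v_tk_t^\top])$. Since $f$ is exactly quadratic with Hessian action $X\mapsto X\Sigma_k$, for any $D$ we have
\[
f(S_t - G_tD) - f(S_t) = -\langle G_t, G_tD\rangle + \tfrac12\langle G_tD\Sigma_k, G_tD\rangle .
\]
Hence $h_t(D)$ is a convex quadratic in $D$.

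\textbf{Step 2: evaluate the comparator.} For $D_\star=\Sigma_k^\dagger$, the rows of $G_t$ lie in $\mathrm{range}(\Sigma_k)$, because $\mathbb E[v_tk_t^\top]$ does; the same holds for $S_t\Sigma_k$. Therefore $G_t\Sigma_k^\dagger\Sigma_k=G_t$, and
\[
h_t(D_\star) = -\frac{\langle G_t, G_t\Sigma_k^\dagger\rangle}{2\|G_t\|_F^2} \le -\frac{1}{2L},
\]
since $\Sigma_k^\dagger\succeq L^{-1}$ on the range. The range membership is the edge case needing care: if $G_t$ had a component outside the range, the bound would fail, so I would verify it explicitly.

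\textbf{Step 3: convert the surrogate to a contraction ratio.} By the optimality condition in the range and smoothness,
\[
\|G_t\|_F^2 \ge \tfrac{2}{L}\,\cdots
\]
More precisely, use the PL-type identity for this quadratic: $f(S)-f^\star = \tfrac12\langle G, G\Sigma_k^\dagger\rangle \le \tfrac{L}{2}\cdot$ no. Instead I want $\|G_t\|_F^2 \le 2L\,(f(S_t)-f^\star)$, which is the smoothness inequality. Then
\[
f(S_{t+1})-f^\star = \bigl(f(S_t)-f^\star\bigr) + h_t(D_t)\|G_t\|_F^2 .
\]
Monotonicity gives $h_t(D_t)\le 0$, so the ratio satisfies
\[
r_t := \frac{f(S_{t+1})-f^\star}{f(S_t)-f^\star} \le 1 + 2L\,h_t(D_t).
\]
Using Step 2, $1 + 2Lh_t(D_\star)\le 0$, so $r_t \le 2L\bigl(h_t(D_t)-h_t(D_\star)\bigr)$. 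The combination of the negativity of $h_t(D_t)$ and $1+2Lh_t(D_\star)\le 0$ is what makes each factor bounded by the regret increment. I expect this step to be the main obstacle, specifically getting the constant right so that $1+2Lh_t(D_\star)\le0$ exactly.

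\textbf{Step 4: AM--GM.} Since each $r_t\ge 0$,
\[
\frac{f(S_{T+1})-f^\star}{f(S_1)-f^\star} = \prod_{t=1}^T r_t \le \Bigl(\frac1T\sum_{t=1}^T r_t\Bigr)^T \le \Bigl(\frac{2L\mathcal R_T}{T}\Bigr)^T .
\]
Degenerate steps with $G_t=0$ or $f(S_t)=f^\star$ make the claim trivial from that point on, by monotonicity.
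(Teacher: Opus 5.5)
Your Steps 1, 2 and 4 are sound and match the paper, but Step 3 rests on an inequality that is false. You write $r_t = 1 + h_t(D_t)\,\|G_t\|_F^2/(f(S_t)-f^\star)$ and then invoke the smoothness bound $\|G_t\|_F^2\le 2L\,(f(S_t)-f^\star)$. Because $h_t(D_t)\le 0$, an \emph{upper} bound on $\|G_t\|_F^2/(f(S_t)-f^\star)$ gives $r_t \ge 1+2Lh_t(D_t)$, which is the opposite direction.

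The claimed inequality $r_t\le 1+2Lh_t(D_t)$ genuinely fails. Take $D_t=D_\star$, so that $r_t=0$. Your own Step 2 gives $1+2Lh_t(D_\star)=1-L\langle G_t,G_t\Sigma_k^\dagger\rangle/\|G_t\|_F^2$, and this is strictly negative whenever $G_t$ has a component outside the top eigenspace of $\Sigma_k$. Swapping in a PL-type lower bound with $\mu=\lambda_{\min}^+(\Sigma_k)$ does not help either. It fixes the direction, giving $r_t\le 1+2\mu h_t(D_t)$, but then $1+2\mu h_t(D_\star)\ge 0$, so the second half breaks. The split into ``$r_t\le 1+2Lh_t(D_t)$'' plus ``$1+2Lh_t(D_\star)\le 0$'' therefore cannot be repaired by choosing a different constant.

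The missing idea is the one you wrote down and then discarded. For this quadratic, $f(S_t)-f^\star=\tfrac12\langle G_t,G_t\Sigma_k^\dagger\rangle$ holds exactly. Combined with your Step 2, this is $f(S_t)-f^\star=-h_t(D_\star)\,\|G_t\|_F^2$, which is the paper's Lemma~\ref{lemma:optimal_feedback}: the Newton step lands exactly on $f^\star$.

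So do not bound the ratio $\|G_t\|_F^2/(f(S_t)-f^\star)$ at all; substitute it. This gives the exact identity $r_t = 1-h_t(D_t)/h_t(D_\star)=\bigl(h_t(D_t)-h_t(D_\star)\bigr)/|h_t(D_\star)|$. The numerator is nonnegative because $r_t\ge 0$; equivalently, $D_\star$ minimises $h_t$, since no step can go below $f^\star$. Only now apply $1/|h_t(D_\star)|\le 2L$ from Step 2 to get $r_t\le 2L\bigl(h_t(D_t)-h_t(D_\star)\bigr)$.

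The factor $2L$ must multiply the nonnegative regret increment, not $h_t(D_t)$ and $h_t(D_\star)$ separately. With this replacement your Step 4 goes through verbatim, and the argument coincides with the paper's proof.
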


This theorem gives the clean OSGM motivation: low regret against
\(D_\star\) makes the average progression ratio shrink, and AM--GM turns
that into the \(T\)-th power. Its assumptions also mark the implementation
gap: the comparator is a full right preconditioner, and the feedback is
coupled to the population loss and the state trajectory.

\begin{theorem}[Algorithmic token-local residual contraction]
\label{thm:main_tokenlocal}
For the implemented diagonal update, let
$f_t(S)=\tfrac12\|S k_t-v_t\|_F^2$, assume $\|k_t\|_2=1$, write
$s_t=k_t^{\odot 2}$, and let the online learner satisfy diagonal regret
$R_T(d)$ on the token-local feedback
$h_t(d)=[f_t(S_t(d))-f_t(S_{t-1})]/\|\nabla f_t(S_{t-1})\|_F^2$.
Define
\[
    \varepsilon_T(d)=\frac{1}{2T}\sum_{t=1}^T
      \bigl(1-\beta_t\langle d,s_t\rangle\bigr)^2,\qquad
    \varepsilon_T^{\mathrm{diag}}=\min_{d\in\mathcal D}\varepsilon_T(d).
\]
Then, along the actual sequence of token-local writes,
\[
    \prod_{t=1}^T
    \frac{f_t(S_t)}{f_t(S_{t-1})}
    \le
    \left(2\varepsilon_T^{\mathrm{diag}}
          +\frac{2R_T}{T}\right)^T .
\]
If a feasible diagonal comparator satisfies the gated Newton condition
$\beta_t\langle d^\star,s_t\rangle=1$ for all $t$, then
$\varepsilon_T^{\mathrm{diag}}=0$; with standard projected-OGD regret
$R_T=O(\sqrt T)$, the contraction becomes
$\bigl(O(1)/\sqrt T\bigr)^T$.
\end{theorem}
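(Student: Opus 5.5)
The plan is to convert each token-local write into an exact per-token residual ratio, express that ratio through the surrogate $h_t(d_t)$ of Lemma~\ref{lem:hypergradient}, and finish with regret plus AM--GM. Throughout I write $S_t(d)=S_{t-1}+\beta_t u_t (d\odot k_t)^\top$, so the actual write is $S_t=S_t(d_t)$.

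\textbf{Step 1: exact per-token ratio.} Under $\|k_t\|_2=1$, the residual after the write is
\[
S_t(d)k_t-v_t=-u_t\bigl(1-\beta_t\langle d,s_t\rangle\bigr),
\]
because $(d\odot k_t)^\top k_t=\langle d,s_t\rangle$. Hence
\[
\frac{f_t(S_t(d))}{f_t(S_{t-1})}=\bigl(1-\beta_t\langle d,s_t\rangle\bigr)^2=:\rho_t(d).
\]
Since $\|\nabla f_t(S_{t-1})\|_F^2=\|u_t\|^2\|k_t\|^2=2f_t(S_{t-1})$, this gives
\[
h_t(d)=\tfrac12\bigl(\rho_t(d)-1\bigr),
\]
which agrees with Lemma~\ref{lem:hypergradient}. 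In particular $\rho_t(d)=1+2h_t(d)$ exactly; no smoothness slack is needed, since the loss is quadratic. The degenerate case $u_t=0$ makes the ratio undefined or zero, and I handle it by convention (either the product vanishes or the factor is taken to be $1$).

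\textbf{Step 2: AM--GM.} The ratios are nonnegative, so
\[
\prod_{t=1}^T\rho_t(d_t)\le\Bigl(\frac1T\sum_{t=1}^T\rho_t(d_t)\Bigr)^T.
\]
The identity from Step 1 gives
\[
\frac1T\sum_t\rho_t(d_t)=1+\frac2T\sum_t h_t(d_t).
\]

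\textbf{Step 3: regret.} The regret bound gives $\sum_t h_t(d_t)\le\sum_t h_t(d)+R_T$ for the comparator $d$. Substituting $h_t(d)=\tfrac12(\rho_t(d)-1)$, the constant terms cancel exactly:
\[
\frac1T\sum_t\rho_t(d_t)\le\frac1T\sum_t\rho_t(d)+\frac{2R_T}{T}=2\varepsilon_T(d)+\frac{2R_T}{T}.
\]
Taking $d$ to be the minimiser over $\mathcal D$ (it exists because $\mathcal D$ is compact and $\varepsilon_T$ is continuous) yields the main bound. There is no AM--GM loss beyond the $T$-th power, and no factor $L$ appears because $\|k_t\|=1$.

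\textbf{Step 4: corollary.} Under the gated Newton condition, every $\rho_t(d^\star)=0$, so $\varepsilon_T^{\mathrm{diag}}=0$. For projected OGD, I check that $h_t$ is convex with bounded gradients on $\mathcal D$. The gradient norm is at most $\beta_t|1-\beta_t\langle d,s_t\rangle|\,\|s_t\|\le\max(1,d_{\max}-1)$, using $\|s_t\|_2\le\|s_t\|_1=1$. The box has diameter $\sqrt K(d_{\max}-d_{\min})$. The standard OGD bound then gives $R_T=O(\sqrt T)$, and the bound becomes $(O(1)/\sqrt T)^T$.

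The main obstacle I anticipate is bookkeeping rather than depth. Specifically, "regret" must be interpreted against the fixed comparator $d\in\mathcal D$ that is being minimised over, and the convention for zero residuals must keep every ratio well defined. The exact identity $\rho_t=1+2h_t$ carries the entire argument.
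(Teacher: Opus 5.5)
Your proposal is correct and is essentially the paper's own proof. Both use the exact identity $q_t=(1-\beta_t\langle d_t,s_t\rangle)^2=1+2h_t(d_t)$, AM--GM on the nonnegative ratios, the regret inequality, and the closed-form sum $\sum_t h_t(d)=T\varepsilon_T(d)-T/2$. Your explicit gradient bound $\max(1,d_{\max}-1)$ just makes concrete the finite $G_h$ that the paper takes from its smoothness lemma.

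One small fix concerns tokens with $u_t=0$. Drop the ``factor $=1$'' option, which the paper also adopts, and set $q_t:=(1-\beta_t\langle d_t,s_t\rangle)^2$ there instead. This is consistent with $f_t(S_t)=q_t f_t(S_{t-1})$ and with the closed-form $h_t$ the algorithm actually uses. Setting $q_t=1$ breaks either $q_t=1+2h_t(d_t)$ or the closed-form sum at those tokens. For example, take the all-degenerate stream $K=V=1$, $k_t=1$, $v_t=0$, $S_0=0$, $\beta_t=\tfrac12$, $\mathcal D=[\tfrac12,2]$: the left-hand side is $1$, while the right-hand side can be $0$.
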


\section{Experiments}
\label{sec:exp}

We assess whether online preconditioning changes the behaviour of a
DeltaNet backbone in the regime it is designed for: associative
retrieval through fast-weight writes. The OSDN rows share the same
DeltaNet architecture and training budget, so they isolate the effect of
replacing DeltaNet's scalar write step with the online preconditioner
$d_t$, and then of adding APF. Matched-scale Gated DeltaNet (GDN) and
KDA rows are reported alongside as scope checks.
\begin{table}[H]
\centering\scriptsize
\setlength{\tabcolsep}{2.2pt}
\renewcommand{\arraystretch}{1.12}
\caption{\textbf{Main results: matched 340M runs and 1.3B / 100B
scaling.} LM PPL is the WikiText/LAMBADA geometric mean; PG-19 is the
20K-token length-extrapolation perplexity ($\downarrow$). Recall is
JRT-style \textit{contains} accuracy at 2K context: all averages the
single and repeated splits, single averages FDA/SWDE/SQuAD, and repeated
averages their \emph{-twice} variants ($\uparrow$). The 1.3B recall
columns average FDA and SWDE only because SQuAD did not return reliable
contains-accuracy at that scale. $\Delta_{\mathrm{rep}}$ is the absolute
repeated-recall difference to the matched baseline. Common.\ averages
eight zero-shot tasks; LongBench is the 14-task English average; tok/s
$\Delta$ is the single-H100 inference-throughput change relative to the
matched baseline at the same scale and backbone. Boldface marks the best value
within each dashed block.}
\label{tab:main_results}
\begin{adjustbox}{max width=\linewidth}
\begin{tabular}{@{}l|cc|cccc|cc|c@{}}
\toprule
\textbf{Model} &
\multicolumn{2}{c|}{\textbf{PPL} \(\downarrow\)} &
\multicolumn{4}{c|}{\textbf{JRT recall @2K} \(\uparrow\)} &
\shortstack[c]{\textbf{Common.}\\[-0.15ex]\tiny \(\uparrow\)} &
\shortstack[c]{\textbf{LongBench}\\[-0.15ex]\tiny \(\uparrow\)} &
\shortstack[c]{\textbf{tok/s}\\[-0.15ex]\tiny \(\Delta\)} \\
\cmidrule(lr){2-3}\cmidrule(lr){4-7}
 &
\shortstack[c]{\textbf{LM}\\[-0.15ex]\tiny GeoMean} &
\shortstack[c]{\textbf{PG-19}\\[-0.15ex]\tiny final} &
\shortstack[c]{\textbf{all}\\[-0.15ex]\tiny avg.} &
\shortstack[c]{\textbf{single}\\[-0.15ex]\tiny avg.} &
\shortstack[c]{\textbf{rep.}\\[-0.15ex]\tiny avg.} &
\shortstack[c]{\(\boldsymbol{\Delta_{\mathrm{rep}}}\)\\[-0.15ex]\tiny vs. base} &
 &
 &
 \\
\midrule
\multicolumn{10}{@{}l}{\textit{340M / 10B}} \\
DeltaNet
                         & 32.00          & 20.78          & 0.150          & 0.155          & 0.145          & --              & \textbf{0.457} & 0.072          & -- \\
\osdnrow
\name{}                   & 31.67          & 20.02          & \textbf{0.198} & \textbf{0.179} & \textbf{0.218} & \textbf{+0.073} & 0.456          & \textbf{0.087} & \(\mathbf{-0.2\%}\) \\
\apfrow
\apf{}                    & \textbf{30.99} & \textbf{19.85} & 0.176          & 0.152          & 0.199          & +0.054          & 0.456          & 0.073          & \(-2.2\%\) \\
\hdashline
GDN                       & 30.01          & 20.11          & 0.154          & 0.169          & 0.139          & --              & \textbf{0.463} & 0.073          & -- \\
\osdnrow
OSGDN                     & 29.78          & \textbf{19.70} & 0.182          & 0.169          & 0.195          & +0.056          & \textbf{0.463} & 0.073          & \(\mathbf{-2.0\%}\) \\
\apfrow
OSGDN-APF                 & \textbf{29.50} & 20.21          & \textbf{0.203} & \textbf{0.185} & \textbf{0.221} & \textbf{+0.082} & 0.458          & \textbf{0.080} & \(\mathbf{-2.0\%}\) \\
\hdashline
KDA                       & \textbf{26.75} & 18.73          & 0.168          & 0.187          & 0.150          & --              & 0.470          & 0.088          & -- \\
\osdnrow
OSKDA                     & 27.93          & \textbf{18.53} & 0.175          & \textbf{0.218} & 0.133          & -0.017          & 0.470          & 0.090          & \(\mathbf{+5.5\%}\) \\
\apfrow
OSKDA-APF                 & 28.02          & 19.00          & \textbf{0.185} & 0.191          & \textbf{0.179} & \textbf{+0.029} & \textbf{0.473} & \textbf{0.098} & \(+1.4\%\) \\
\midrule
\multicolumn{10}{@{}l}{\textit{1.3B / 100B}} \\
DeltaNet
                         & 14.28          & --             & 0.260          & 0.293          & \textbf{0.227} & --              & 0.560          & 0.115          & -- \\
\apfrow
\apf{}                    & \textbf{14.22} & --             & \textbf{0.266} & \textbf{0.315} & 0.217          & -0.010          & \textbf{0.566} & \textbf{0.116} & \(-6.8\%\) \\
\bottomrule
\end{tabular}
\end{adjustbox}
\end{table}

\paragraph{Setup.}
The 340M sweep uses a 24-layer, 1024-hidden, 8-head DeltaNet backbone
trained from scratch on FineWeb-Edu (10B tokens, 20{,}480 optimizer
steps), with matched optimizer, schedule, sequence packing, and hardware
(full configuration in Appendix~\ref{sec:app_reproducibility}). Vanilla
\name{} is parameter-free; \apf{} adds at most $0.3\%$ parameters. The
1.3B / 100B rows scale the same protocol to a matched DeltaNet vs.\
\apf{} pair on the same corpus
(Appendix~\ref{sec:app_scale_1p3b}). The primary diagnostic is in-context
recall, measured with the JRT-style cloze format
of~\citet{arora_just_2024} at 2K context over FDA, SWDE, SQuAD and
their \emph{-twice} variants (the \emph{-twice} variants repeat the
context before the query, exercising recurring key directions that give
$d_t$ multiple opportunities to calibrate). Commonsense
(PIQA, HellaSwag, WinoGrande, ARC-E/-C, SIQA, BoolQ, LAMBADA) and the
14-task LongBench English average serve as broader scope checks; PG-19
length extrapolation is in Appendix~\ref{sec:app_pg19_details}.

\subsection{Main results}
\label{subsec:exp_main}

Table~\ref{tab:main_results} is the main empirical summary. It combines
the matched 340M sweep with the 1.3B / 100B scale-up rows,
so the reader can compare the targeted retrieval effect against
language-modelling perplexity, long-context PG-19 perplexity, broader
short-context averages, LongBench, and inference throughput in one place.
We report a WikiText/LAMBADA perplexity geometric mean; JRT-style
in-context recall as overall, single-pass, and repeated-context averages;
the repeated-recall lift $\Delta_{\mathrm{rep}}$ relative to the matched
baseline; commonsense and LongBench averages; and the single-H100
tokens/sec change from the corresponding baseline. The theorem-facing residual-ratio
diagnostic $q_{\mathrm{geo}}$ is deliberately kept out of this summary
table because it is a DeltaNet replay measurement rather than a broad
benchmark axis; it is defined and reported in
Section~\ref{subsec:exp_mechanism}. Per-task breakdowns at both scales,
the visual summary, and the inference-throughput table are deferred to
Appendices~\ref{sec:app_general_checks},~\ref{sec:app_scale_1p3b},
and~\ref{sec:app_throughput}.

\paragraph{Reading the matched 340M sweep.}
The largest gain appears in the DeltaNet rows: vanilla \name{} raises
overall recall by 32\% over DeltaNet (0.150$\to$0.198,
with the gain concentrated on repeated context, +50\% relative), while
\apf{} retains 17\% (0.176) and gives the best DeltaNet-block
WikiText/LAMBADA GeoMean, while the refreshed no-APF screen improves
LongBench and also improves over DeltaNet on perplexity
(32.00$\to$31.67; \apf{} reaches 30.99). The same
pattern transfers to the GDN rows: OSGDN-APF lifts repeated recall
from 0.139 to 0.221 (+59\%) and improves WikiText/LAMBADA GeoMean and
LongBench. On the strongest broad baseline, KDA, OSKDA improves
single-pass recall and gives the strongest KDA-block PG-19 final
perplexity, while OSKDA-APF improves repeated recall from
0.150 to 0.179 and gives the strongest KDA-block commonsense and
LongBench averages.  KDA still has the best WikiText/LAMBADA
perplexity and retrieval LM-eval average, so we read \name{} as a
targeted retrieval-mechanism addition that applies across these baselines,
rather than a universal benchmark improver. Commonsense and LongBench
averages stay within a few hundredths of the matched baseline across all
three blocks; the refreshed no-APF OSDN screen further improves FW-Edu,
WikiText/LAMBADA, and the open-domain NQ/TriviaQA average
(Appendix~\ref{sec:app_general_checks}).

\subsection{Mechanism diagnostic}
\label{subsec:exp_mechanism}

The recall gains in the DeltaNet rows are accompanied by a direct
measurement of the quantity controlled by
Theorem~\ref{thm:main_tokenlocal}.
For each checkpoint we replay JRT-twice prompts, reconstruct the
write-side variables ($k_t$, $v_t$, $\beta_t$, and $d_t$ for online-scaled
models), run the fast-weight recurrence in fp32, and record
$q_t = f_t(S_t)/f_t(S_{t-1})$ token-by-token; $q_{\mathrm{geo}}$ averages
over 16 prompts per repeated-recall task (48 total), all 24 layers, and
all 8 heads -- $7.85\!\times\!10^6$ token-layer-head measurements.
Vanilla \name{} reduces $q_{\mathrm{geo}}$ from 0.537 to 0.433
(a 19\% reduction), \apf{} reaches 0.425 (21\%), and the reduction
holds task-wise on FDA-tw, SWDE-tw, and SQuAD-tw.
Figure~\ref{fig:mechanism_residual} resolves the contraction by
relative position: recurring associations in the second half of the
prompt contract more aggressively, exactly the regime the theory
targets.

\begin{figure}[t]
\centering
\includegraphics[width=0.8\linewidth]{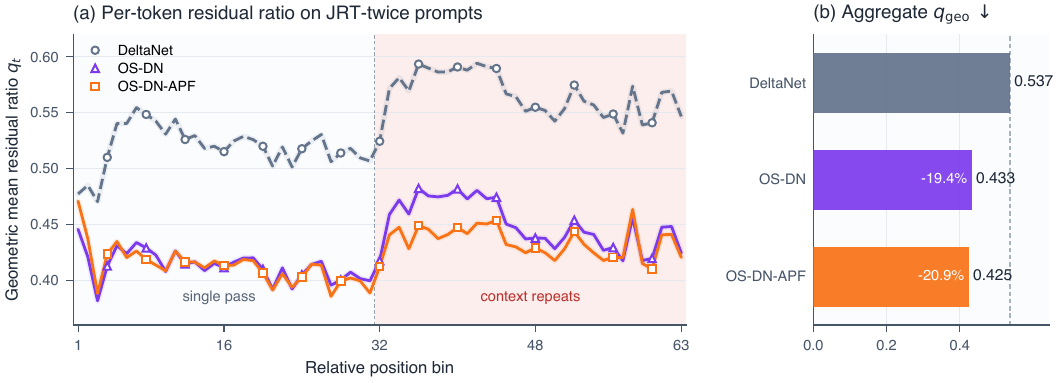}
\caption{\textbf{Direct theorem-facing residual contraction in the
DeltaNet rows.}
\textbf{(a)} Geometric mean of $q_t = f_t(S_t)/f_t(S_{t-1})$ by relative
position bin on JRT-twice prompts; the dashed boundary marks the
single-pass / repeated-context transition. \textbf{(b)} Overall
$q_{\mathrm{geo}}$ across $7.85\!\times\!10^6$ token-layer-head
measurements.}
\label{fig:mechanism_residual}
\vspace{-1em}
\end{figure}

\subsection{Scaling to 1.3B / 100B}
\label{subsec:exp_scale}

The matched DeltaNet vs.\ \apf{} pair at 1.3B parameters and 100B tokens
shows the mechanism scales cleanly: $q_{\mathrm{geo}}$ drops from 0.432
to \textbf{0.265}, a 39\% reduction that nearly doubles the 19--21\%
seen at 340M and is the lowest contraction recorded across either scale
(per-task numbers in Table~\ref{tab:mechanism_residual_ratio} of
Appendix~\ref{sec:app_general_checks}; SQuAD-tw alone drops from 0.473
to 0.102). Downstream averages stay at parity rather than regressing:
WikiText/LAMBADA GeoMean is essentially tied (14.22 vs.\ 14.28), with
LAMBADA improving from 11.84 to 10.98; single-pass FDA/SWDE recall
improves from 0.293 to 0.315; repeated recall is within noise; and the
commonsense and LongBench averages remain at parity with DeltaNet. The
mechanism-level signal therefore transfers and continues to amplify the
residual-ratio contraction at billion-parameter scale, with downstream
language-modelling and capability axes remaining at parity with the
matched DeltaNet baseline. Online preconditioning is also
lightweight at inference time: every OS-* variant lands within
$\pm 5.5\%$ of its matched baseline on tokens/sec at 340M, the 1.3B
\apf{} checkpoint runs 6.8\% slower than its DeltaNet baseline under the
same single-H100 generation benchmark, and the persistent recurrent state
grows by $\le 0.05\%$ from the OSGM diagonal (full table in
Appendix~\ref{sec:app_throughput}).

\section{Discussion}
\label{sec:discussion}

\name{} adds a learned diagonal preconditioner to DeltaNet's scalar
write step. Three properties of the exact-quadratic inner loss carry
the construction: a \emph{seamless integration} that preserves the
chunkwise WY pipeline under $K\mapsto\tilde K$
(Section~\ref{subsec:twophase}); a \emph{decoupling} that reduces
$\{d_t\}$ to an $\mathcal O(K)$-state affine recurrence; and the
collapse of the Hessian-Lipschitz residual that yields super-geometric
contraction guarantees (Theorems~\ref{thm:main_population}
and~\ref{thm:main_tokenlocal}). Among ``sequence layer as inner optimiser''
architectures, \name{} sits as a first-order, $\mathcal O(K)$-state
counterpart to MesaNet's exact $\arg\min$~\citep{vonoswald_mesanet_2025}
and is the in-recurrence analogue of the move from SGD to AdaGrad /
Adam~\citep{duchi_adaptive_2011,kingma_adam_2015}; APF is orthogonal
to memory-state decay, acting on the meta-optimiser state $d_t$ rather
than on $S_t$.

\paragraph{Limitations and future work.}
Six limitations bound the present results.  (i) Theorem~\ref{thm:main_population}
requires monotone iterates ($f(S_{t+1})\le f(S_t)$) and a regret bound against
the dense right-Newton comparator $D_\star=\Sigma_k^\dagger$; the $D_\star$
regret is not proved for the implemented diagonal update.  (ii)
Theorem~\ref{thm:main_tokenlocal} controls the geometric mean of token-local
residual ratios on the algorithm's own surrogate; lifting to a global
$f(S_T)-f^*$ statement requires either the no-conflict repeated-key regime
of Corollary~\ref{cor:repeated_keys} or the full-gradient population limit,
and the conditional-regret assumption sidesteps an explicit
step-size-specific regret derivation for Algorithm~\ref{alg:precond_online}'s
practical $\eta=0.003$ update.  (iii) Both bounds concern the inner regression
objective $f_t$ rather than next-token cross-entropy; APF, designed for
non-stationary contexts, would also need a dynamic-regret formulation.
(iv) The 1.3B / 100B sweep (Section~\ref{sec:exp},
Appendix~\ref{sec:app_scale_1p3b}) covers only \apf{} on DeltaNet, leaving
the cleanness of vanilla \name{} at billion-parameter scale, and the
composition with Gated DeltaNet / KDA at scale, open; SQuAD and SQuAD-twice
are excluded at this scale because the matched JRT \emph{contains}-accuracy
harness did not return reliable values, and PG-19 length-extrapolation
perplexity is not part of the 1.3B reporting protocol because the
20K-token sweep did not complete on a matched harness for both rows.
(v) The empirical headline is mechanism-level: the residual-ratio
contraction transfers and amplifies at billion-parameter scale, but the
downstream WikiText / LAMBADA, commonsense, and LongBench averages remain
at parity with DeltaNet rather than improving uniformly, and we therefore
do not claim a universal benchmark lift.  (vi) All matched 340M / 10B-token
runs share a fixed random seed, FineWeb-Edu shard ordering, and batch
schedule, so within-family deltas isolate the architectural change but do
not constitute seed-bootstrapped confidence intervals
(Appendix~\ref{sec:app_reproducibility}).  Two further directions follow:
scaling the residual-ratio diagnostic to broader prompts and longer
contexts, and lifting the diagonal preconditioner to a low-rank or
per-head block-diagonal form to close the gap to the full right-Newton
comparator $D_\star=\Sigma_k^\dagger$.

\bibliographystyle{plainnat}
\bibliography{main}

\clearpage
\appendix
\section*{Appendix Roadmap}
The appendix is organized to separate mechanism, implementation, theory,
evaluation protocol, additional evidence, and background.  Appendix~\ref{app:method_details}
collects derivations and proof details for the DeltaNet case.
Appendices~\ref{sec:extension}--\ref{app:chunk_impl} extend the same
write-key substitution to Gated DeltaNet and KDA and give the chunkwise
implementation pseudocode.  Appendix~\ref{sec:theory} states the quadratic
memory-regression guarantees.  Appendix~\ref{sec:app_benchmark_suite} defines
the evaluation suite, Appendix~\ref{sec:app_reproducibility} records the
training and evaluation protocol, and Appendices~\ref{sec:app_general_checks}--\ref{sec:app_scale_1p3b}
expand the 340M and 1.3B results.  Appendices~\ref{sec:app_throughput}--\ref{sec:app_related}
record inference-throughput measurements and the extended related-work taxonomy.

\section{Method Details and Proofs}
\label{app:method_details}

This appendix collects the chunkwise WY derivation, lemma and proposition
proofs, smoothness/monotone-descent corollaries, and the implementation
cost summary, all referenced from the main-text Section~\ref{sec:preconditioned_delta}.

\subsection{DeltaNet as online gradient descent (longer reading)}

DeltaNet's update is one step of stochastic gradient descent on the per-token
regression loss $f_t(S) = \tfrac12\|S k_t - v_t\|_F^2$, with
$\nabla f_t(S_{t-1}) = (S_{t-1}k_t - v_t)k_t^\top = -u_t k_t^\top$ and the
residual $u_t = v_t - S_{t-1}k_t \in \mathbb R^V$. The unit-step case
$\beta_t = 1$ with normalised keys $\|k_t\|_2^2 = 1$ exactly replaces the
value at $k_t$ ($A + (B-A) = B$); otherwise the readback shifts by
$\beta_t\|k_t\|_2^2(v_t - S_{t-1}k_t)$. Two equivalent readings -- Hebbian
with error correction, and online gradient descent on $f_t$ -- both lead
to the preconditioned Delta rule of the main text. Standard adaptive
optimisers (AdaGrad~\citep{duchi_adaptive_2011}, Adam~\citep{kingma_adam_2015})
likewise rescale gradient directions by a preconditioner; OSDN imports
this idea into the recurrent fast-weight write while retaining DeltaNet's
scalar gate $\beta_t$.

\subsection{Chunkwise WY derivation}
\label{app:chunkwise_wy}

We expand the chunk-level recurrence summarised in
Section~\ref{subsec:recurrent_chunk}. For a chunk of length $C$ with
intra-chunk index $i \in [1, C]$, stack the physical keys, preconditioned
keys, and values row-wise as $K_{[t]}, \osdnhi{\tilde K_{[t]}} \in \mathbb{R}^{C \times K}$
and $V_{[t]} \in \mathbb{R}^{C \times V}$, collect the scalar gates as
$\boldsymbol{\beta}_{[t]} \in \mathbb{R}^C$ with
$B_{[t]} = \mathrm{diag}(\boldsymbol{\beta}_{[t]})$, and write
$S_{[t]} \in \mathbb{R}^{V \times K}$ for the chunk-boundary state. Starting
from~\eqref{eq:recurrent_form}, define the lower-triangular UT-transform
matrix
\begin{equation}
    T_{[t]} \;=\; \bigl(I + \mathrm{tril}(B_{[t]} K_{[t]} \osdnhi{\tilde K_{[t]}^\top}, -1)\bigr)^{-1}
    \;\in\; \mathbb{R}^{C \times C},
\end{equation}
which can be obtained by forward substitution on the intra-chunk key--key
interactions. The cumulative-write matrices for keys and values are then
\begin{equation}
    W_{[t]} \;=\; T_{[t]} B_{[t]} K_{[t]} \;\in\; \mathbb{R}^{C \times K},
    \qquad
    U_{[t]} \;=\; T_{[t]} B_{[t]} V_{[t]} \;\in\; \mathbb{R}^{C \times V}.
\end{equation}
The chunk-internal cumulative transition admits the asymmetric WY form
\begin{equation}
    P_{[t]}
    \;=\; \prod_{i=1}^{C} \bigl(I - \beta_{[t]}^i k_{[t]}^i (\osdnhi{\tilde k_{[t]}^i})^\top\bigr)
    \;=\; I - W_{[t]}^\top \osdnhi{\tilde K_{[t]}} \;\in\; \mathbb{R}^{K\times K},
\end{equation}
and the cross-chunk state propagation reads
\begin{equation}
    S_{[t+1]} \;=\; S_{[t]}\bigl(I - W_{[t]}^\top \osdnhi{\tilde K_{[t]}}\bigr)
                + U_{[t]}^\top \osdnhi{\tilde K_{[t]}},
\end{equation}
or equivalently
$S_{[t+1]} = S_{[t]} + (U_{[t]} - W_{[t]} S_{[t]}^\top)^\top \osdnhi{\tilde K_{[t]}}$
after expanding $W_{[t]}^\top\osdnhi{\tilde K_{[t]}}$ and regrouping.
Compared with standard DeltaNet~\citep{yang_parallelizing_2024}, the
scalar gates occupy the same positions; the state-transition Gram changes
from $K_{[t]} K_{[t]}^\top$ to $K_{[t]}\osdnhi{\tilde K_{[t]}^\top}$, and the
intra-chunk output score uses the same write-side substitution
$Q_{[t]} K_{[t]}^\top \to Q_{[t]}\osdnhi{\tilde K_{[t]}^\top}$. The matrix
shapes and tensor-core GEMM layout are unchanged.

\subsection{Proof of Lemma~\ref{lem:hypergradient}}

By substituting the preconditioned update $S_t = S_{t-1} + \beta_t u_t (d_t \odot k_t)^\top$
into the residual equation we obtain
\begin{align*}
    S_t k_t - v_t &= (S_{t-1} + \beta_t u_t (d_t \odot k_t)^\top) k_t - v_t \\
    &= (S_{t-1} k_t - v_t) + \beta_t u_t \langle d_t \odot k_t, k_t \rangle \\
    &= -u_t (1 - \beta_t \langle d_t, k_t^2 \rangle).
\end{align*}
Hence $f_t(S_t) = \tfrac12\|u_t\|_2^2 (1 - \beta_t\langle d_t, k_t^2\rangle)^2$.
With $f_t(S_{t-1}) = \tfrac12\|u_t\|_2^2$ and
$\|\nabla f_t(S_{t-1})\|_F^2 = \|u_t\|_2^2 \|k_t\|_2^2$, substituting into
the definition $h_t(d_t) = [f_t(S_t)-f_t(S_{t-1})]/\|\nabla f_t(S_{t-1})\|_F^2$
yields the closed form. Differentiating once gives $\nabla_d h_t$;
differentiating again gives the rank-one Hessian
$\nabla^2 h_t(d) = \beta_t^2 k_t^2 (k_t^2)^\top / \|k_t\|_2^2$ used below.
\hfill$\square$

\subsection{Smoothness and monotone-descent corollaries}

\begin{proposition}[Smoothness of the hypergradient surrogate]
\label{prop:smoothness}
Under the normalisation $\|k_t\|_2^2 = 1$, the hypergradient feedback
$h_t(d)$ is $L_h$-smooth with
\begin{equation*}
    L_h \;=\; \|\nabla^2 h_t(d)\|_2 \;=\; \beta_t^2 \sum_{i=1}^K (k_t)_i^4 \;\le\; 1.
\end{equation*}
\end{proposition}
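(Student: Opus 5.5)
The plan is to read the smoothness constant directly off the Hessian computed at the end of the proof of Lemma~\ref{lem:hypergradient}. For fixed $t$, $h_t(d)=\bigl[(1-\beta_t\langle d,k_t^2\rangle)^2-1\bigr]/(2\|k_t\|_2^2)$ is a quadratic function of $d$. Its Hessian is therefore constant in $d$:
\[
\nabla^2 h_t(d)=\beta_t^2\,k_t^2(k_t^2)^\top/\|k_t\|_2^2 ,
\]
which under $\|k_t\|_2^2=1$ reduces to $\beta_t^2\,k_t^2(k_t^2)^\top$.

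A twice-differentiable function with constant symmetric positive semidefinite Hessian $H$ has gradient Lipschitz constant exactly $\|H\|_2$. This follows because $\nabla h_t(d)-\nabla h_t(d')=H(d-d')$, so the best Lipschitz constant is the operator norm, and the norm is attained by choosing $d-d'$ along the top eigenvector. That justifies the equality $L_h=\|\nabla^2 h_t(d)\|_2$.

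Next I compute the spectral norm of the rank-one matrix. For any vector $a$, the matrix $aa^\top$ is PSD with single nonzero eigenvalue $\|a\|_2^2$. Taking $a=k_t^2$ gives
\[
\|\nabla^2 h_t\|_2=\beta_t^2\|k_t^2\|_2^2=\beta_t^2\sum_i (k_t)_i^4 .
\]

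Finally, for the bound, I use $\sum_i (k_t)_i^4\le\bigl(\sum_i (k_t)_i^2\bigr)^2=\|k_t\|_2^4=1$. This holds since all cross terms $(k_t)_i^2(k_t)_j^2$ are nonnegative, or equivalently since $\|x\|_4\le\|x\|_2$. Combined with $\beta_t\in(0,1)$, this yields $L_h\le 1$, with strict inequality in fact.

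I expect no serious obstacle. The only point needing care is stating that "$L_h$-smooth" means Lipschitz gradient with the sharp constant, so that the equality with the Hessian norm, and not just an upper bound, is justified.
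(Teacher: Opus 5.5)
Your proposal is correct and follows essentially the same route as the paper. Both read off the constant rank-one PSD Hessian $\beta_t^2 k_t^2(k_t^2)^\top/\|k_t\|_2^2$ from Lemma~\ref{lem:hypergradient}, identify its sole nonzero eigenvalue $\beta_t^2\sum_i (k_t)_i^4$ under unit-norm keys, and bound $\sum_i (k_t)_i^4\le(\sum_i (k_t)_i^2)^2=1$. Your additional points — that a constant Hessian makes $\|\nabla^2 h_t\|_2$ the exact gradient-Lipschitz constant, and the nonnegative-cross-terms justification in place of the paper's ``power-mean'' phrasing — only make explicit what the paper leaves implicit.
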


\begin{proof}
The Hessian computed in the proof of Lemma~\ref{lem:hypergradient} is
rank-one PSD with sole non-zero eigenvalue
$\lambda_{\max} = \beta_t^2 \|k_t^2\|_2^2 / \|k_t\|_2^2$. Under
$\|k_t\|_2^2 = 1$, $\lambda_{\max} = \beta_t^2 \sum_i (k_t)_i^4$. Since
$\beta_t \in (0,1)$, the power-mean inequality gives
$\sum_i (k_t)_i^4 \le \bigl(\sum_i (k_t)_i^2\bigr)^2 = 1$.
\end{proof}

The reported runs use the practical online step size $\eta=0.003$ inside
$\mathcal D=[0.5,2.0]^K$, with reproduction details in
Appendix~\ref{sec:app_reproducibility}.

\begin{corollary}[Monotone descent under bounded box]
\label{cor:monotone_descent}
Assume $\|k_t\|_2^2 = 1$, $\beta_t \in (0,1)$, and $d_t \in \mathcal D = [d_{\min}, d_{\max}]^K$
with $0 < d_{\min} \le d_{\max} \le 2$. Then $0 < \beta_t \langle d_t, k_t^2\rangle < 2$ and
\[
    \frac{f_t(S_t)}{f_t(S_{t-1})} \;=\; \bigl(1 - \beta_t\langle d_t, k_t^2\rangle\bigr)^2 \;<\; 1.
\]
\end{corollary}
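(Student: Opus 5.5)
The argument rests on the closed form from the proof of Lemma~\ref{lem:hypergradient}. First I would recall from that proof that the write gives $S_t k_t - v_t = -u_t\,(1-\beta_t\langle d_t,k_t^2\rangle)$. Hence $f_t(S_t) = \tfrac12\|u_t\|_2^2\,(1-\beta_t\langle d_t,k_t^2\rangle)^2$, while $f_t(S_{t-1}) = \tfrac12\|u_t\|_2^2$. Whenever $u_t\neq 0$, the ratio $f_t(S_t)/f_t(S_{t-1})$ is therefore exactly $(1-\beta_t\langle d_t,k_t^2\rangle)^2$. This identity uses no assumption on $d_t$; the degenerate case $u_t=0$ (ratio $0/0$) I would set aside with a remark, as the paper's convention presumably does.

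The remaining task is to show that the scalar $x_t := \beta_t\langle d_t,k_t^2\rangle$ lies in the open interval $(0,2)$. Since $|1-x|<1$ is equivalent to $0<x<2$, this immediately gives $(1-x_t)^2<1$.

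\textbf{Lower bound.} Every coordinate of $k_t^2$ is nonnegative, and $\sum_i (k_t)_i^2 = \|k_t\|_2^2 = 1$. Each $d_{t,i}\ge d_{\min}>0$. So $\langle d_t,k_t^2\rangle \ge d_{\min}\|k_t\|_2^2 = d_{\min} > 0$. Multiplying by $\beta_t>0$ gives $x_t>0$.

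\textbf{Upper bound.} In the same way, $\langle d_t,k_t^2\rangle \le d_{\max}\|k_t\|_2^2 = d_{\max}\le 2$. Because $\beta_t<1$ strictly, $x_t < d_{\max} \le 2$.

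This step is the only delicate point. The box alone permits $\langle d_t,k_t^2\rangle = 2$, for instance when $d_{\max}=2$, and that would give a ratio of exactly $1$. Strictness must therefore come from $\beta_t<1$, which the sigmoid gate guarantees through $\beta_t\in(0,1)$. I would state this dependence explicitly.

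Combining the two bounds gives $0<x_t<2$, so $-1<1-x_t<1$ and the squared ratio is strictly less than $1$. I would also note that membership $d_t\in\mathcal D$ is guaranteed for the algorithm's iterates by the projection $\Pi_{\mathcal D}$ in \eqref{eq:dt_update}. In particular, the experimental box $[0.5,2.0]^K$ satisfies the hypotheses.
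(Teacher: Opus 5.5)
Your proposal is correct and follows essentially the same route as the paper's proof. The paper also uses $\|k_t^2\|_1 = \|k_t\|_2^2 = 1$ to place $\langle d_t, k_t^2\rangle$ in $[d_{\min}, d_{\max}] \subseteq (0,2]$, gets strictness from $\beta_t < 1$, and takes the ratio identity from the residual computation in Lemma~\ref{lem:hypergradient}. Your explicit handling of the $u_t = 0$ case is a small addition the paper leaves implicit.
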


\begin{proof}
$\|k_t^2\|_1 = \|k_t\|_2^2 = 1$ and $d_{t,j}\in[d_{\min},d_{\max}]$ give
$\langle d_t, k_t^2\rangle \in [d_{\min}, d_{\max}]\subseteq(0,2]$. With
$\beta_t \in (0,1)$ strict, $\beta_t\langle d_t, k_t^2\rangle \le 2\beta_t < 2$
and $\beta_t \langle d_t, k_t^2\rangle \ge \beta_t d_{\min} > 0$, so
$|1-\beta_t\langle d_t, k_t^2\rangle| < 1$. The ratio identity follows
from the residual computation in the proof of Lemma~\ref{lem:hypergradient}.
\end{proof}

\subsection{APF preserves the affine two-phase scan}

\begin{proposition}[APF is a piecewise-affine recurrence]
\label{prop:apf_affine}
For fixed $k_t$, $\beta_t$, and $r_{t,h}$, the unconstrained APF step is
affine in $d_t$,
\begin{equation*}
    \bar d_{t+1}
    \;=\; \Bigl(r_{t,h}I - \tfrac{\eta\beta_t^2}{\max(\|k_t\|_2^2,\epsilon)}\,k_t^2 (k_t^2)^\top\Bigr) d_t
    + \tfrac{\eta\beta_t}{\max(\|k_t\|_2^2,\epsilon)}\,k_t^2,
\end{equation*}
and the realised update is the coordinate-wise projection
$d_{t+1} = \Pi_{\mathcal D}(\bar d_{t+1})$. The vector-retention extension
replaces $r_{t,h}I$ by $\mathrm{diag}(\boldsymbol r_t)$. In both forms
$\boldsymbol r_t$ is independent of the high-dimensional state $S_t$,
value $v_t$, and residual $u_t$, so the two-phase scan of
Algorithm~\ref{alg:precond_online} is preserved.
\end{proposition}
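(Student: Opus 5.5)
The plan is to expand the hypergradient step of Lemma~\ref{lem:hypergradient} and collect terms; the claim is essentially algebraic. Write $n_t=\max(\|k_t\|_2^2,\epsilon)$ and $s_t=k_t^2$. By~\eqref{eq:dt_update_apf}, the unconstrained APF step is
\[
\bar d_{t+1}=r_{t,h}d_t+\frac{\eta\beta_t}{n_t}\bigl(1-\beta_t\langle d_t,s_t\rangle\bigr)s_t .
\]
Next, I rewrite the scalar-times-vector term using $\langle d_t,s_t\rangle s_t=s_t s_t^\top d_t$. This gives
\[
\bar d_{t+1}=\Bigl(r_{t,h}I-\frac{\eta\beta_t^2}{n_t}s_ts_t^\top\Bigr)d_t+\frac{\eta\beta_t}{n_t}s_t,
\]
which is the displayed form. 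Both the matrix and the offset depend only on $(k_t,\beta_t,r_{t,h},\eta,\epsilon)$, so the map $d_t\mapsto\bar d_{t+1}$ is affine. I will also note that $n_t\ge\epsilon>0$, so the update is well defined even when $k_t=0$; in that case it reduces to $\bar d_{t+1}=r_{t,h}d_t$.

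For the vector-retention extension, $r_{t,h}d_t$ is replaced by $\boldsymbol r_t\odot d_t=\mathrm{diag}(\boldsymbol r_t)d_t$. The same collection then gives the matrix $\mathrm{diag}(\boldsymbol r_t)-\tfrac{\eta\beta_t^2}{n_t}s_ts_t^\top$, which is still affine in $d_t$.

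For the projection, $\Pi_{\mathcal D}$ onto the box $[d_{\min},d_{\max}]^K$ acts coordinatewise as the clamp $x\mapsto\min(\max(x,d_{\min}),d_{\max})$. This clamp is piecewise affine, and composing it with an affine map keeps the update piecewise affine.

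The decoupling part follows by inspection. The gate is $r_{t,h}=\sigma(w_{r,h}^\top x_t+b_{r,h})$, and likewise $\boldsymbol r_t$, so it depends only on the layer input $x_t$. The gates $\beta_t$ and keys $k_t$ are also functions of $x_t$ alone. None of these quantities involves $S_t$, $v_t$, or $u_t$. By induction on $t$ from $d^{(0)}$, the whole trajectory $\{d_t\}$, and hence $\tilde k_t=d_t\odot k_t$, is determined by $\{x_t\}$ before any state-side computation. Phase~1 of Algorithm~\ref{alg:precond_online}, with its $d$-line modified to include $r_{t,h}$, therefore still emits every write key, and Phase~2 is unchanged.

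No step is a real obstacle. The only point needing care is the precise meaning of ``affine'' once the projection is included. I will state it as piecewise affine (affine on each region where a fixed set of coordinates is clamped), which is enough for the scan and for the reverse-mode sweep described in Section~\ref{subsec:twophase}.
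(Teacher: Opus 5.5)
Your proposal is correct and follows essentially the paper's route. The paper's proof is a direct expansion of the APF recurrence, which you carry out explicitly via $\langle d_t,s_t\rangle s_t=s_ts_t^\top d_t$, together with the observation that the coordinatewise clamp introduces no coupling to $S_t$, $v_t$, or $u_t$. Your extra remarks (the $\epsilon$ floor covering $k_t=0$, and the piecewise-affine reading of the projected step) are harmless refinements of that one-line argument.
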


\begin{proof}
Direct expansion of~\eqref{eq:dt_update_apf} from the main text. The
coordinate-wise clamp realising $\Pi_{\mathcal D}$ does not introduce
state coupling.
\end{proof}

\paragraph{Interpretation.}
The unprojected APF step can be read as a gradient step on the
linearisation of $h_t$ around $d_t$ with proximal centre shifted from
$d_t$ to $\boldsymbol r_t \odot d_t$. Equivalently,
$\bar d_{t+1} = \arg\min_D\{\langle\nabla h_t(d_t), D\rangle + \tfrac1{2\eta}\|D - \boldsymbol r_t \odot d_t\|^2\}$.
The implicit step on the unlinearised surrogate would solve a rank-one
Sherman--Morrison system at each token; the explicit affine-then-project
recurrence avoids that solve while preserving the proximal-centre
interpretation. The super-geometric guarantees in
Appendix~\ref{sec:theory} apply to the unforgotten OSDN update
($\boldsymbol r_t \equiv \mathbf 1$); APF is justified empirically by
preserving the projected affine scan and improving long-context
stability.

\subsection{Implementation cost summary}
\label{app:hw_cost}

Table~\ref{tab:hw_cost} expands the per-layer overhead claim from the
main text. The baseline kernel retains DeltaNet's matrix layout; the
practical cost concentrates in the phase-1 preconditioner stream.

\begin{table}[H]
\centering
\footnotesize
\setlength{\tabcolsep}{4pt}
\renewcommand{\arraystretch}{1.18}
\caption{\textbf{Per-layer implementation overhead relative to DeltaNet.}
The main pass keeps DeltaNet's matrix shapes; OSDN only materialises a
write-side scaled key sequence before that pass.}
\label{tab:hw_cost}
\begin{tabular}{@{}lccc@{}}
\toprule
\textbf{Cost item} & \textbf{DeltaNet} & \textbf{\name{}} & \textbf{\apf} \\
\midrule
\multicolumn{4}{@{}l}{\textit{Persistent state and parameters}} \\
Recurrent state & $V K$ & $V K + K$ & $V K + K$ \\
Parameters / layer & -- & $0$ & $H(d_\text{m}+1)$ \\
\addlinespace[2pt]
\multicolumn{4}{@{}l}{\textit{Phase-1 preconditioner stream}} \\
Work / token & -- & $\mathcal{O}(K)$ & $\mathcal{O}(K+d_\text{m})$/head \\
Sequential depth & -- & $L$ & $L$ \\
\addlinespace[2pt]
\multicolumn{4}{@{}l}{\textit{DeltaNet pass}} \\
Write key & $K^\top$ & $\tilde K^\top$ & $\tilde K^\top$ \\
GEMM layout & DeltaNet & unchanged & unchanged \\
\bottomrule
\end{tabular}
\end{table}

The phase-1 stream updates $d_t$ and materialises $\tilde k_t = d_t\odot k_t$
before phase~2; the write-key substitution affects only the write-side
key in the chunk Gram and the intra-chunk score. The full implementation
diff and the kernel-level pseudocode appear in
Appendix~\ref{app:chunk_impl}.

\section{Backbone Extensions: Gated DeltaNet and KDA}
\label{sec:extension}

This appendix extends online scaling beyond the DeltaNet backbone studied in the main experiments. We define a scaled key \(\tilde{\mathbf{k}}_t = \mathbf{d}_t \odot \mathbf{k}_t\), where \(\mathbf{d}_t\) is a diagonal preconditioner vector. As shown in Table~\ref{tab:model_recurrence}, DeltaNet and Gated DeltaNet use the \(V \times K\) state convention, so the write-side substitution appears as \(\mathbf{k}_t^\top \mapsto \tilde{\mathbf{k}}_t^\top\). KDA uses the transposed \(K \times V\) convention, so the same storage-side preconditioning appears on the left write key, \(\mathbf{k}_t \mapsto \tilde{\mathbf{k}}_t\), while the residual read key remains \(\mathbf{k}_t\). The recurrent equations and chunkwise parallel forms retain their original matrix shapes; only the Gram entries involving key--key interactions become asymmetric.

For all three backbones, the implementation treats the preconditioner as a lightweight phase-1 recurrent state. We write \(\rho_t^d\in(0,1]\) for an optional retention applied to \(d_t\) itself, distinct from the recurrent-state gate of the layer. The beta-aware phase-1 update used by the OSGDN and OSKDA kernels is
\begin{equation}
    d_{t+1}
    =
    \rho_t^d d_t
    + \eta \beta_t
      \frac{1-\beta_t\langle d_t,k_t^2\rangle}{\max(\|k_t\|_2^2,\epsilon)}
      k_t^2.
    \label{eq:extension_beta_aware_phase1}
\end{equation}
The DeltaNet derivation in Section~\ref{subsec:hypergradient} corresponds to \(\rho_t^d=1\), while APF uses a data-dependent \(\rho_t^d\). The 1.3B OSDN configuration uses a non-beta-aware phase-1 recurrence \(d_{t+1}=\rho_t^d d_t+\eta(1-\langle d_t,k_t^2\rangle)k_t^2\); the DeltaNet update still retains the scalar gate \(\beta_t\), and the chunkwise substitution below is unchanged.

\subsection{Gated DeltaNet with Online Scaling}

Starting from the Gated Delta recurrence in Table~\ref{tab:model_recurrence},

\[
\mathbf{S}_t = \mathbf{S}_{t-1}\bigl(\alpha_t(\mathbf{I}-\beta_t\mathbf{k}_t\mathbf{k}_t^\top)\bigr) + \beta_t\mathbf{v}_t\mathbf{k}_t^\top,
\]

we replace every occurrence of \(\mathbf{k}_t^\top\) in the write-side rank-1 factor by \(\tilde{\mathbf{k}}_t^\top\). The left factor \(\mathbf{k}_t\), which determines the residual direction, remains unchanged:

\[
\mathbf{S}_t = \mathbf{S}_{t-1}\bigl(\alpha_t(\mathbf{I}-\beta_t\mathbf{k}_t\tilde{\mathbf{k}}_t^\top)\bigr) + \beta_t\mathbf{v}_t\tilde{\mathbf{k}}_t^\top.
\]

The transition matrix remains \(\alpha_t\) times an identity-minus-rank-1 map, with the symmetric factor \(\mathbf{k}_t\mathbf{k}_t^\top\) replaced by the asymmetric factor \(\mathbf{k}_t\tilde{\mathbf{k}}_t^\top\). The scalar gates and value vector are unchanged.

The OSGDN implementation uses the \emph{post-gate regret} form of the phase-1 update. Define the decayed reference state
\[
    \bar{\mathbf S}_{t-1}=\alpha_t\mathbf S_{t-1},
    \qquad
    \mathbf e_t=\mathbf v_t-\bar{\mathbf S}_{t-1}\mathbf k_t .
\]
The write is then \(\mathbf S_t=\bar{\mathbf S}_{t-1}+\beta_t \mathbf e_t\tilde{\mathbf k}_t^\top\), so the residual against the same post-gate reference contracts as \((1-\beta_t\langle d_t,k_t^2\rangle)\mathbf e_t\). This yields exactly the beta-aware direction in Equation~\eqref{eq:extension_beta_aware_phase1}. Using the pre-gate residual would introduce an additive \((1-\alpha_t)\mathbf v_t\) term that is not controlled by \(d_t\); this is why the OSGDN implementation computes the hypergradient feedback after applying the GDN state gate.

For the chunkwise parallel algorithm, let $K_{[t]}, \tilde{K}_{[t]} \in \mathbb{R}^{C\times K}$ stack the intra-chunk keys and preconditioned keys, $V_{[t]} \in \mathbb{R}^{C\times V}$ the values, and $\beta_{[t]} \in \mathbb{R}^C$ the scalar step sizes. Define the cumulative gating prefix product $\gamma_i = \prod_{l=1}^i \alpha_{[t]}^l$, collect these scalars in $\boldsymbol{\gamma}_{[t]} \in \mathbb{R}^C$, and let $\gamma_C = \gamma_{[t]}^C$ be the chunk-level decay scalar~\citep{yang_gated_2024-1}. We use the standard Gated DeltaNet decay-ratio gauge, in which the chunk Gram keeps the factor $\gamma_i/\gamma_j$:
\begin{equation}
    M_{[t]} =
    \Bigl(I + \mathrm{strictLower}\Bigl(
    \mathrm{diag}(\beta_{[t]})\,
    (\boldsymbol{\gamma}_{[t]} \odot K_{[t]})
    (\tilde K_{[t]} / \boldsymbol{\gamma}_{[t]})^\top
    \Bigr)\Bigr)^{-1}\mathrm{diag}(\beta_{[t]})
    \in \mathbb{R}^{C \times C},
\end{equation}
where row-wise multiplication and division by $\boldsymbol{\gamma}_{[t]}$ give
\[
    (A_{[t]})_{ij}
    =
    \mathbf{1}_{j<i}\,
    \beta_i \frac{\gamma_i}{\gamma_j}
    \langle k_i,\tilde k_j\rangle .
\]
This generalises the symmetric decay-ratio Gram of the original Gated DeltaNet to the asymmetric Gram involving $K$ and $\tilde K$. The cumulative key/value matrices are
\begin{equation}
    W_{[t]} = M_{[t]}(\boldsymbol{\gamma}_{[t]} \odot K_{[t]}) \in \mathbb{R}^{C \times K}, \qquad
    U_{[t]} = M_{[t]} V_{[t]} \in \mathbb{R}^{C \times V}.
\end{equation}
Writing $\boldsymbol{\rho}_{[t]} \in \mathbb{R}^C$ for the row-wise suffix decay with $(\boldsymbol{\rho}_{[t]})_i = \gamma_C / \gamma_i$, the chunk state propagation is
\begin{equation}
    S_{[t+1]}
    = \gamma_C S_{[t]}
    + \bigl(U_{[t]} - W_{[t]} S_{[t]}^\top\bigr)^\top
    (\boldsymbol{\rho}_{[t]} \odot \tilde K_{[t]}).
\end{equation}
The matrix multiplications and chunk-level decay are the same as in Gated DeltaNet after replacing the storage/write-side Gram factor by $\tilde K$. The additional computation is the elementwise scaling $\tilde{k}_t = d_t \odot k_t$; the asymptotic complexity and tensor-core GEMM shapes are unchanged. An equivalent value-gauge can move the same decay ratios into $V_{[t]}$ and multiply back by $\gamma_C$ at the chunk boundary; the form above matches the standard Gated DeltaNet kernel notation.

\subsection{Kimi Delta Attention with Online Scaling}

KDA stores its state as $\mathbf{S}_t \in \mathbb{R}^{d_k \times d_v}$ and writes values through the rank-1 term $\beta_t \mathbf{k}_t \mathbf{v}_t^\top$~\citep{kimiteam2025kimilinearexpressiveefficient}. Its recurrence is
\[
\mathbf{S}_t = (\mathbf{I} - \beta_t \mathbf{k}_t \mathbf{k}_t^\top) \operatorname{Diag}(\boldsymbol{\alpha}_t) \mathbf{S}_{t-1} + \beta_t \mathbf{k}_t \mathbf{v}_t^\top \in \mathbb{R}^{d_k \times d_v},\qquad \mathbf{o}_t = \mathbf{S}_t^\top \mathbf{q}_t.
\]
Since KDA uses the transposed convention, the diagonal key preconditioner acts on the left storage side of the gradient. Thus Online Scaling replaces the storage/write key $\mathbf{k}_t$ by $\tilde{\mathbf{k}}_t = \mathbf{d}_t \odot \mathbf{k}_t$, while the residual read key remains $\mathbf{k}_t$:
\[
\mathbf{S}_t = (\mathbf{I} - \beta_t \tilde{\mathbf{k}}_t \mathbf{k}_t^\top) \operatorname{Diag}(\boldsymbol{\alpha}_t) \mathbf{S}_{t-1} + \beta_t \tilde{\mathbf{k}}_t \mathbf{v}_t^\top.
\]
Equivalently, with $\bar{\mathbf{S}}_{t-1}=\operatorname{Diag}(\boldsymbol{\alpha}_t)\mathbf{S}_{t-1}$,
\[
    \mathbf{u}_t = \mathbf{v}_t - \bar{\mathbf{S}}_{t-1}^{\top}\mathbf{k}_t,\qquad
    \mathbf{S}_t = \bar{\mathbf{S}}_{t-1} + \beta_t \tilde{\mathbf{k}}_t \mathbf{u}_t^\top .
\]
The transition matrix remains an identity-minus-rank-1 map followed by a fine-grained diagonal gate, with asymmetric rank-1 factor $\tilde{\mathbf{k}}_t \mathbf{k}_t^\top$. The channel gate, scalar gate, value vector, and residual read key are unchanged.

The OSKDA phase-1 state follows Equation~\eqref{eq:extension_beta_aware_phase1} with KDA's key-normalized \(k_t\). In the no-DD matched comparison, \(\rho_t^d=1\) and \(d_t\) is clamped to the matched feasible box. The OSKDA-APF variant uses a data-dependent retention \(\rho_t^d\) for the preconditioner state, with the same matched online step, beta-aware update, and clamp; specific values follow Appendix~\ref{sec:app_reproducibility}. This retention acts only on \(d_t\). The KDA state gate \(\operatorname{Diag}(\boldsymbol{\alpha}_t)\) still gates the high-dimensional \(K\times V\) state, and the residual read key remains the unscaled \(k_t\).

For hardware-efficient training, the storage-side substitution is reflected in KDA's chunkwise UT-transform. Within a chunk of size $C$, let $\mathbf{K}_{[t]}, \tilde{\mathbf{K}}_{[t]} \in \mathbb{R}^{C \times d_k}$ stack the keys and preconditioned keys, $\mathbf{V}_{[t]} \in \mathbb{R}^{C \times d_v}$ the values, $\beta_{[t]} \in \mathbb{R}^C$ the scalar step sizes, and let $\boldsymbol{\Gamma}_{[t]} \in \mathbb{R}^{C \times d_k}$ collect the cumulative channel-wise decay factors with $(\boldsymbol{\Gamma}_{[t]})_{i,j} = \prod_{l=1}^{i} (\boldsymbol{\alpha}_{[t]}^l)_j$. Let $\boldsymbol{\gamma}_{[t]}^C \in \mathbb{R}^{d_k}$ be the chunk-level fine-grained decay vector and let $\boldsymbol{\Gamma}_{[t]}^{i \to C}\in\mathbb{R}^{C\times d_k}$ collect the suffix factors $(\boldsymbol{\Gamma}_{[t]}^{i\to C})_{i,j}=(\boldsymbol{\gamma}_{[t]}^C)_j/(\boldsymbol{\Gamma}_{[t]})_{i,j}$. The residual/read side uses the original keys in $W_{[t]}$, while the storage/write side uses $\tilde{\mathbf{K}}_{[t]}$:
\[
\mathbf{M}_{[t]} = \!\Bigl( \mathbf{I} + \operatorname{StrictTril}\Bigl( \operatorname{Diag}(\beta_{[t]}) \, \bigl(\boldsymbol{\Gamma}_{[t]} \odot \mathbf{K}_{[t]}\bigr) \bigl( \tilde{\mathbf{K}}_{[t]}\,/\,\boldsymbol{\Gamma}_{[t]}\bigr)^\top \Bigr)\Bigr)^{\!-1} \operatorname{Diag}(\beta_{[t]}) \in \mathbb{R}^{C \times C}.
\]
The cumulative key/value matrices and chunk state propagation are then
\[
\mathbf{W}_{[t]} = \mathbf{M}_{[t]} \bigl( \boldsymbol{\Gamma}_{[t]} \odot \mathbf{K}_{[t]} \bigr) \in \mathbb{R}^{C \times d_k}, \qquad \mathbf{U}_{[t]} = \mathbf{M}_{[t]} \mathbf{V}_{[t]} \in \mathbb{R}^{C \times d_v},
\]
\[
\mathbf{S}_{[t+1]} = \operatorname{Diag}(\boldsymbol{\gamma}_{[t]}^C) \mathbf{S}_{[t]} + \bigl( \boldsymbol{\Gamma}_{[t]}^{i \to C} \odot \tilde{\mathbf{K}}_{[t]} \bigr)^\top \bigl( \mathbf{U}_{[t]} - \mathbf{W}_{[t]} \mathbf{S}_{[t]} \bigr) \in \mathbb{R}^{d_k \times d_v}.
\]
The intra-chunk output formula uses the same storage-side replacement:
\[
\mathbf{O}_{[t]}
=
\bigl(\boldsymbol{\Gamma}_{[t]} \odot \mathbf{Q}_{[t]}\bigr)\mathbf{S}_{[t]}
+
\operatorname{Tril}\Bigl(
\bigl(\boldsymbol{\Gamma}_{[t]} \odot \mathbf{Q}_{[t]}\bigr)
\bigl(\tilde{\mathbf{K}}_{[t]} / \boldsymbol{\Gamma}_{[t]}\bigr)^\top
\Bigr)
\bigl(\mathbf{U}_{[t]}-\mathbf{W}_{[t]}\mathbf{S}_{[t]}\bigr).
\]
The additional work is the elementwise scaling $\tilde{\mathbf{k}}_t = \mathbf{d}_t \odot \mathbf{k}_t$; linear complexity, tensor-core GEMM shapes, and fine-grained channel gating are preserved.

\begin{table}[htbp]
\centering
\footnotesize
\setlength{\tabcolsep}{4pt}
\renewcommand{\arraystretch}{1.18}
\caption{Recurrence and read-out across linear recurrent models. Rows highlighted in gray are our online-scaled variants of DeltaNet, Gated DeltaNet, and KDA. Unless otherwise indicated, $S_t \in \mathbb{R}^{V \times K}$ and reads use $o_t = S_t q_t$. KDA and OSKDA follow KDA's native convention $S_t \in \mathbb{R}^{K \times V}$ with read-out $o_t = S_t^\top q_t$; under this convention Online Scaling modifies the left storage/write key.}
\label{tab:model_recurrence}
\begin{tabularx}{\linewidth}{@{}
  >{\hsize=0.55\hsize\raggedright\arraybackslash}X
  >{\hsize=1.55\hsize\raggedright\arraybackslash}X
  >{\hsize=0.90\hsize\raggedright\arraybackslash}X
  @{}}
\toprule
\textbf{Model} & \textbf{Recurrence (state update)} & \textbf{Read-out (output)} \\
\midrule
Linear~Attention~\citep{kasai_finetuning_2021,katharopoulos_transformers_2020}
  & $S_t = S_{t-1} + v_t k_t^\top$
  & $o_t = S_t q_t$ \\
\quad +\,Kernel
  & $S_t = S_{t-1} + v_t \phi(k_t)^\top$
  & $o_t = S_t \phi(q_t)$ \\
\quad +\,Normalised
  & $S_t = S_{t-1} + v_t \phi(k_t)^\top,\ z_t = z_{t-1} + \phi(k_t)$
  & $o_t = S_t \phi(q_t) / (z_t^\top \phi(q_t))$ \\
Gated~RFA~\citep{peng_random_2020}
  & $S_t = g_t S_{t-1} + (1-g_t) v_t k_t^\top,\ z_t = g_t z_{t-1} + (1-g_t) k_t$
  & $o_t = S_t q_t / (z_t^\top q_t)$ \\
S4~\citep{gu_efficiently_2021,smith_simplified_2022}
  & $S_t = S_{t-1} \odot \exp\!\bigl(-(\alpha \mathbf{1}^\top) \odot \exp(A)\bigr) + B \odot (v_t \mathbf{1}^\top)$
  & $o_t = (S_t \odot C)\mathbf{1} + d \odot v_t$ \\
ABC~\citep{peng_abc_2022}
  & $S_t^k = S_{t-1}^k + k_t \phi_t^\top,\ S_t^v = S_{t-1}^v + v_t \phi_t^\top$
  & $o_t = S_t^v \operatorname{softmax}(S_t^k q_t)$ \\
DFW~\citep{mao_fine-tuning_2022}
  & $S_t = S_{t-1} \odot (\beta_t \alpha_t^\top) + v_t k_t^\top$
  & $o_t = S_t q_t$ \\
RetNet~\citep{Sun2023RetentiveNA}
  & $S_t = \gamma S_{t-1} + v_t k_t^\top$
  & $o_t = S_t q_t$ \\
Mamba~\citep{gu_mamba_2024}
  & $S_t = S_{t-1} \odot \exp\!\bigl(-(\alpha_t \mathbf{1}^\top) \odot \exp(A)\bigr) + (\alpha_t \odot v_t) k_t^\top$
  & $o_t = S_t q_t + d \odot v_t$ \\
GLA~\citep{yang_gated_2024}
  & $S_t = S_{t-1} \operatorname{Diag}(\alpha_t) + v_t k_t^\top$
  & $o_t = S_t q_t$ \\
RWKV-6~\citep{peng_eagle_2024}
  & $S_t = S_{t-1} \operatorname{Diag}(\alpha_t) + v_t k_t^\top$
  & $o_t = \bigl(S_{t-1} + (d \odot v_t) k_t^\top\bigr) q_t$ \\
HGRN-2~\citep{qin_hgrn2_2024}
  & $S_t = S_{t-1} \operatorname{Diag}(\alpha_t) + v_t (1-\alpha_t)^\top$
  & $o_t = S_t q_t$ \\
mLSTM~\citep{beck_xlstm_2024}
  & $S_t = f_t S_{t-1} + i_t v_t k_t^\top,\ z_t = f_t z_{t-1} + i_t k_t$
  & $o_t = S_t q_t / \max\{1,|z_t^\top q_t|\}$ \\
Mamba-2~\citep{dao_transformers_2024}
  & $S_t = \gamma_t S_{t-1} + v_t k_t^\top$
  & $o_t = S_t q_t$ \\
GSA~\citep{zhang_gated_2024}
  & $S_t^k = S_{t-1}^k \operatorname{Diag}(\alpha_t) + k_t \phi_t^\top,\ S_t^v = S_{t-1}^v \operatorname{Diag}(\alpha_t) + v_t \phi_t^\top$
  & $o_t = S_t^v \operatorname{softmax}(S_t^k q_t)$ \\
\midrule
DeltaNet~\citep{schlag_linear_2021}
  & $S_t = S_{t-1}(I - \beta_t k_t k_t^\top) + \beta_t v_t k_t^\top$
  & $o_t = S_t q_t$ \\
\osdnrow
\quad +\,Online Scaling
  & $S_t = S_{t-1}(I - \beta_t k_t \tilde{k}_t^\top) + \beta_t v_t \tilde{k}_t^\top$
  & $o_t = S_t q_t$ \\
Gated~DeltaNet~\citep{yang_gated_2024-1}
  & $S_t = S_{t-1}\bigl(\alpha_t (I - \beta_t k_t k_t^\top)\bigr) + \beta_t v_t k_t^\top$
  & $o_t = S_t q_t$ \\
\osdnrow
\quad +\,Online Scaling
  & $S_t = S_{t-1}\bigl(\alpha_t (I - \beta_t k_t \tilde{k}_t^\top)\bigr) + \beta_t v_t \tilde{k}_t^\top$
  & $o_t = S_t q_t$ \\
KDA~\citep{kimiteam2025kimilinearexpressiveefficient}
  & $S_t = (I - \beta_t k_t k_t^\top) \operatorname{Diag}(\alpha_t) S_{t-1} + \beta_t k_t v_t^\top$
  & $o_t = S_t^\top q_t$ \\
\osdnrow
\quad +\,Online Scaling
  & $S_t = (I - \beta_t \tilde{k}_t k_t^\top) \operatorname{Diag}(\alpha_t) S_{t-1} + \beta_t \tilde{k}_t v_t^\top$
  & $o_t = S_t^\top q_t$ \\
\bottomrule
\end{tabularx}
\end{table}

\section{Chunkwise Implementation Pseudocode}
\label{app:chunk_impl}

The implementations use a common two-phase schedule. Phase~1 materializes
the preconditioner trajectory \(d_t\) and the write-side key
\(\tilde k_t = d_t \odot k_t\). Phase~2 calls the baseline chunk rule with the
same state layout, gates, and value tensors as the baseline layer,
replacing only the storage/write key in the chunk Gram, state update, and
local output score. Listing~\ref{lst:chunk_osdn} factors out the phase-1
recurrence shared by all variants. Inputs are ordered as
\([B,T,H,\cdot]\) before chunking and \([B,H,N,C,\cdot]\) inside each
chunk. The code is written for the \(V\times K\) DeltaNet convention; KDA
uses the transposed \(K\times V\) convention described below. Each variant
is presented next to its baseline kernel:
Listing~\ref{lst:chunk_osdn} doubles as the OSDN reference,
while Listings~\ref{lst:chunk_osgdn} and~\ref{lst:chunk_oskda} give the
OSGDN and OSKDA baseline updates once \(\tilde K\) has been materialized.

\providecolor{OSDNDiffNew}{RGB}{200,30,70}
\providecolor{OSDNDiffChg}{RGB}{30,80,200}
\providecommand{\diffNew}[1]{\textcolor{OSDNDiffNew}{#1}}
\providecommand{\diffChg}[1]{\textcolor{OSDNDiffChg}{#1}}
\providecommand{\tagDiffNew}{\diffNew{\textsc{New}}}
\providecommand{\tagDiffChg}{\diffChg{\textsc{Chg}}}

\paragraph{Recurrent diff against DeltaNet.}
Algorithm~\ref{alg:hw_compare} expands the main-text substitution into an explicit two-phase recurrent diff.
The phase-1 lines materialize the write-side key sequence and update the lightweight preconditioner state; the phase-2 DeltaNet pass is DeltaNet with the storage/write key changed from \(k_t\) to \(\tilde k_t\).

\begin{algorithm}[H]
\caption{\textbf{Line-by-line recurrent diff between DeltaNet and \name{}.}
Lines marked \tagDiffNew{} belong to the phase-1 preconditioner sweep; the line marked \tagDiffChg{} is the single structural write-key substitution.}
\label{alg:hw_compare}
\begin{algorithmic}[1]
\vspace{1pt}
\Statex \textit{(1) DeltaNet~\citep{yang_parallelizing_2024} baseline.}
\For{$t = 1$ to $L$}
    \State $u_t \gets v_t - S\, k_t$ \Comment{residual / read-then-write}
    \State $S \gets S + \beta_t\, u_t\, k_t^\top$ \Comment{rank-one write}
\EndFor
\vspace{4pt}
\Statex \diffNew{\textit{(2a) \name{} phase 1: materialize write keys.}}
\State $d \gets d^{(0)} \in \mathcal{D}$ \Comment{preconditioner state, $\mathcal{D}=[d_{\min},d_{\max}]^K$}
\For{$t = 1$ to $L$}
    \State \diffNew{$\tilde k_t \gets d \odot k_t$} \hfill {\color{OSDNDiffNew}\(\triangleright\)} \tagDiffNew{}: write key
    \State $n_t \gets \max(\|k_t\|_2^2,\epsilon)$ \Comment{squared key-norm normalizer}
    \State $\mathrm{step}_t \gets \eta\beta_t\,\dfrac{1 - \beta_t\langle d,\, k_t^2 \rangle}{n_t}$ \Comment{scalar hypergradient coefficient}
    \State \diffNew{$d \gets \mathrm{clip}(d + \mathrm{step}_t\, k_t^2,\,d_{\min},\,d_{\max})$} \hfill {\color{OSDNDiffNew}\(\triangleright\)} \tagDiffNew{}: projected hypergradient
\EndFor
\vspace{4pt}
\Statex \diffChg{\textit{(2b) \name{} phase 2: same DeltaNet pass, write-side substitution.}}
\For{$t = 1$ to $L$}
    \State $u_t \gets v_t - S\, k_t$ \Comment{\emph{unchanged} read}
    \State \diffChg{$S \gets S + \beta_t\, u_t\, \tilde k_t^\top$} \hfill {\color{OSDNDiffChg}\(\triangleright\)} \tagDiffChg{}: storage/write key
\EndFor
\end{algorithmic}
\end{algorithm}

\paragraph{OSDN implementation.}
The DeltaNet path is exactly the substitution shown in
Listing~\ref{lst:chunk_osdn}: the read key in the residual remains \(K\),
while the write-side key becomes \(\tilde K\). The OSDN kernels compute
\(d_t\) before phase~2, store \(\tilde K\), and then reuse the
standard chunked DeltaNet WY structure with \(K\tilde K^\top\) in the UT
Gram and \(Q\tilde K^\top\) in the local output score. After each
hypergradient step the preconditioner is projected onto
\([0.5,2.0]^K\); the 340M matched runs reported in
Section~\ref{sec:exp} pass \texttt{eta=0.003}, \texttt{d\_min=0.5}, and
\texttt{d\_max=2.0} as arguments in the listing and apply the projection
via the per-step \texttt{clamp}. The 1.3B OSDN
configuration uses a learned initial scale that is zero-initialized,
data-dependent preconditioner retention
\(\rho_t^d=\sigma(a_{\mathrm{osgm}}(x_t))\) initialized near \(0.999\),
and the non-beta-aware phase-1 form; OSGDN and OSKDA use the beta-aware
recurrence in Listing~\ref{lst:chunk_osdn}.

\begin{nolinenumbers}
\begin{lstlisting}[
style=pytorch,
language=Python,
caption={Chunked phase-1 preconditioner sweep paired with the DeltaNet write-key substitution; reused verbatim by \name{}.},
label={lst:chunk_osdn}
]
def chunk_osdn(q, k, v, beta, precond_retention=None, initial_state=None,
               initial_d=None, chunk_size=64, eta=0.003,
               beta_aware=True, d_min=0.5, d_max=2.0):
    dtype = v.dtype
    B, T, H, K, V, C = *q.shape, v.shape[-1], chunk_size
    N = T // C

    q, k, v, beta = map(
        lambda x: rearrange(x, 'b (n c) h ... -> b h n c ...', c=C).float(),
        [q, k, v, beta]
    )
    q = q * K ** -0.5
    epsilon = 1e-6
    eye = torch.eye(C, device=q.device)
    upper_including_diag = torch.triu(
        torch.ones(C, C, device=q.device, dtype=torch.bool), 0
    )
    strict_upper = torch.triu(
        torch.ones(C, C, device=q.device, dtype=torch.bool), 1
    )

    # Phase 1: build the preconditioner trajectory d_t and write_key_t.
    d = q.new_ones(B, H, K) if initial_d is None else initial_d.float()
    write_key = torch.empty_like(k)
    for n in range(N):
        for i in range(C):
            k_i, beta_i = k[:, :, n, i], beta[:, :, n, i]
            write_key[:, :, n, i] = d * k_i        # store d_t * k_t
            k2 = k_i * k_i
            retention_i = (
                torch.ones_like(d)
                if precond_retention is None
                else precond_retention[:, n*C+i]
            )
            if retention_i.ndim == 2:
                retention_i = retention_i[..., None]
            key_norm_sq = k2.sum(-1).clamp_min(epsilon)

            # beta-aware kernels include beta_i in the phase-1 feedback;
            # the non-beta-aware OSDN configuration sets beta_phase1 = 1.
            beta_phase1 = beta_i if beta_aware else torch.ones_like(beta_i)
            precond_alignment = (d * k2).sum(-1)
            precond_step = (
                eta * beta_phase1 * (1.0 - beta_phase1 * precond_alignment)
                / key_norm_sq
            )
            d = retention_i * d + precond_step[..., None] * k2
            if d_min is not None and d_max is not None:
                d = d.clamp(d_min, d_max)        # box projection to D

    # Phase 2: chunked DeltaNet pass with the storage key replaced by write_key.
    S = k.new_zeros(B, H, V, K)
    if initial_state is not None:
        S += initial_state.float()
    output = torch.zeros(B, H, N, C, V, device=v.device)

    for n in range(N):
        q_i, k_i = q[:, :, n], k[:, :, n]
        write_key_i = write_key[:, :, n]
        v_i, beta_i = v[:, :, n], beta[:, :, n]

        strict_lower_gram = torch.einsum(
            '... i k, ... j k -> ... i j', k_i, write_key_i
        )
        strict_lower_gram = (
            strict_lower_gram * beta_i[..., :, None]
        ).masked_fill(upper_including_diag, 0)
        ut_inverse = torch.linalg.solve_triangular(
            eye + strict_lower_gram, eye, upper=False
        )

        cumulative_key = ut_inverse @ (beta_i[..., None] * k_i)
        cumulative_value = ut_inverse @ (beta_i[..., None] * v_i)
        chunk_update = cumulative_value - cumulative_key @ S.transpose(-1, -2)

        local_score = torch.einsum(
            '... i k, ... j k -> ... i j', q_i, write_key_i
        )
        local_score = local_score.masked_fill(strict_upper, 0)
        output[:, :, n] = (
            q_i @ S.transpose(-1, -2) + local_score @ chunk_update
        )
        S = S + chunk_update.transpose(-1, -2) @ write_key_i

    return rearrange(output, 'b h n c v -> b (n c) h v').to(dtype), S, d
\end{lstlisting}
\end{nolinenumbers}

\paragraph{OSGDN implementation.}
OSGDN first computes the standard Gated DeltaNet log-decay
\(g_t=\log\alpha_t\), then computes \(\tilde K\) in phase~1, and finally
calls the GDN chunk kernels with the d-aware write factor, as shown in
Listing~\ref{lst:chunk_osgdn}. In chunk notation, with
\(\gamma_i=\prod_{\ell\le i}\alpha_\ell\), the d-aware UT entries are
\[
    A_{ij}
    = \mathbf{1}_{j<i}\,
    \beta_i\,\frac{\gamma_i}{\gamma_j}\,
    \langle k_i,\tilde k_j\rangle .
\]
OSGDN uses the post-gate-regret phase-1 recurrence: the residual for the
hypergradient is computed after applying the GDN state gate, so the fixed
point is \(\beta_i\langle d_i,k_i^2\rangle = 1\). Its no-decay kernel
evaluates this beta-aware recurrence with a chunk/WY scan over \(d\); the
data-dependent-retention kernel uses the same recurrence with
\(\rho_i^d=\exp(g_i^d)\). After \(\tilde K\) is materialized, the state
and output kernels are the usual Gated DeltaNet kernels with
\(K\tilde K^\top\) and \(Q\tilde K^\top\) replacing the symmetric key
products.

\begin{nolinenumbers}
\begin{lstlisting}[
style=pytorch,
language=Python,
caption={Chunk update for OSGDN, applied after the phase-1 preconditioner sweep of Listing~\ref{lst:chunk_osdn}.},
label={lst:chunk_osgdn}
]
def chunk_osgdn_update(q, k, write_key, v, beta, alpha, S):
    # one chunk: q, k, write_key: [B, H, C, K], v: [B, H, C, V],
    #            beta, alpha: [B, H, C],   S: [B, H, V, K]
    C = q.shape[-2]
    eye = torch.eye(C, device=q.device)
    upper_including_diag = torch.triu(
        torch.ones(C, C, device=q.device, dtype=torch.bool), 0
    )
    strict_upper = torch.triu(
        torch.ones(C, C, device=q.device, dtype=torch.bool), 1
    )

    # Cumulative gate and gate-aware query / key / write-key factors.
    gamma = alpha.cumprod(dim=-1)
    gamma_C = gamma[..., -1]
    gated_key = gamma[..., :, None] * k
    gated_query = gamma[..., :, None] * q
    normalized_write_key = write_key / gamma[..., :, None].clamp_min(1e-6)

    # Strictly-lower UT system for the WY representation.
    strict_lower_gram = torch.einsum(
        '... i k, ... j k -> ... i j', gated_key, normalized_write_key
    )
    strict_lower_gram = (
        strict_lower_gram * beta[..., :, None]
    ).masked_fill(upper_including_diag, 0)
    ut_weights = torch.linalg.solve_triangular(
        eye + strict_lower_gram, beta[..., None] * eye, upper=False
    )

    cumulative_key = ut_weights @ gated_key
    cumulative_value = ut_weights @ v
    chunk_update = cumulative_value - cumulative_key @ S.transpose(-1, -2)

    local_score = torch.einsum(
        '... i k, ... j k -> ... i j', gated_query, normalized_write_key
    )
    local_score = local_score.masked_fill(strict_upper, 0)
    output = gated_query @ S.transpose(-1, -2) + local_score @ chunk_update

    # Suffix-gated state advance with the OSDN write key.
    suffix = gamma_C[..., None] / gamma
    S_next = gamma_C[..., None, None] * S + chunk_update.transpose(-1, -2) @ (
        suffix[..., :, None] * write_key
    )
    return output, S_next
\end{lstlisting}
\end{nolinenumbers}

\paragraph{OSKDA implementation.}
KDA stores \(S_t\in\mathbb{R}^{K\times V}\), so the same preconditioner
appears on the left storage/write key:
\[
    \tilde k_t=d_t\odot k_t,\qquad
    S_t=(I-\beta_t\tilde k_t k_t^\top)\operatorname{Diag}(\alpha_t)S_{t-1}
        +\beta_t\tilde k_t v_t^\top .
\]
The chunk kernel keeps KDA's fine-grained cumulative gate
\(\Gamma_i=\prod_{\ell\le i}\alpha_\ell\in\mathbb{R}^K\). Its local score
and UT matrices use
\[
    (A_{qk})_{ij}
    = \mathbf{1}_{j\le i}\,
    \langle \Gamma_i\odot q_i,\tilde k_j/\Gamma_j\rangle ,
    \qquad
    (A_{kk})_{ij}
    = \mathbf{1}_{j<i}\,
    \beta_i\langle \Gamma_i\odot k_i,\tilde k_j/\Gamma_j\rangle .
\]
Listing~\ref{lst:chunk_oskda} gives the corresponding update. It
solves the same triangular system as KDA, recomputes the cumulative key and
value terms from \(\beta\,\Gamma\odot K\) and \(\beta V\), and advances the
\(K\times V\) state with the suffix-gated write key
\(\Gamma_{i\to C}\odot\tilde K_i\). The OSKDA configuration used for the
matched comparison uses the beta-aware phase-1 recurrence with constant
preconditioner retention \(\rho^d=0.999\); this retention is separate
from KDA's channel-wise state gate.

\begin{nolinenumbers}
\begin{lstlisting}[
style=pytorch,
language=Python,
caption={Chunk update for OSKDA, applied after the phase-1 preconditioner sweep of Listing~\ref{lst:chunk_osdn}.},
label={lst:chunk_oskda}
]
def chunk_oskda_update(q, k, write_key, v, beta, alpha, S):
    # one chunk: q, k, write_key: [B, H, C, K], v: [B, H, C, V],
    #            beta:  [B, H, C],       alpha: [B, H, C, K],
    #            S:     [B, H, K, V]
    C = q.shape[-2]
    eye = torch.eye(C, device=q.device)
    upper_including_diag = torch.triu(
        torch.ones(C, C, device=q.device, dtype=torch.bool), 0
    )
    strict_upper = torch.triu(
        torch.ones(C, C, device=q.device, dtype=torch.bool), 1
    )

    # KDA keeps a channel-wise cumulative gate Gamma in R^K.
    Gamma = alpha.cumprod(dim=-2)
    Gamma_C = Gamma[..., -1, :]
    gated_key = Gamma * k
    gated_query = Gamma * q
    normalized_write_key = write_key / Gamma.clamp_min(1e-6)

    strict_lower_gram = torch.einsum(
        '... i k, ... j k -> ... i j', gated_key, normalized_write_key
    )
    strict_lower_gram = (
        strict_lower_gram * beta[..., :, None]
    ).masked_fill(upper_including_diag, 0)
    ut_weights = torch.linalg.solve_triangular(
        eye + strict_lower_gram, beta[..., None] * eye, upper=False
    )

    cumulative_key = ut_weights @ gated_key
    cumulative_value = ut_weights @ v
    chunk_update = cumulative_value - cumulative_key @ S

    local_score = torch.einsum(
        '... i k, ... j k -> ... i j', gated_query, normalized_write_key
    )
    local_score = local_score.masked_fill(strict_upper, 0)
    output = gated_query @ S + local_score @ chunk_update

    # Suffix-gated state advance on the K x V state with write_key on the left.
    suffix = Gamma_C[..., None, :] / Gamma
    suffix_gated_write_key = suffix * write_key
    S_next = (
        Gamma_C[..., :, None] * S
        + suffix_gated_write_key.transpose(-1, -2) @ chunk_update
    )
    return output, S_next
\end{lstlisting}
\end{nolinenumbers}

\section{Theoretical Analysis: Quadratic Memory-Regression Dynamics}
\label{sec:theory}

This section gives two complementary mechanism-level guarantees for \name{}, distinguished by the surrogate they control.
\S\ref{subsec:idealized_motivation} treats the \emph{idealised population limit}: full-gradient updates on the expected quadratic objective $f(S)$, with the hypergradient surrogate built from differences of $f$. Under monotone descent and sublinear regret against the full right-Newton comparator $D_\star = \Sigma_k^\dagger$, the suboptimality $f(S_T) - f^*$ contracts at a non-asymptotic super-geometric rate. This idealised limit motivates \name{}, but it is not a statement about the implemented per-token, diagonal, scalar-gated update.
\S\ref{subsec:algorithmic_alignment} closes the algorithm-theory gap by analysing the surrogate that the implementation actually optimises: the token-local hypergradient feedback $h_t(d) = (f_t(S_t) - f_t(S_{t-1}))/\|\nabla f_t(S_{t-1})\|_F^2$ from Section~\ref{subsec:hypergradient}. Under a conditional online-regret assumption against any diagonal comparator, we prove a non-asymptotic contraction bound on the geometric mean of token-local residual ratios. The statement matches what the algorithm runs on each token at the cost of being a token-local rather than population-level guarantee; the implications and non-implications for the global memory-regression objective are made explicit below.

\subsection{Preliminaries and Properties of the Objective}
Let $S_t \in \mathbb{R}^{V \times K}$ denote the hidden state matrix at time step $t$. We work with two related quadratic objectives. The \emph{population-limit} loss
$$
    f(S) = \frac{1}{2} \mathbb{E}_{k,v} \left[ \|S k_t - v_t\|_F^2 \right]
$$
takes the expectation over the data distribution of input keys $k_t \in \mathbb{R}^K$ and target values $v_t \in \mathbb{R}^V$, and is the object analyzed in \S\ref{subsec:idealized_motivation}. The \emph{token-local} loss
$$
    f_t(S) = \tfrac12 \|S k_t - v_t\|_F^2
$$
is the per-token instantaneous version that drives the actual update $S_t = S_{t-1}-\beta_t\nabla f_t(S_{t-1})\osdnhi{D_t}$ used in Section~\ref{sec:preconditioned_delta}; \S\ref{subsec:algorithmic_alignment} works directly with $f_t$.

Because $f(S)$ is an explicit quadratic function, its analytical properties are entirely governed by the uncentered covariance matrix of the key vectors, defined as $\Sigma_k = \mathbb{E}[k_t k_t^\top]$.

\vspace{1em}
\noindent\textbf{Property 1 (Exact Smoothness and Convexity).}
\textit{
The gradient of the objective is $\nabla f(S) = S \Sigma_k - \mathbb{E}[v_t k_t^\top]$. Under the column-wise vectorization of $S$, the Hessian matrix is constant and given exactly by $\mathcal{H} = \nabla^2 f(S) = \Sigma_k \otimes I_V$.
Assuming the data distribution is bounded, $f(S)$ is inherently convex and $L$-smooth, where the smoothness constant is explicitly determined by the maximum eigenvalue of the covariance matrix:
$$
    L = \lambda_{\max}(\Sigma_k) < \infty
$$
Because $\mathcal{H} \succeq 0$, the objective is convex. Note that strong convexity ($\lambda_{\min}(\Sigma_k) > 0$) is optionally permitted but not strictly required for the following convergence bounds. Furthermore, because the Hessian $\mathcal{H}$ is constant, the third-order derivative tensor is exactly zero. Thus, the Hessian Lipschitz constant is precisely zero ($M=0$), which eliminates higher-order residual terms that would otherwise appear in the regret analysis.
}
\vspace{1em}

Let $S^* = \arg\min_S f(S)$ be a global optimum, and $f^* = f(S^*)$ be the optimal loss. Consider the ideal full-gradient right-preconditioned dynamics
$$
    S_{t+1} = S_t - \nabla f(S_t) D_t.
$$
Here the decision variable $D_t$ is a right preconditioner in $\mathbb{R}^{K\times K}$. Its vectorized action satisfies $\mathrm{vec}(\nabla f(S_t)D_t)=(D_t^\top\otimes I_V)\mathrm{vec}(\nabla f(S_t))$; in particular, the right-preconditioner oracle $D_\star=\Sigma_k^\dagger$ corresponds to the full Hessian pseudoinverse $\mathcal{H}^\dagger=\Sigma_k^\dagger\otimes I_V$ in vectorized coordinates. The practical \name{} update restricts $D_t$ to the diagonal class and replaces this expected gradient by the current per-token gradient; the theorem below does not remove those approximation gaps.
To quantify the efficacy of $D_t$ in the population-limit setting, we evaluate the hypergradient feedback $h_t(D)$, which records the relative loss change of a preconditioned step (negative when the step decreases the loss). Following the hypergradient surrogate of~\citet{gao2024gradient}, we define
\begin{equation}
\label{eq:population_h}
    h_t(D) = \frac{f(S_t - \nabla f(S_t) D) - f(S_t)}{\|\nabla f(S_t)\|_F^2}.
\end{equation}
We assume $\|\nabla f(S_t)\|_F>0$ when evaluating $h_t$; if the gradient is zero, the state is already globally optimal for this quadratic objective, the bound below is trivial, and we define $h_t(D)=0$ by convention.
With this convention, $h_t(D) \le 0$ whenever the preconditioned step makes progress, so the natural objective for the meta-learner is to \emph{minimize} $h_t(D)$. This sign convention coincides with both the per-step surrogate used in Section~\ref{sec:preconditioned_delta} and with the surrogate $h_x(P) = (f(x - P\nabla f(x)) - f(x))/\|\nabla f(x)\|^2$ of~\citet{gao2024gradient}. Proposition~6.1 of~\citet{gao2024gradient} shows that $h_x$ is convex in $P$ for general $L$-smooth convex $f$; for our quadratic loss the same conclusion follows directly from the constant positive semidefinite Hessian, as shown in Lemma~\ref{lemma:concavity} below.

\subsection{Population-Limit Setting: Hypergradient Feedback Properties}
\label{subsec:idealized_motivation}

This subsection and the next analyze the population-limit dynamics on $f(S)$ as an idealized motivation. The actual algorithm runs on the per-token surrogate analyzed in \S\ref{subsec:algorithmic_alignment}.
Before proving the convergence rate, we establish two fundamental properties of the population-limit hypergradient feedback $h_t(D)$ defined in Eq.~\eqref{eq:population_h}: its convexity (which justifies the use of Online Gradient Descent to minimize it), and its exact behavior under the ideal Newton step.

\begin{lemma}[Convexity of the Feedback Function]
\label{lemma:concavity}
The hypergradient feedback function $h_t(D)$ is a convex quadratic function with respect to the preconditioner $D$.
\end{lemma}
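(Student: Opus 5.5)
The plan is to expand $f$ exactly around $S_t$ along the direction $-\nabla f(S_t)D$, using that $f$ is quadratic with constant Hessian (Property~1). Write $G_t := \nabla f(S_t) = S_t\Sigma_k - \mathbb E[v_tk_t^\top]$, which does not depend on $D$. For any matrix $\Delta\in\mathbb R^{V\times K}$, the second-order Taylor expansion is exact because the third derivative vanishes. In matrix form it reads
\[
f(S_t+\Delta) = f(S_t) + \langle G_t,\Delta\rangle_F + \tfrac12\,\mathrm{tr}\bigl(\Delta\Sigma_k\Delta^\top\bigr),
\]
which one checks directly by expanding $\tfrac12\mathbb E\|(S_t+\Delta)k_t-v_t\|^2$. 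Substituting $\Delta=-G_tD$ and dividing by $\|G_t\|_F^2>0$ gives
\[
h_t(D) = \frac{-\langle G_t, G_tD\rangle_F + \tfrac12\,\mathrm{tr}\bigl(G_tD\Sigma_kD^\top G_t^\top\bigr)}{\|G_t\|_F^2}.
\]
When $G_t=0$, the convention $h_t\equiv 0$ holds and is trivially convex.

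In this expression the first term is linear in $D$. It remains to show that the second term is a convex quadratic form in $D$. Set $A:=G_t^\top G_t\succeq 0$ and use the cyclic property of the trace to write it as $\tfrac12\,\mathrm{tr}(D^\top A D\Sigma_k)$. In vectorized coordinates this equals
\[
\tfrac12\,\mathrm{vec}(D)^\top(\Sigma_k\otimes A)\,\mathrm{vec}(D).
\]
The Kronecker product of two PSD matrices is PSD, since its eigenvalues are the pairwise products of the factors' eigenvalues. An alternative route avoids Kronecker algebra: write $\mathrm{tr}(G_tD\Sigma_kD^\top G_t^\top)=\mathbb E\|G_tDk_t\|_2^2$. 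This is an expectation of squared norms of a map that is linear in $D$, so it is a nonnegative quadratic form and therefore convex. I would present this expectation form as the primary argument because it is the cleanest.

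It follows that $h_t$ is a quadratic function of $D$ whose Hessian is $(\Sigma_k\otimes G_t^\top G_t)/\|G_t\|_F^2\succeq 0$, so $h_t$ is convex. The only real care point is the bookkeeping that the quadratic term is indeed $\tfrac12\mathbb E\|G_tDk_t\|^2$ with the correct orientation of the right multiplication. Confirming this requires only a one-line expansion of $\mathbb E\|(S_t-G_tD)k_t-v_t\|^2$, so I do not expect a genuine obstacle.
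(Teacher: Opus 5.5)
Your proposal is correct and follows essentially the same route as the paper: an exact second-order expansion of the quadratic $f$ along $-\nabla f(S_t)D$, followed by the observation that the quadratic term in $D$ is positive semidefinite. Your expectation form $\tfrac12\mathbb E\|G_tDk_t\|_2^2$ (or the Kronecker form $\Sigma_k\otimes G_t^\top G_t$) simply spells out the step the paper states in one line, namely that $\mathcal H\succeq 0$ composed with the linear map $D\mapsto \mathrm{vec}(\nabla f(S_t)D)$ yields a PSD quadratic form in $D$.
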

\begin{proof}
    Using the Taylor expansion for the exact quadratic function $f$, we write the loss after the update as
    $$
        f(S_t - \nabla f(S_t) D) = f(S_t) - \text{tr}\!\left( \nabla f(S_t)^\top \nabla f(S_t) D \right) + \tfrac{1}{2} \text{vec}\!\left( \nabla f(S_t) D \right)^\top \mathcal{H} \, \text{vec}\!\left( \nabla f(S_t) D \right).
    $$
    Substituting into the definition of $h_t(D)$ yields
    $$
        h_t(D) = \frac{-\text{tr}\!\left( \nabla f(S_t)^\top \nabla f(S_t) D \right) + \tfrac{1}{2} \text{vec}\!\left( \nabla f(S_t) D \right)^\top \mathcal{H} \, \text{vec}\!\left( \nabla f(S_t) D \right)}{\|\nabla f(S_t)\|_F^2}.
    $$
    Since $\mathcal{H} \succeq 0$ (positive semi-definite), the quadratic term in $D$ is positive semi-definite, so $h_t(D)$ is convex. Online gradient descent on $h_t$ can therefore be analyzed with standard regret tools under their usual bounded-domain and gradient-bound assumptions.
\end{proof}

\begin{remark}
This convexity is exactly the content of Proposition~6.1 of~\citet{gao2024gradient}, specialised to a quadratic objective; the constant positive semidefinite Hessian both implies convexity and removes the Hessian-Lipschitz residual that appears for generic smooth losses.
\end{remark}

\begin{lemma}[Optimal Feedback of the Exact Right-Newton Step]
\label{lemma:optimal_feedback}
Let $D_\star := \Sigma_k^\dagger$ denote the ideal right preconditioner induced by the key covariance pseudoinverse. The hypergradient feedback evaluated at $D_\star$ satisfies
$$
    h_t(D_\star) = -\,\frac{f(S_t) - f^*}{\|\nabla f(S_t)\|_F^2}\;\le\;0.
$$
\end{lemma}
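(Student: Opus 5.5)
The plan is to compute $f(S_t - \nabla f(S_t) D_\star)$ exactly, using the quadratic structure from Property~1, and show it equals $f^*$. Then $h_t(D_\star) = (f^* - f(S_t))/\|\nabla f(S_t)\|_F^2$, and this is non-positive because $f^*$ is the global minimum. The degenerate case $\nabla f(S_t)=0$ is handled by the stated convention: then $f(S_t)=f^*$ by convexity, so both sides vanish.

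\textbf{Step 1: write the gradient relative to a minimiser.} Write $G := \nabla f(S_t) = S_t\Sigma_k - M$ with $M := \mathbb{E}[v_t k_t^\top]$. Fix a minimiser $S^*$; it satisfies $S^*\Sigma_k = M$. Then $G = (S_t - S^*)\Sigma_k =: E\Sigma_k$, and the exact quadratic expansion gives
\[
f(S) - f^* = \tfrac12\,\mathrm{tr}\bigl((S-S^*)\Sigma_k(S-S^*)^\top\bigr)
\]
for every $S$. This uses that the linear term vanishes at $S^*$ and that the Hessian is $\Sigma_k\otimes I_V$.

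\textbf{Step 2: evaluate the right-Newton step.} The step gives $S_t - G\Sigma_k^\dagger - S^* = E - E\Sigma_k\Sigma_k^\dagger = E(I - \Pi)$, where $\Pi = \Sigma_k\Sigma_k^\dagger$ is the orthogonal projector onto $\mathrm{range}(\Sigma_k)$, since $\Sigma_k$ is symmetric PSD. Because $\Sigma_k(I-\Pi) = 0$, we get $(I-\Pi)\Sigma_k(I-\Pi)=0$. Step~1 then yields $f(S_t - GD_\star) - f^* = 0$.

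\textbf{Step 3: conclude.} Substitute into the definition~\eqref{eq:population_h} to get the claimed identity, with the sign following from $f(S_t)\ge f^*$.

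The only delicate point is the rank-deficient case. Existence of $S^*$ with $S^*\Sigma_k = M$ needs $M$'s rows to lie in $\mathrm{range}(\Sigma_k)$. This holds because $\ker\Sigma_k$ consists of directions $w$ with $k_t^\top w = 0$ almost surely, so $Mw = \mathbb{E}[v_t k_t^\top w]=0$. Equivalently, the paper assumes a global optimum $S^*$ exists, so the normal equation $\nabla f(S^*)=0$ gives this directly. The remaining care is using the Moore--Penrose identities ($\Sigma_k\Sigma_k^\dagger\Sigma_k = \Sigma_k$ and the symmetry of $\Pi$); everything else is routine algebra.
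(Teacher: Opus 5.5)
Your proof is correct and follows the same route as the paper. You apply $D_\star=\Sigma_k^\dagger$, use the Moore--Penrose projector identity together with the fact that the rows of $\mathbb{E}[v_tk_t^\top]$ lie in the row space of $\Sigma_k$ to show the step lands exactly on a minimiser, and substitute into the definition of $h_t$. The paper instead checks stationarity directly, $\nabla f(S_t^+)=(S_t\Sigma_k-C)(I-\Sigma_k^\dagger\Sigma_k)=0$, rather than expanding the excess loss around $S^*$; your kernel argument ($w^\top\Sigma_k w=\mathbb{E}[(k_t^\top w)^2]=0\Rightarrow Mw=0$) supplies the justification for the row-space claim that the paper only asserts.
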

\begin{proof}
    Write $C=\mathbb{E}[v_t k_t^\top]$, so $\nabla f(S_t)=S_t\Sigma_k-C$. Applying the right-preconditioned update with $D_\star=\Sigma_k^\dagger$ gives
    $$
        S_t^+ = S_t - (S_t\Sigma_k-C)\Sigma_k^\dagger.
    $$
    Since the rows of $C$ and $S_t\Sigma_k$ lie in the row space of $\Sigma_k$, we have $C(I-\Sigma_k^\dagger\Sigma_k)=0$ and $S_t\Sigma_k(I-\Sigma_k^\dagger\Sigma_k)=0$. Therefore the updated gradient is
    $$
        \nabla f(S_t^+) = S_t^+\Sigma_k-C = (S_t\Sigma_k-C)(I-\Sigma_k^\dagger\Sigma_k)=0.
    $$
    Thus $S_t^+$ is a global minimizer up to the nullspace of $\Sigma_k$, and $f(S_t^+)=f^*$. Substituting into the definition of $h_t(D)$ yields the stated equality, equivalently $f(S_t) - f^* = -h_t(D_\star) \|\nabla f(S_t)\|_F^2$.
\end{proof}

\begin{remark}
This exact one-step optimality is specific to quadratic functions. In the general framework of~\citet{gao2024gradient}, the analogous statement is a lower bound, $-h_x(P_h^*) \ge \gamma^* \ge 1/(2L)$ (\citealp[Lemma~6.3]{gao2024gradient}); on quadratics the bound becomes an exact identity, which is the structural property that drives the super-geometric bound below.
\end{remark}

\subsection{Population-Limit Setting: Conditional Super-Geometric Bound}

We now prove a conditional bound for the exact quadratic objective, assuming monotone state updates and sublinear regret against the ideal right-Newton comparator. This is an idealized full-gradient motivation; the algorithmic-aligned bound on the implemented per-token surrogate appears in \S\ref{subsec:algorithmic_alignment}, Theorem~\ref{thm:superlinear}.

\begin{theorem}[Conditional super-geometric convergence of ideal right-preconditioned dynamics]
\label{thm:idealized_population}
Suppose an online learner produces preconditioners $D_t$ whose induced state updates are monotone, $f(S_{t+1}) \le f(S_t)$, and whose cumulative regret against the ideal right-preconditioner comparator $D_\star=\Sigma_k^\dagger$ is bounded by
$$
    \sum_{t=1}^T \bigl( h_t(D_t) - h_t(D_\star) \bigr) \le \mathcal{R}_T.
$$
Then the suboptimality of the inner quadratic regression state after $T$ sequence steps is bounded by
$$
    f(S_{T+1}) - f^* \le \bigl[f(S_1) - f^*\bigr] \left( \frac{2 \lambda_{\max}(\Sigma_k) \mathcal{R}_T}{T} \right)^T.
$$
This theorem is an oracle-comparator statement for the full right-preconditioner class. It becomes a statement about diagonal \name{} only to the extent that the diagonal comparator approximates $D_\star$ and the per-token stochastic updates realize the assumed regret and monotone-descent conditions.
\end{theorem}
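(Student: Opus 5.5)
The plan is the OSGM regret-to-contraction argument: write the per-step progress ratio exactly in terms of $h_t(D_t)$, bound the gradient norm by suboptimality through smoothness, and then turn the product of ratios into an average with AM--GM. Throughout, write $\Delta_t := f(S_t)-f^*$. Assume $\Delta_t>0$ for all $t\le T$; otherwise monotonicity gives $\Delta_{T+1}=0$ and the bound is trivial.

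\textbf{Step 1: exact one-step ratio.} By the definition \eqref{eq:population_h} and the update $S_{t+1}=S_t-\nabla f(S_t)D_t$,
\[
\Delta_{t+1}=\Delta_t+h_t(D_t)\,\|\nabla f(S_t)\|_F^2 .
\]
Lemma~\ref{lemma:optimal_feedback} gives the identity $\Delta_t=-h_t(D_\star)\|\nabla f(S_t)\|_F^2$. Substituting it yields
\[
\Delta_{t+1}=\bigl(h_t(D_t)-h_t(D_\star)\bigr)\|\nabla f(S_t)\|_F^2 .
\]
Hence the ratio is $q_t:=\Delta_{t+1}/\Delta_t=\bigl(h_t(D_t)-h_t(D_\star)\bigr)\|\nabla f(S_t)\|_F^2/\Delta_t$. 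This is exact because $f$ is quadratic (Property~1, $M=0$), so no residual term appears.

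\textbf{Step 2: gradient-norm bound.} For a convex $L$-smooth $f$ with $L=\lambda_{\max}(\Sigma_k)$, the standard inequality $\|\nabla f(S)\|_F^2\le 2L\,(f(S)-f^*)$ holds. It can also be checked directly in the quadratic case: $\|\nabla f\|^2=\mathrm{tr}((S-S^*)\Sigma_k^2(S-S^*)^\top)\le L\,\mathrm{tr}((S-S^*)\Sigma_k(S-S^*)^\top)=2L\Delta$. Monotonicity gives $q_t\ge 0$, so the regret gap $g_t:=h_t(D_t)-h_t(D_\star)$ is nonnegative. Combining the two facts gives $0\le q_t\le 2L\,g_t$.

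\textbf{Step 3: AM--GM.} Telescoping gives $\Delta_{T+1}/\Delta_1=\prod_{t=1}^T q_t$. Since every $q_t\ge0$, AM--GM applies:
\[
\prod_t q_t\le\Bigl(\tfrac1T\sum_t q_t\Bigr)^T\le\Bigl(\tfrac{2L}{T}\sum_t g_t\Bigr)^T\le\Bigl(\tfrac{2L\mathcal R_T}{T}\Bigr)^T .
\]
The last inequality uses the regret hypothesis.

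\textbf{Expected obstacle.} The delicate part is sign bookkeeping. AM--GM needs nonnegative factors, and the gradient bound can only be applied once $g_t\ge0$ is known. Both come from the monotone-descent assumption together with the exact Newton identity of Lemma~\ref{lemma:optimal_feedback}; that identity is what forces $g_t=q_t\Delta_t/\|\nabla f\|^2\ge0$. Degenerate steps also need care. If $\nabla f(S_t)=0$, then $\Delta_t=0$ and all later terms vanish, consistent with the convention $h_t=0$. The nullspace of $\Sigma_k$ is handled by the pseudoinverse argument already contained in Lemma~\ref{lemma:optimal_feedback}.
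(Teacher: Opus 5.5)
Your proposal is correct and follows the paper's proof essentially step for step. It uses the exact ratio from Lemma~\ref{lemma:optimal_feedback}, then the bound $\|\nabla f(S_t)\|_F^2\le 2L\,\Delta_t$, which is exactly the paper's Step~2 inequality $1/|h_t(D_\star)|\le 2L$ (the paper gets it from a Rayleigh quotient on $\mathcal{H}^\dagger$ rather than from $\Sigma_k^2\preceq L\Sigma_k$), and then AM--GM and the regret hypothesis. One small correction to your sign bookkeeping: $q_t\ge 0$, and hence $g_t\ge 0$, follows from $f(S_{t+1})\ge f^*$ rather than from monotonicity, which only gives $q_t\le 1$ and is never actually used.
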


\begin{proof}
    We proceed in four steps.

    \paragraph{Step 1: Single-Step Progression Ratio.}
    Let
    $$
        r_t \;=\; \frac{f(S_{t+1}) - f^*}{f(S_t) - f^*} \;=\; 1 - \frac{f(S_t) - f(S_{t+1})}{f(S_t) - f^*}.
    $$
    By definition of the hypergradient feedback, $f(S_t) - f(S_{t+1}) = -h_t(D_t)\, \|\nabla f(S_t)\|_F^2 \ge 0$. Lemma~\ref{lemma:optimal_feedback} gives $f(S_t) - f^* = -h_t(D_\star)\, \|\nabla f(S_t)\|_F^2 > 0$. Substituting,
    $$
        r_t \;=\; 1 - \frac{-h_t(D_t)}{-h_t(D_\star)} \;=\; 1 - \frac{h_t(D_t)}{h_t(D_\star)}. \label{eq:ratio}
    $$

    \paragraph{Step 2: Bounding $|h_t(D_\star)|^{-1}$ via the Rayleigh Quotient.}
    Since $f(S)$ is quadratic with constant Hessian $\mathcal{H} = \Sigma_k \otimes I_V$, write $g_t = \mathrm{vec}(\nabla f(S_t))$ and recall
    $$
        f(S_t) - f^* = \tfrac{1}{2}\, g_t^\top \mathcal{H}^\dagger g_t, \qquad \|g_t\|_2^2 = g_t^\top g_t = \|\nabla f(S_t)\|_F^2.
    $$
    The gradient vector $g_t$ lies in the range of $\mathcal{H}$, so by the Rayleigh quotient and $\lambda_{\max}(\mathcal{H}) = L = \lambda_{\max}(\Sigma_k)$,
    $$
        \frac{g_t^\top \mathcal{H}^\dagger g_t}{g_t^\top g_t} \;\ge\; \frac{1}{\lambda_{\max}(\mathcal{H})} \;=\; \frac{1}{L},
    $$
    so $-h_t(D_\star) \ge 1/(2L)$, i.e.\
    $$
        \frac{1}{|h_t(D_\star)|} \;\le\; 2L. \label{eq:rayleigh_bound}
    $$

    \paragraph{Step 3: AM-GM on the Per-Step Ratios.}
    By the monotonicity assumption, each $r_t \in [0, 1]$. By the Arithmetic--Geometric Mean inequality (the same step that drives Theorem~4.1 of~\citealp{gao2024gradient}),
    $$
        \prod_{t=1}^T r_t \;\le\; \left( \frac{1}{T} \sum_{t=1}^T r_t \right)^{\!T}.
    $$
    Substituting $r_t = 1 - h_t(D_t)/h_t(D_\star)$ and using $h_t(D_\star) < 0$,
    $$
        \sum_{t=1}^T r_t \;=\; \sum_{t=1}^T \frac{h_t(D_t) - h_t(D_\star)}{|h_t(D_\star)|} \;\le\; 2L \sum_{t=1}^T \bigl(h_t(D_t) - h_t(D_\star)\bigr),
    $$
    where the inequality uses Equation~\eqref{eq:rayleigh_bound} applied term-by-term.

    \paragraph{Step 4: Plug in the Regret Bound.}
    The sum $\sum_{t=1}^T (h_t(D_t) - h_t(D_\star))$ is the cumulative regret of the online learner (applied to minimize the convex surrogate $h_t$) relative to the comparator $D_\star$, which is bounded by $\mathcal{R}_T$. Therefore
    $$
        \prod_{t=1}^T r_t \;\le\; \left( \frac{2 L \,\mathcal{R}_T}{T} \right)^{\!T} \;=\; \left( \frac{2 \lambda_{\max}(\Sigma_k)\, \mathcal{R}_T}{T} \right)^{\!T},
    $$
    and multiplying by $f(S_1) - f^*$ gives the stated bound.
\end{proof}

\paragraph{Diagonal comparator gap.}
Let $\mathcal{D}_{\mathrm{box}}=\{\operatorname{Diag}(d):d\in[d_{\min},d_{\max}]^K\}$ be the box-constrained diagonal class actually searched by the implementation (Section~\ref{subsec:hypergradient}), and let $\mathcal{D}_{\mathrm{diag}}=\{\operatorname{Diag}(d):d\in\mathbb{R}^K\}$ be the unrestricted diagonal class. Define the corresponding best fixed comparators
\[
    D_{\mathrm{box}}^\star
    \in
    \arg\min_{D\in\mathcal{D}_{\mathrm{box}}}
    \sum_{t=1}^T h_t(D),
    \qquad
    D_{\mathrm{diag}}^\star
    \in
    \arg\min_{D\in\mathcal{D}_{\mathrm{diag}}}
    \sum_{t=1}^T h_t(D).
\]
Then the regret to the full oracle decomposes as
\[
\begin{aligned}
\sum_{t=1}^T \bigl(h_t(D_t)-h_t(D_\star)\bigr)
&=
\underbrace{\sum_{t=1}^T \bigl(h_t(D_t)-h_t(D_{\mathrm{box}}^\star)\bigr)}_{\text{projected OGD regret}}
\\
&\quad+
\underbrace{\sum_{t=1}^T \bigl(h_t(D_{\mathrm{box}}^\star)-h_t(D_{\mathrm{diag}}^\star)\bigr)}_{\text{box / projection gap}}
\\
&\quad+
\underbrace{\sum_{t=1}^T \bigl(h_t(D_{\mathrm{diag}}^\star)-h_t(D_\star)\bigr)}_{\text{diagonal gap}}.
\end{aligned}
\]
The implemented \name{} update can control only the first term, through its projected diagonal online learner: standard projected-OGD on the convex compact set $\mathcal{D}_{\mathrm{box}}$ with diameter $D_{\mathcal{D}_{\mathrm{box}}}$ and Lipschitz constant $G_h$ (both finite by Lemma~\ref{lemma:tokenlocal_smoothness}) gives a $D_{\mathcal{D}_{\mathrm{box}}}\,G_h\sqrt{T}$ regret bound.
The middle term is the price paid for the box constraint and vanishes whenever an unrestricted diagonal optimum already lies inside $[d_{\min},d_{\max}]^K$ (e.g., under normalized keys with $\beta_t\langle d_{\mathrm{diag}}^\star, k_t^2\rangle\equiv 1$ achievable inside the box).
The third term is structural; it is small when the key covariance is close to diagonal in the model's feature basis, and can be large when the useful curvature is strongly cross-feature.

\subsection{Algorithmic-Aligned Setting: Token-Local Residual Contraction}
\label{subsec:algorithmic_alignment}

The population-limit bound of Theorem~\ref{thm:idealized_population} controls $f(S_T)-f^*$ by telescoping a single global objective $f$ along the iterates. The implemented \name{} update of Section~\ref{subsec:hypergradient} optimizes a different surrogate: at each token it uses the per-token instantaneous loss $f_t(S)=\tfrac12\|Sk_t-v_t\|_F^2$, and consecutive iterates therefore live on \emph{different} quadratics. The product $\prod_t f_t(S_t)/f_t(S_{t-1})$ does not telescope into $(f(S_T)-f^*)/(f(S_1)-f^*)$, so the population-limit proof does not transfer by simply substituting $f_t$ for $f$. This subsection gives a self-contained bound on the algorithmic surrogate.

\paragraph{Algorithmic surrogate.}
For a candidate diagonal preconditioner $D=\operatorname{Diag}(d)$, define the one-step preconditioned state and the token-local hypergradient feedback by
\begin{equation}
\label{eq:algorithmic_h}
    S_t(d) := S_{t-1} - \beta_t \nabla f_t(S_{t-1})\operatorname{Diag}(d),
    \qquad
    h_t(d) := \frac{f_t(S_t(d)) - f_t(S_{t-1})}{\|\nabla f_t(S_{t-1})\|_F^2}.
\end{equation}
This is the surrogate driving the implementation (Equation~\eqref{eq:dt_update} of Section~\ref{subsec:hypergradient}, with $S_t(d_t)=S_t$ along the algorithmic trajectory). Writing $s_t := k_t^{\odot 2}$ and $n_t := \|k_t\|_2^2$, Lemma~1 of Section~\ref{subsec:hypergradient} gives the closed form
\begin{equation}
\label{eq:algorithmic_h_closed}
    h_t(d) = \frac{(1 - \beta_t \langle d, s_t\rangle)^2 - 1}{2 n_t},
\end{equation}
valid whenever $\|\nabla f_t(S_{t-1})\|_F=\|u_t\|\,\|k_t\|>0$, and continuously extended by $h_t\equiv 0$ on the residual-zero set $\{u_t=0\}$. Equation~\eqref{eq:algorithmic_h_closed} depends on neither $S_{t-1}$, $v_t$, nor the residual $u_t$, preserving the decoupling property used by Algorithm~\ref{alg:precond_online}.

\begin{lemma}[Convexity and smoothness of the token-local feedback]
\label{lemma:tokenlocal_smoothness}
For each $t$ with $n_t>0$, the function $h_t(d)$ in Equation~\eqref{eq:algorithmic_h_closed} is convex quadratic in $d$, with
$$
    \nabla h_t(d) = -\frac{\beta_t}{n_t}\bigl(1-\beta_t\langle d, s_t\rangle\bigr) s_t,
    \qquad
    \nabla^2 h_t(d) = \frac{\beta_t^2}{n_t}\, s_t s_t^\top \;\succeq\; 0.
$$
Under the unit-norm normalisation $\|k_t\|_2=1$ and $\beta_t\in(0,1]$,
$$
    \|\nabla^2 h_t(d)\|_2 \;=\; \beta_t^2 \sum_{i=1}^K k_{t,i}^4 \;\le\; \beta_t^2 \;\le\; 1,
$$
where the first inequality uses $\sum_i k_{t,i}^4 \le (\sum_i k_{t,i}^2)^2 = 1$.
\end{lemma}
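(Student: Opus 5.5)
The plan is to compute directly from the closed form \eqref{eq:algorithmic_h_closed}, which already holds by Lemma~\ref{lem:hypergradient} and its proof in Appendix~\ref{app:method_details}. Fix $t$ with $n_t>0$ and introduce the scalar affine map $a(d):=1-\beta_t\langle d,s_t\rangle$. Then
\[
h_t(d)=\frac{a(d)^2-1}{2n_t}
\]
is a quadratic function of an affine function of $d$. This makes both convexity and the derivative formulas immediate.

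The first step is the gradient. Since $\nabla a(d)=-\beta_t s_t$, the chain rule gives
\[
\nabla h_t(d)=\frac{a(d)}{n_t}\,\nabla a(d)=-\frac{\beta_t}{n_t}\,a(d)\,s_t,
\]
which is the stated formula. The second step is the Hessian. Differentiating once more, the only $d$-dependence left is through $a(d)$, so
\[
\nabla^2 h_t(d)=-\frac{\beta_t}{n_t}\,s_t\,\nabla a(d)^\top=\frac{\beta_t^2}{n_t}\,s_t s_t^\top .
\]
This is a nonnegative multiple of an outer product, hence positive semidefinite. It is also constant in $d$, and these two facts together give that $h_t$ is a convex quadratic.

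The third step is the spectral bound. The matrix $s_ts_t^\top$ has rank one, and its only nonzero eigenvalue is $\|s_t\|_2^2=\sum_i k_{t,i}^4$. Under $\|k_t\|_2=1$ we have $n_t=1$, so the operator norm is $\beta_t^2\sum_i k_{t,i}^4$. Because all $k_{t,i}^2\ge 0$, expanding the square gives
\[
\sum_i k_{t,i}^4\le\Bigl(\sum_i k_{t,i}^2\Bigr)^2=1,
\]
where the cross terms are nonnegative. With $\beta_t\le 1$ this yields $\|\nabla^2 h_t(d)\|_2\le\beta_t^2\le 1$.

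I expect no real obstacle. The one point needing care is that the closed form was derived only on $\{u_t\neq 0\}$. On the residual-zero set, $h_t\equiv 0$ by the stated convention, and that function is trivially convex and smooth. So I would either remark that the lemma refers to the closed-form expression, as the statement says, or treat that degenerate case separately in one line.
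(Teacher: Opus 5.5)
Your proposal is correct and matches the paper's proof essentially line for line. The paper also differentiates the closed form twice, notes that $\nabla^2 h_t=\frac{\beta_t^2}{n_t}s_ts_t^\top$ is rank-one PSD with operator norm $\frac{\beta_t^2}{n_t}\|s_t\|_2^2$, and bounds $\sum_i k_{t,i}^4\le(\sum_i k_{t,i}^2)^2=1$. On your degenerate-case remark, choose the first option: the lemma is about the closed-form expression, and the displayed gradient and Hessian formulas would not hold for the conventional zero function on $\{u_t=0\}$.
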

\begin{proof}
Differentiate Equation~\eqref{eq:algorithmic_h_closed} twice in $d$. The Hessian $\frac{\beta_t^2}{n_t}s_t s_t^\top$ is rank-one positive semidefinite, with operator norm $\frac{\beta_t^2}{n_t}\|s_t\|_2^2$. Under $\|k_t\|_2=1$, $n_t=1$ and $\|s_t\|_2^2=\sum_i k_{t,i}^4$; the Cauchy--Schwarz / power-mean inequality gives the displayed bound.
\end{proof}

\paragraph{Conditional algorithmic regret.}
Let $\mathcal{D}\subset\mathbb{R}^K$ be a closed convex set containing the algorithmic iterates $\{d_t\}$. We assume the online learner producing $\{d_t\}$ admits a sublinear regret bound against any fixed comparator in $\mathcal{D}$:
\begin{equation}
\label{eq:cond_regret}
    \sum_{t=1}^T \bigl(h_t(d_t) - h_t(d)\bigr) \;\le\; R_T(d), \qquad R_T(d) = o(T),\qquad \forall d\in\mathcal{D}.
\end{equation}
For projected online gradient descent on $\mathcal{D}$ with diameter $D_{\mathcal D}$ and gradient bound $G_h$ on $\mathcal{D}$ (both finite by Lemma~\ref{lemma:tokenlocal_smoothness}), the standard analysis gives $R_T = D_{\mathcal D}\,G_h\,\sqrt T$. The implementation in Section~\ref{subsec:hypergradient} instantiates $\mathcal{D}=[d_{\min},d_{\max}]^K$ via the explicit box clamp in Algorithm~\ref{alg:precond_online}, so $D_{\mathcal D}=(d_{\max}-d_{\min})\sqrt{K}$ is finite and the assumption holds for projected OGD. As in the population-limit setting we keep Equation~\eqref{eq:cond_regret} as a conditional assumption to avoid committing to a specific projection scheme in the bound, and discuss specialisations afterwards.

\begin{theorem}[Conditional algorithmic regret and token-local residual contraction]
\label{thm:superlinear}
Assume $\|k_t\|_2=1$ for all $t$ and the conditional-regret assumption~\eqref{eq:cond_regret}. Define the comparator gap
$$
    \varepsilon_T(d) \;:=\; \frac{1}{2T}\sum_{t=1}^T \bigl(1-\beta_t\langle d, s_t\rangle\bigr)^2,
    \qquad
    \varepsilon_T^{\mathrm{diag}} \;:=\; \min_{d\in\mathcal{D}} \varepsilon_T(d).
$$
Then for any $d\in\mathcal{D}$,
$$
    \prod_{t=1}^T \frac{f_t(S_t)}{f_t(S_{t-1})}
    \;\le\;
    \biggl(\,2\varepsilon_T(d) + \frac{2 R_T(d)}{T}\,\biggr)^{\!T},
$$
and in particular, evaluating at the minimiser $d^\star_{\mathrm{diag}}$,
$$
    \prod_{t=1}^T \frac{f_t(S_t)}{f_t(S_{t-1})}
    \;\le\;
    \biggl(\,2\varepsilon_T^{\mathrm{diag}} + \frac{2 R_T}{T}\,\biggr)^{\!T}.
$$
If furthermore there exists $d^\star\in\mathcal{D}$ realising the gated diagonal Newton condition $\beta_t\langle d^\star, s_t\rangle=1$ for all $t$ (so $\varepsilon_T^{\mathrm{diag}}=0$), then under sublinear regret $R_T=o(T)$,
$$
    \prod_{t=1}^T \frac{f_t(S_t)}{f_t(S_{t-1})}
    \;\le\;
    \biggl(\frac{2 R_T}{T}\biggr)^{\!T},
$$
which decays super-geometrically. With the standard OGD regret $R_T=O(\sqrt T)$, the rate is $\bigl(O(1)/\sqrt T\bigr)^{T}$.
\end{theorem}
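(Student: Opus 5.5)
The argument mirrors the proof of Theorem~\ref{thm:idealized_population}, with the exact per-token ratio taking the place of the telescoped suboptimality. \textbf{Step 1 (exact per-token ratio).} Assume $u_t\neq 0$ and $\|k_t\|_2=1$. The residual computation in the proof of Lemma~\ref{lem:hypergradient} gives
\[
\frac{f_t(S_t)}{f_t(S_{t-1})}=\bigl(1-\beta_t\langle d_t,s_t\rangle\bigr)^2 .
\]
Comparing with the closed form~\eqref{eq:algorithmic_h_closed} with $n_t=1$, this is $q_t := 1+2h_t(d_t)$. For any comparator $d$,
\[
q_t = 2\bigl(h_t(d_t)-h_t(d)\bigr) + \bigl(1+2h_t(d)\bigr) = 2\bigl(h_t(d_t)-h_t(d)\bigr) + \bigl(1-\beta_t\langle d,s_t\rangle\bigr)^2 .
\]
So each ratio splits into twice the instantaneous regret plus the comparator's own squared gated-Newton defect. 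Tokens with $u_t=0$ need separate handling. There $f_t(S_{t-1})=0$, so the ratio is undefined. Following the convention $h_t\equiv 0$ on that set, I would either exclude these tokens from the product or assign them ratio $\le 1$ (e.g.\ $0/0:=0$ or $1$). I would then check that the decomposition still gives an upper bound $q_t\le 2(h_t(d_t)-h_t(d))+(1-\beta_t\langle d,s_t\rangle)^2$ on them. This is the fiddliest bookkeeping point and I would state the convention explicitly.

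\textbf{Step 2 (AM--GM).} Each $q_t\ge 0$ is a square, so AM--GM applies without any monotonicity assumption:
\[
\prod_t q_t\le\Bigl(\tfrac1T\sum_t q_t\Bigr)^T .
\]
Summing the decomposition of Step 1 over $t$ gives
\[
\sum_t q_t = 2\sum_t\bigl(h_t(d_t)-h_t(d)\bigr)+2T\varepsilon_T(d) .
\]
The regret assumption~\eqref{eq:cond_regret} bounds the first sum, so $\sum_t q_t\le 2R_T(d)+2T\varepsilon_T(d)$. Dividing by $T$ and raising to the $T$-th power yields the first bound for every $d\in\mathcal D$.

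\textbf{Step 3 (specialisations).} First evaluate at a minimiser of $\varepsilon_T$ over $\mathcal D$. This minimiser exists because $\varepsilon_T$ is continuous and $\mathcal D$ is a compact box. Writing $R_T$ for $R_T(d^\star_{\mathrm{diag}})$, or for a uniform bound $\sup_{d}R_T(d)$ such as the projected-OGD bound $D_{\mathcal D}G_h\sqrt T$ from Lemma~\ref{lemma:tokenlocal_smoothness}, gives the second display. If some $d^\star\in\mathcal D$ satisfies $\beta_t\langle d^\star,s_t\rangle=1$ for all $t$, then $\varepsilon_T(d^\star)=0$ and the bound becomes $(2R_T/T)^T$. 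This decays super-geometrically when $R_T=o(T)$, since eventually $2R_T/T\le c<1$ and indeed $\to 0$. With $R_T=O(\sqrt T)$ it is $(O(1)/\sqrt T)^T$.

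The main obstacle is not the algebra, which is essentially an identity, but making the regret assumption honest. The learner's regret must be measured on exactly the same $h_t$ (with $n_t=1$) that appears in $q_t$. The zero-residual tokens must also be consistent with how $h_t$ is defined there. I would verify in particular that $h_t$ as given by the closed form is what the learner sees even when $u_t=0$, and note that the closed form is residual-free. Using the closed form uniformly is therefore cleaner and makes $q_t=1+2h_t(d_t)$ hold on every token whenever the ratio is defined as $(1-\beta_t\langle d_t,s_t\rangle)^2$.
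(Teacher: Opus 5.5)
Your proposal is correct and follows the paper's proof essentially step for step. Both use the exact ratio $q_t=(1-\beta_t\langle d_t,s_t\rangle)^2=1+2h_t(d_t)$, then AM--GM on the nonnegative $q_t$, then the regret assumption, then the closed-form identity $\sum_t h_t(d)=T\varepsilon_T(d)-T/2$; you only split per token before summing rather than after.

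Your handling of the $u_t=0$ tokens is sound and in fact cleaner than the paper's convention $q_t=1$, $h_t\equiv 0$. That convention is incompatible with applying the closed-form identity to those tokens: with all residuals zero and the gated Newton condition feasible, the regret is zero and the bound would assert $1\le 0$. Your alternative is to use the residual-free closed form for $h_t$ on every token and define the ratio there as $(1-\beta_t\langle d_t,s_t\rangle)^2$. This makes $q_t=1+2h_t(d_t)$ hold uniformly, so the argument goes through verbatim. Note that the other option you float, assigning ratio $1$ while keeping the closed-form $h_t$, would fail the check you propose, so commit to the closed-form convention.
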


\begin{proof}
Set $u_t := v_t - S_{t-1}k_t$. The post-update residual at the algorithmic iterate $d_t$ is
$$
    S_t k_t - v_t \;=\; (S_{t-1}+\beta_t u_t (d_t\odot k_t)^\top) k_t - v_t \;=\; -u_t\bigl(1-\beta_t\langle d_t, s_t\rangle\bigr),
$$
so $f_t(S_t) = \tfrac12 \|u_t\|_2^2 (1-\beta_t\langle d_t, s_t\rangle)^2$ and $f_t(S_{t-1}) = \tfrac12 \|u_t\|_2^2$. Defining $q_t := f_t(S_t)/f_t(S_{t-1})$ on the non-degenerate set $\{u_t\neq 0\}$,
\begin{equation}
\label{eq:qt_form}
    q_t \;=\; \bigl(1-\beta_t\langle d_t, s_t\rangle\bigr)^2 \;\ge\; 0,
\end{equation}
and Equation~\eqref{eq:algorithmic_h_closed} with $n_t=1$ gives $q_t = 1 + 2h_t(d_t)$. (When $u_t=0$ the ratio is $0/0$; we interpret $q_t=1$ since $f_t(S_t)=f_t(S_{t-1})=0$, consistent with $h_t=0$.)

Since $q_t\ge 0$, the AM--GM inequality gives
$$
    \prod_{t=1}^T q_t \;\le\; \Bigl(\frac{1}{T}\sum_{t=1}^T q_t\Bigr)^{\!T} \;=\; \Bigl(\,1 + \frac{2}{T}\sum_{t=1}^T h_t(d_t)\Bigr)^{\!T}.
$$
By Equation~\eqref{eq:cond_regret}, for any $d\in\mathcal{D}$,
$$
    \sum_{t=1}^T h_t(d_t) \;\le\; \sum_{t=1}^T h_t(d) + R_T(d).
$$
The closed-form Equation~\eqref{eq:algorithmic_h_closed} together with $n_t=1$ yields
$$
    \sum_{t=1}^T h_t(d) \;=\; T\,\varepsilon_T(d) - \tfrac{T}{2}.
$$
Combining,
$$
    1 + \frac{2}{T}\sum_{t=1}^T h_t(d_t) \;\le\; 1 + 2\varepsilon_T(d) - 1 + \frac{2R_T(d)}{T} \;=\; 2\varepsilon_T(d) + \frac{2R_T(d)}{T},
$$
and raising to the $T$-th power gives the stated bound. The specialisations follow by minimising over $d\in\mathcal{D}$ and substituting $\varepsilon_T^{\mathrm{diag}}=0$ when the gated diagonal Newton condition is feasible.
\end{proof}

\paragraph{What the theorem controls (and what it does not).}
The left-hand side is the product of \emph{token-local} residual ratios, evaluated at successive per-token quadratics $f_t$, not the suboptimality of any single global objective. The bound therefore does not imply convergence of $f(S_T)-f^*$. Concretely, take $K=V=1$, $k_t=1$, and $v_t=(-1)^t$, with an exact per-token Newton step ($\beta_t\langle d_t,s_t\rangle=1$): each token-local residual is driven to zero, so $\prod_t q_t = 0$ trivially, yet $S_t$ alternates between $\pm 1$ and never approaches the population minimiser $S^\star=0$. Translating Theorem~\ref{thm:superlinear} to a global statement requires additional stationarity or no-conflict assumptions on the data stream; the population-limit Theorem~\ref{thm:idealized_population} provides one such oracle setting, and the corollary below provides another.

\begin{corollary}[Repeated-key contraction]
\label{cor:repeated_keys}
Suppose tokens are typed by $c\in\mathcal{C}$, with key $k_c$ and target $v_c$ depending only on $c$, and the keys $\{k_c\}$ are mutually orthogonal and supported on disjoint coordinate blocks; and suppose $D_t$ respects this block structure. Define the per-class loss $F_c(S):=\tfrac12\|S k_c - v_c\|_F^2$. Then for any sequence in which each class $c$ appears at least once,
$$
    \prod_{c\in\mathcal{C}} \frac{F_c(S_T)}{F_c(S_0)} \;=\; \prod_{t=1}^T q_t,
$$
and under the assumptions of Theorem~\ref{thm:superlinear},
$$
    \prod_{c\in\mathcal{C}} \frac{F_c(S_T)}{F_c(S_0)} \;\le\; \biggl(2\varepsilon_T^{\mathrm{diag}} + \frac{2 R_T}{T}\biggr)^{\!T}.
$$
\end{corollary}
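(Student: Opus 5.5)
The plan is to reduce the class-wise product to the token-wise product $\prod_t q_t$ through an exact bookkeeping identity. Once that identity holds, the inequality follows immediately from Theorem~\ref{thm:superlinear}, applied to the same trajectory. So everything rests on showing that, under the block structure, a write at a token of class $c$ changes only $F_c$ and leaves every other $F_{c'}$ untouched.

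\textbf{Step 1 (locality of writes).} At a token of class $c$ the update is $S_t=S_{t-1}+\beta_t u_t (d_t\odot k_c)^\top$. Because $D_t$ respects the block structure and $k_c$ is supported on block $B_c$, the vector $d_t\odot k_c$ is also supported on $B_c$. For $c'\neq c$ the blocks are disjoint, so $\langle d_t\odot k_c,k_{c'}\rangle=0$. Hence $S_t k_{c'}=S_{t-1}k_{c'}$, and $F_{c'}(S_t)=F_{c'}(S_{t-1})$.

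\textbf{Step 2 (per-class telescoping).} For the class $c$ of token $t$ we have $f_t=F_c$, so $q_t=F_c(S_t)/F_c(S_{t-1})$. Let $t_1<\dots<t_m$ be the occurrences of class $c$. By Step~1, $F_c$ is constant between consecutive occurrences, so $F_c(S_{t_{j}-1})=F_c(S_{t_{j-1}})$. Also $F_c(S_{t_1-1})=F_c(S_0)$ and $F_c(S_{t_m})=F_c(S_T)$. The product of the $q_t$ over the occurrences of $c$ therefore telescopes to $F_c(S_T)/F_c(S_0)$. Every token belongs to exactly one class, and each class appears at least once, so multiplying over classes regroups $\prod_{t=1}^T q_t$ exactly into $\prod_c F_c(S_T)/F_c(S_0)$.

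\textbf{Step 3 (apply the theorem).} The iterates are the actual algorithmic iterates and $\|k_c\|=1$ is assumed, so Theorem~\ref{thm:superlinear} bounds $\prod_t q_t$ by $\bigl(2\varepsilon_T^{\mathrm{diag}}+2R_T/T\bigr)^T$, which gives the claimed bound.

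The main obstacle is the degenerate case $u_t=0$. The theorem's convention sets $q_t=1$ there, which is consistent with Step~2 because $F_c$ is then $0$ both before and after the write. I would state that if some $F_c(S_0)=0$, the class ratio is read with the same $0/0=1$ convention, or else restrict to classes with $F_c(S_0)>0$. With this convention, the telescoping in Step~2 still goes through factor by factor.
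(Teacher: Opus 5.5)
Your proposal is correct and follows the paper's proof essentially step for step. Disjoint block supports make a class-$c$ write leave $Sk_{c'}$ (hence $F_{c'}$) unchanged. The ratios $q_t$ then telescope within each class, regrouping over classes gives $\prod_t q_t$, and Theorem~\ref{thm:superlinear} supplies the bound.

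One caution on your edge-case remark, which the paper also leaves unaddressed. Reading a class with $F_c(S_0)=0$ as $0/0=1$ keeps the identity but not the inequality. If every token is of such a class, the left side is $1$, while the regret condition holds with $R_T=0$ and the gated-Newton bound gives $0$. The safe hypothesis is therefore $F_c(S_0)>0$ for every class: then a token with $u_t=0$ can only follow a factor $q_s=0$, and the left side is simply $0$.
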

\begin{proof}
Block orthogonality and the block structure of $D_t$ ensure that an update with key $k_c$ leaves $S k_{c'}$ unchanged for $c'\ne c$, so $F_{c'}(S_t)=F_{c'}(S_{t-1})$. Telescoping the per-class residual ratios over each $c$-block of token positions gives $F_c(S_T)/F_c(S_0)=\prod_{t:\,c_t=c} q_t$, and taking the product over $c$ recovers $\prod_t q_t$. Theorem~\ref{thm:superlinear} bounds this product.
\end{proof}

\paragraph{Reading.} Theorem~\ref{thm:superlinear} is the regret-aligned analogue of Theorem~\ref{thm:idealized_population}: the population-limit theorem controls a global suboptimality at the cost of an idealised full-gradient comparator that the algorithm does not see, while Theorem~\ref{thm:superlinear} controls the algorithm's own surrogate at the cost of a token-local contraction that requires extra structure (Corollary~\ref{cor:repeated_keys}, or the population limit) to lift to a global statement. Repeated-context retrieval (e.g., JRT-twice) is the empirical regime in which the corollary's structural assumption is most naturally satisfied; this matches the experimental finding in Section~\ref{sec:exp} that \name's gain concentrates on repeated-key tasks.

\subsection{Discussion: The Collapse of Residuals in DeltaNet}

Theorems~\ref{thm:idealized_population} and~\ref{thm:superlinear} adapt the OSGM machinery of~\citet{gao2024gradient} to the sequence-level memory update of DeltaNet at two complementary levels of idealisation. The resulting bounds share three structural features driven by the quadratic geometry of the loss.

\textbf{First, both rates are super-geometric instead of merely linear.} In the general setting of~\citet{gao2024gradient}, the asymptotic complexity is $O\!\bigl(\tfrac{1}{2\mu\gamma^*} \log(1/\varepsilon)\bigr)$ for strongly convex objectives (linear convergence) and $O(1/(\gamma^* \varepsilon))$ otherwise (sublinear). Both of our bounds, in contrast, take the form $O\!\bigl((C\mathcal{R}_T/T)^T\bigr)$, which is super-geometric whenever $\mathcal{R}_T=o(T)$ and therefore eventually decays faster than any geometric progression $\rho^T$ for $0<\rho<1$. The mechanism is twofold:
\begin{enumerate}
    \item Sublinear-regret online learning gives $\mathcal{R}_T=O(\sqrt T)$ (or $O(\log T)$ under additional curvature) so $\mathcal{R}_T/T\to 0$.
    \item In Theorem~\ref{thm:idealized_population}, the Rayleigh-quotient inequality (Step~2) provides a constant lower bound $|h_t(D_\star)|\ge 1/(2L)$, so the per-step ratio $h_t(D_t)/h_t(D_\star)$ approaches~1 as the meta-learner converges, driving each $r_t$ to zero. In Theorem~\ref{thm:superlinear}, the analogous role is played by the AM--GM step, which converts cumulative regret on the per-token surrogate into a contraction on $\prod_t q_t$.
\end{enumerate}
With the standard OGD regret $\mathcal{R}_T=O(\sqrt T)$, both displayed theorems give $O((C/\sqrt T)^T)$. This is super-geometric but \emph{not} factorial. The bounds become informative once the constant factor is dominated by the growing denominator; for the population-limit theorem this corresponds to $T=\Omega(L^2)$ up to problem-dependent constants, while for the algorithmic-aligned theorem it corresponds to $2\varepsilon_T^{\mathrm{diag}}+2R_T/T<1$.

\textbf{Second, neither bound carries a Hessian-Lipschitz residual.} For non-quadratic losses, the analysis of~\citet{gao2024gradient} introduces residual terms scaling with the Hessian-Lipschitz constant $M$ (\citealp[Proposition~5.1]{gao2024gradient} bounds the nonconvexity of $g_x(P)$ by $HD^2 \|\nabla f(x)\|$). Both $f(S)$ and $f_t(S)$ are exact quadratics, so Property~1 gives $M=0$ in the population limit and the corresponding higher-order residual is zero token-wise; the asymptotic burn-in period that limits these analyses on generic non-quadratic objectives is absent in either route.

\textbf{Connection to OSGM-R.} The population-limit Theorem~\ref{thm:idealized_population} is structurally close to Theorem~4.4 of~\citet{gao2024gradient}, which establishes superlinear convergence of OSGM-R on strongly convex quadratics: $f(x^{K+1}) - f^* \le (f(x^1) - f^*)\bigl(\tfrac{4L^2 \|P_1 - A^{-1}\|_F^2}{K}\bigr)^K$. The two routes differ in the surrogate they use---OSGM-R uses the ratio $r_x(P)$, which is $2L^2$-smooth and admits a vanishing $L^*$-regret on quadratics, while we work with the hypergradient surrogate $h_t(D)$, which is convex but only Lipschitz. The hypergradient route is tractable in our setting because of the exact one-step optimality of $D_\star$ (Lemma~\ref{lemma:optimal_feedback}), a property specific to the quadratic loss of DeltaNet. A practical advantage is that the hypergradient route does not require knowledge of $f^*$, which is unavailable at training time.

\textbf{Comparator and feasibility.} Each theorem has its own comparator. Theorem~\ref{thm:idealized_population} compares against the right-preconditioner $D_\star=\Sigma_k^\dagger$, which lies outside the diagonal cone $\{\mathrm{diag}(d):d\in\mathbb{R}^K\}$ that the actual \name{} update searches; the diagonal approximation gap quantifies what is lost. Theorem~\ref{thm:superlinear} compares against any diagonal $d\in\mathcal{D}$, in particular the diagonal-Newton point $\beta_t\langle d, s_t\rangle\equiv 1$ when feasible, and the bound is therefore stated entirely inside the algorithm's feasible set. The price of this alignment is the token-local nature of the left-hand side: the bound controls $\prod_t f_t(S_t)/f_t(S_{t-1})$ rather than a single global suboptimality.

\textbf{Practical implication.} Plugging the standard OGD regret into the two theorems gives, respectively,
$$
    f(S_{T+1}) - f^* \;\le\; [f(S_1) - f^*]\,\Bigl(\tfrac{O(1)}{\sqrt T}\Bigr)^{T}\quad(\text{Thm.~\ref{thm:idealized_population}, full-gradient on }f),
$$
$$
    \prod_{t=1}^T \frac{f_t(S_t)}{f_t(S_{t-1})}\;\le\;\Bigl(2\varepsilon_T^{\mathrm{diag}} + \tfrac{O(1)}{\sqrt T}\Bigr)^{T}\quad(\text{Thm.~\ref{thm:superlinear}, per-token surrogate}).
$$
The population-limit form provides a mechanism-level rationale for per-feature preconditioning under idealised assumptions. The algorithmic form holds for the surrogate Algorithm~\ref{alg:precond_online} actually optimises (modulo the conditional-regret assumption~\eqref{eq:cond_regret}), and Corollary~\ref{cor:repeated_keys} lifts it to a per-class residual contraction in the structured no-conflict regime that closely tracks the repeated-recall splits in Section~\ref{sec:exp}.

\textbf{Scope with APF and stochastic updates.} Algorithm~\ref{alg:precond_online} runs projected hypergradient descent on the box $\mathcal{D}=[d_{\min},d_{\max}]^K$, so the diameter and gradient bounds required by projected OGD are satisfied (Lemma~\ref{lemma:tokenlocal_smoothness}); both theorems still keep the cumulative-regret assumption~\eqref{eq:cond_regret} as a stand-in for any specific online-learning rate schedule, since the reported runs use a small smoothness-motivated practical online step rather than the OGD-textbook $\eta=O(1/\sqrt{T})$. \apf further introduces a retention gate for non-stationary streams, which would require a dynamic-regret or drifting-comparator analysis to treat the comparator $\varepsilon_T(d)$ as time-varying.

\section{Benchmark Suite and Metrics}
\label{sec:app_benchmark_suite}

Our evaluation suite follows the split used in recent linear-recurrent language-model papers: separate language-modelling quality, short-context commonsense checks, explicit associative-recall diagnostics, long-context understanding, and length extrapolation, rather than collapsing all evidence into a single aggregate number~\citep{yang_gated_2024-1,kimiteam2025kimilinearexpressiveefficient,vonoswald_mesanet_2025}. This separation matters for \name{}, since the proposed mechanism changes the geometry of fast-weight writes; the primary expected signal is retrieval after repeated related writes, not a uniform improvement on every downstream task.

\paragraph{Language modeling perplexity.}
Table~\ref{tab:model_ppl} reports zero-shot perplexity on WikiText and LAMBADA.
WikiText is a standard language-modeling corpus with longer articles and a
realistic vocabulary~\citep{merity_pointer_2017}.  LAMBADA asks the model to
predict the final word of a passage that requires broad discourse
context~\citep{paperno_lambada_2016}; we use its perplexity in the PPL table and
its accuracy in the commonsense average.  PG-19 is a book-scale
language-modeling corpus introduced for long-range sequence modeling
\citep{rae_compressive_2020}; we report it separately as an auxiliary appendix
length-extrapolation diagnostic.  The FineWeb-Edu validation column in
Table~\ref{tab:model_ppl} evaluates a fixed in-domain sample-10BT slice,
\texttt{train[-10000:]}, capped at 10.0M next-token labels.  Because the cached
FineWeb-Edu copy exposes only the training split, we treat this as a
pseudo-validation check for training-distribution fit rather than as an
out-of-domain generalization benchmark.

\paragraph{Commonsense and short-context language understanding.}
The Common.\ column averages zero-shot accuracy over PIQA, HellaSwag, WinoGrande, ARC-Easy, ARC-Challenge, SocialIQA, BoolQ, and LAMBADA. These benchmarks cover physical commonsense~\citep{bisk_piqa_2020}, adversarial sentence completion~\citep{zellers_hellaswag_2019}, Winograd-style pronoun resolution~\citep{sakaguchi_winogrande_2021}, grade-school science questions~\citep{clark_arc_2018}, social commonsense~\citep{sap_socialiqa_2019}, natural yes/no reading-comprehension questions~\citep{clark_boolq_2019}, and discourse word prediction~\citep{paperno_lambada_2016}. We treat the average as a scope check: it verifies that the recurrent-layer modification preserves general short-context capability, rather than corresponding to the mechanism-specific claim of the paper.

\paragraph{Associative retrieval.}
The main diagnostic is JRT-style cloze recall at 2K context
\citep{arora_just_2024}.  We evaluate FDA, SWDE, and SQuAD, together with
\emph{-twice} variants where the same context is repeated before the query.  The
reported metric is \textit{contains} accuracy: a prediction is correct if it
contains the target answer string.  The repeated split is the most direct stress
test for \name{} because recurring key directions give the online preconditioner
multiple opportunities to adapt the effective write scale.

\paragraph{Generation-based retrieval.}
The full sweep also includes TriviaQA, DROP, and NQ-Open through
\texttt{lm\_eval}~\citep{joshi_triviaqa_2017,dua_drop_2019,kwiatkowski_natural_2019}.
These tasks require free-form answer generation, so the absolute scores are low
for all base, non-instruction-tuned 340M models.  We therefore use them as
secondary checks rather than headline evidence; the cloze-style recall results are
a cleaner probe of whether the recurrent memory writes preserve and retrieve
associations.

\paragraph{Long-context understanding and length extrapolation.}
LongBench is reported as a 14-task average over English single-document QA,
multi-document QA, summarization, few-shot classification, and synthetic
retrieval/counting tasks~\citep{bai_longbench_2024}.  The PG-19 appendix
diagnostic in Figure~\ref{fig:pg19_buckets} and Table~\ref{tab:pg19_buckets}
trains at 2K context and evaluates on 20K-token blocks, with 2K-token buckets
used to expose position-dependent drift.

\section{Training and Evaluation Reproduction Details}
\label{sec:app_reproducibility}

This appendix records the optimizer, batching, and evaluation settings used by
the matched 340M-scale runs in Section~\ref{sec:exp}.  The modelling choices
are described in Section~\ref{sec:preconditioned_delta} and
Appendix~\ref{sec:app_variant_ablations}.

\paragraph{Software environment.}
All training jobs use Python 3.11, PyTorch 2.6 with CUDA 12.4,
\texttt{transformers} 4.51, and \texttt{datasets} 3.3.  Models are trained in
AMP bfloat16 with float32 reductions.

\paragraph{Hardware.}
Each run uses a single node of NVIDIA H100 80GB GPUs; no multi-node
communication is involved.

\paragraph{Training data.}
All runs use the \texttt{fla-hub/delta\_net-1.3B-100B} tokenizer and the
public FineWeb-Edu \texttt{sample-10BT} training split
(\texttt{HuggingFaceFW/fineweb-edu}).  The in-domain validation column in
Table~\ref{tab:model_ppl} is computed on a fixed held-out slice of the same
corpus.

\paragraph{Model configurations.}
Table~\ref{tab:repro_model_configs} summarises the per-backbone architectural
scale.  Online-scaled runs initialise the diagonal preconditioner at one and
use the beta-aware phase-1 update.  The refreshed DeltaNet no-APF
\name{} screen uses \(\eta=0.003\), \(d_{\min}=0.5\),
\(d_{\max}=2.0\), \(D_0=\mathbf 1\), and the same 4-GPU fair token budget
as the retrained DeltaNet baseline.  The refreshed KDA bounded
runs use the same \(\eta,d_{\min},d_{\max}\) values; OSKDA disables
preconditioner retention, while OSKDA-APF uses the data-dependent
preconditioner-retention gate.  APF variants add the data-dependent retention
gate described in Section~\ref{subsec:apf}.

\begin{table}[H]
\centering
\small
\setlength{\tabcolsep}{6pt}
\renewcommand{\arraystretch}{1.12}
\caption{\textbf{Per-backbone architectural scale used in the matched 340M
sweep.}}
\label{tab:repro_model_configs}
\begin{tabular}{lccc}
\toprule
\textbf{Backbone} & \textbf{Layers} & \textbf{Width} & \textbf{Heads} \\
\midrule
DeltaNet       & 24 & 1024 & 8 \\
KDA            & 23 & 1024 & 8 \\
Gated DeltaNet & 21 & 1024 & 6 \\
\bottomrule
\end{tabular}
\end{table}

\paragraph{Optimizer, batch, seed, and precision.}
The fairness constraint is tokens per optimizer step.  We distinguish three
related lengths to avoid ambiguity.  (i) The \emph{recurrent training
context} is the maximum span over which the model carries hidden state
without reset; in our runs this is at most
\(\texttt{context\_len}=4{,}096\) tokens, since FineWeb-Edu documents longer
than this are further chunked.  (ii) The \emph{packed sequence per GPU} is
the contiguous tensor consumed in one forward pass; we use
\(\texttt{batch\_size}=1\) with \(\texttt{seq\_len}=65{,}536\) and
\(\texttt{varlen}=\text{True}\), so each GPU consumes one 65{,}536-token
packed batch composed of variable-length FineWeb-Edu segments whose
boundaries are tracked by \texttt{cu\_seqlens}; the recurrent state is reset
at every boundary, so the model never sees a contiguous training segment
longer than 4K tokens.  (iii) The \emph{effective batch tokens per
optimizer step}: the 8-GPU jobs run with no gradient accumulation while the
4-GPU jobs use gradient accumulation of two, so both schedules process
\(8\times 65{,}536=524{,}288\) tokens per optimizer step and roughly
10.74B tokens in total over 20{,}480 optimizer steps.  Optimization uses
AdamW with learning rate \(10^{-3}\), a cosine schedule with linear warmup,
gradient norm clipping at 1.0, random seed 42, and bfloat16 training with
float32 reductions.  Because the OSDN rows, baseline rows, and
online-scaled runs share this seed, FineWeb-Edu shard ordering, and batch
schedule, the within-group deltas reported in Section~\ref{sec:exp}
isolate the architectural change rather than seed-level optimization
noise.

\paragraph{Evaluation suites.}
We report (i) zero-shot commonsense and language-modelling on PIQA, HellaSwag,
WinoGrande, ARC-Easy, ARC-Challenge, Social-IQA, BoolQ, WikiText, and
LAMBADA-OpenAI; (ii) JRT-style cloze recall on FDA, SWDE, SQuAD, and their
\emph{-twice} variants; (iii) open-domain retrieval on TriviaQA, DROP, and
NQ-Open; (iv) the LongBench English suite; and (v) PG-19 length-extrapolation
perplexity buckets.  Unless
stated otherwise, evaluation uses greedy decoding through the HuggingFace
checkpoint converted from the distributed checkpoint at step 20,480.  The
refreshed no-APF OSDN and bounded OSKDA rows are recorded under the following
anonymous final-run labels:
\begin{quote}\scriptsize
\texttt{OSDN-340M-noAPF-final}\\
\texttt{OSKDA-340M-noAPF-final}\\
\texttt{OSKDA-340M-APF-final}
\end{quote}
Legacy no-APF OSDN and no-DD OSKDA screens are retained only as provenance for
earlier table entries; the bounded final-run labels above supply the refreshed
no-APF OSDN and OSKDA rows used in the paper.

\paragraph{Mechanism diagnostic.}
The residual-ratio diagnostic in Section~\ref{subsec:exp_mechanism} uses the
same converted checkpoints as the recall suite.  For each checkpoint, we replay
16 validation prompts from each JRT \emph{-twice} task, reconstruct every
recurrent layer's write-side \(k_t\), \(v_t\), \(\beta_t\), and, where
present, \(d_t\), and run the state recurrence in fp32 to record
\(f_t(S_{t-1})\), \(f_t(S_t)\), and
\(q_t=f_t(S_t)/f_t(S_{t-1})\).  The reported averages cover every DeltaNet
layer and recurrent head in the measured architecture (24 layers / 8 heads at
340M, 24 layers / 16 heads at 1.3B; the protocol is otherwise identical
across scales).

\paragraph{Compute footprint.}
A single matched 340M run takes on the order of half a day on a single
H100 80GB node, depending on the backbone and on whether four or eight
GPUs are used.  Beyond these runs, the full research project used additional
H100 hours on hyperparameter screens and on shorter pilot runs that did not
reach the full 20,480-step budget; this preliminary compute is not counted in
the figure above.

\paragraph{1.3B / 100B scaling configuration.}
The scaling validation in Section~\ref{subsec:exp_scale} extends the matched
sweep to DeltaNet scaled to 1.3B parameters and
trained on 100B tokens of the same FineWeb-Edu corpus.  The architectural
configuration (layers, width, heads) for both DeltaNet and \apf{} at this
scale is summarised in Table~\ref{tab:repro_scale_configs}.  The optimizer
class (AdamW), gradient clipping, training precision (bfloat16 with float32
reductions), evaluation tokenizer, and HuggingFace conversion pipeline match
the 340M sweep; the run-specific learning rate, schedule shape, batch
configuration, total optimizer steps, hardware count, and wall-clock cost are
tracked in the training logs because the 1.3B sweep is run separately from the
matched 340M screen.

\begin{table}[htbp]
\centering
\small
\setlength{\tabcolsep}{6pt}
\renewcommand{\arraystretch}{1.12}
\caption{\textbf{1.3B / 100B scaling configuration.}  DeltaNet backbone;
\apf{} adds the OSDN online preconditioner and the APF retention gate on top
of the same backbone.}
\label{tab:repro_scale_configs}
\begin{tabular}{lccccc}
\toprule
\textbf{Model} & \textbf{Params} & \textbf{Layers} & \textbf{Width} & \textbf{Heads} & \textbf{Tokens} \\
\midrule
DeltaNet & 1.3B & 24 & 2048 & 16 & 100B \\
\apf{}   & 1.3B & 24 & 2048 & 16 & 100B \\
\bottomrule
\end{tabular}
\end{table}

\paragraph{Datasets, models, and software: licenses and terms of use.}
All datasets, tokenizers, and software dependencies used in this work are
public open-source releases distributed under their original licence
terms; this work uses each in compliance with those terms and does not
re-distribute any of them.  \emph{Training corpus.}  FineWeb-Edu
\texttt{sample-10BT} (\texttt{HuggingFaceFW/fineweb-edu}, ODC-By 1.0;
the upstream CommonCrawl terms of use apply to the underlying web
content).  \emph{Tokenizer / model assets.}  The
\texttt{fla-hub/delta\_net-1.3B-100B} HuggingFace tokenizer.
\emph{Evaluation benchmarks}, cited in
Appendix~\ref{sec:app_benchmark_suite} alongside their reference papers
and HuggingFace dataset cards: WikiText~\citep{merity_pointer_2017},
LAMBADA~\citep{paperno_lambada_2016}, PIQA~\citep{bisk_piqa_2020},
HellaSwag~\citep{zellers_hellaswag_2019},
WinoGrande~\citep{sakaguchi_winogrande_2021},
ARC-Easy / ARC-Challenge~\citep{clark_arc_2018},
Social-IQA~\citep{sap_socialiqa_2019}, BoolQ~\citep{clark_boolq_2019},
SQuAD and the JRT-style FDA / SWDE / SQuAD and \emph{-twice}
variants~\citep{arora_just_2024},
TriviaQA~\citep{joshi_triviaqa_2017}, DROP~\citep{dua_drop_2019},
NQ-Open~\citep{kwiatkowski_natural_2019},
LongBench~\citep{bai_longbench_2024}, and
PG-19~\citep{rae_compressive_2020}.  \emph{Software stack.}  PyTorch
(BSD-3-Clause); \texttt{transformers}, \texttt{datasets}, and
\texttt{accelerate} (Apache 2.0); \texttt{lm\_eval} (MIT); and the
\texttt{flash-linear-attention} kernel collection (MIT).

\section{Additional 340M Benchmark Breakdowns}
\label{sec:app_general_checks}

These broader benchmark results are not the headline evidence for the paper's empirical claim, but they delineate the mechanism's scope: OSDN is a targeted online-preconditioning method for associative retrieval, while APF stabilises long, non-stationary contexts.  This appendix expands the consolidated main-text Table~\ref{tab:main_results} into per-task breakdowns and additional Gated DeltaNet and KDA rows, and reports the JRT-style and commonsense splits.

\subsection{Language-model perplexity, full breakdown}

\begin{table}[htbp]
\centering\scriptsize
\setlength{\tabcolsep}{4pt}
\renewcommand{\arraystretch}{1.16}
\caption{\textbf{Language-model perplexity summary, full matched 340M sweep.}
Lower is better.  FW-Edu val reports a fixed in-domain FineWeb-Edu
sample-10BT slice (\texttt{train[-10000:]}, 10.0M next-token labels).
GeoMean is computed over the completed zero-shot PPL columns: WikiText
and LAMBADA.  PG-19 length-extrapolation perplexity is reported in
Appendix~\ref{sec:app_pg19_details}.  $\Delta\mathrm{NLL}$ is the
log-GeoMean difference to the matched baseline within each dashed group.}
\label{tab:model_ppl}
\begin{adjustbox}{max width=\linewidth}
\begin{tabular}{l|c|cc|cc}
\toprule
\textbf{Model} &
\shortstack[c]{\textbf{FW-Edu val}\\[-0.15ex]\tiny PPL \(\downarrow\)} &
\shortstack[c]{\textbf{WikiText}\\[-0.15ex]\tiny PPL \(\downarrow\)} &
\shortstack[c]{\textbf{LAMBADA}\\[-0.15ex]\tiny PPL \(\downarrow\)} &
\shortstack[c]{\textbf{GeoMean}\\[-0.15ex]\tiny PPL \(\downarrow\)} &
\shortstack[c]{\(\boldsymbol{\Delta}\mathrm{NLL}\)\\[-0.15ex]\tiny \(\downarrow\)} \\
\midrule
DeltaNet                   & 12.43 & 28.73 & 35.65 & 32.00 & -- \\
\osdnrow
\name{}                   & \textbf{12.32} & 28.57 & 35.09 & 31.67 & \(-0.011\) \\
\apfrow
\apf{}                    & 12.39 & \textbf{28.07} & \textbf{34.21} & \textbf{30.99} & \(\mathbf{-0.032}\) \\
\hdashline
GDN                        & \textbf{11.97} & \textbf{27.43} & 32.83 & 30.01 & -- \\
\osdnrow
OSGDN                     & 12.04 & 27.85 & 31.84 & 29.78 & \(-0.008\) \\
\apfrow
OSGDN-APF                 & 12.07 & 27.80 & \textbf{31.31} & \textbf{29.50} & \(\mathbf{-0.017}\) \\
\hdashline
KDA                        & \textbf{11.38} & 26.56 & \textbf{26.95} & \textbf{26.75} & -- \\
\osdnrow
OSKDA                     & 11.69 & \textbf{26.42} & 29.54 & 27.93 & \(+0.043\) \\
\apfrow
OSKDA-APF                 & 11.70 & 26.60 & 29.52 & 28.02 & \(+0.046\) \\
\bottomrule
\end{tabular}
\end{adjustbox}
\end{table}

Table~\ref{tab:model_ppl} shows that online scaling does not trade
retrieval gains for a systematic language-modelling loss in the DeltaNet
and GDN families: \apf{} gives the best WikiText/LAMBADA GeoMean in both
blocks, and vanilla \name{} also improves the DeltaNet baseline.  The
KDA block is different: KDA remains the strongest perplexity baseline,
while OSKDA variants are used below mainly as retrieval-mechanism scope
checks rather than perplexity improvements.

\subsection{In-context recall, JRT-style cloze breakdown}

\begin{table}[htbp]
\centering\scriptsize
\setlength{\tabcolsep}{3.2pt}
\renewcommand{\arraystretch}{1.13}
\caption{\textbf{In-context recall, JRT-style cloze at 2K context, full
matched 340M rows.}  Reported in \textit{contains} accuracy ($\uparrow$).
Single averages FDA, SWDE, and SQuAD; Repeated averages the corresponding
\emph{-twice} variants.}
\label{tab:ic_recall}
\begin{adjustbox}{max width=\linewidth}
\begin{tabular}{l|ccc|c|ccc|cc}
\toprule
\textbf{Model} &
\shortstack[c]{\textbf{FDA}\\[-0.15ex]\tiny acc. \(\uparrow\)} &
\shortstack[c]{\textbf{SWDE}\\[-0.15ex]\tiny acc. \(\uparrow\)} &
\shortstack[c]{\textbf{SQuAD}\\[-0.15ex]\tiny acc. \(\uparrow\)} &
\shortstack[c]{\textbf{Single}\\[-0.15ex]\tiny avg. \(\uparrow\)} &
\shortstack[c]{\textbf{FDA-tw.}\\[-0.15ex]\tiny acc. \(\uparrow\)} &
\shortstack[c]{\textbf{SWDE-tw.}\\[-0.15ex]\tiny acc. \(\uparrow\)} &
\shortstack[c]{\textbf{SQuAD-tw.}\\[-0.15ex]\tiny acc. \(\uparrow\)} &
\shortstack[c]{\textbf{Repeated}\\[-0.15ex]\tiny avg. \(\uparrow\)} &
\shortstack[c]{\textbf{Overall}\\[-0.15ex]\tiny avg. \(\uparrow\)} \\
\midrule
DeltaNet       & \textbf{0.087} & 0.089 & 0.287 & 0.155 & 0.045 & 0.158 & 0.231 & 0.145 & 0.150 \\
\osdnrow
OSDN          & 0.077 & \textbf{0.156} & \textbf{0.303} & \textbf{0.179} & 0.044 & \textbf{0.227} & \textbf{0.383} & \textbf{0.218} & \textbf{0.198} \\
\apfrow
OSDN-APF      & 0.066 & 0.098 & \textbf{0.292} & 0.152 & \textbf{0.076} & 0.215 & 0.307 & 0.199 & 0.176 \\
\hdashline
GDN            & 0.088 & 0.134 & 0.286 & 0.169 & 0.035 & 0.180 & 0.203 & 0.139 & 0.154 \\
\osdnrow
OSGDN         & \textbf{0.108} & 0.112 & 0.288 & 0.169 & 0.042 & 0.158 & \textbf{0.385} & 0.195 & 0.182 \\
\apfrow
OSGDN-APF     & 0.095 & \textbf{0.155} & \textbf{0.306} & \textbf{0.185} & \textbf{0.049} & \textbf{0.243} & 0.371 & \textbf{0.221} & \textbf{0.203} \\
\hdashline
KDA            & 0.088 & 0.152 & 0.321 & 0.187 & 0.041 & \textbf{0.224} & 0.184 & 0.150 & 0.168 \\
\osdnrow
OSKDA         & \textbf{0.146} & \textbf{0.180} & \textbf{0.327} & \textbf{0.218} & \textbf{0.063} & 0.180 & 0.156 & 0.133 & 0.175 \\
\apfrow
OSKDA-APF     & 0.112 & 0.159 & 0.301 & 0.191 & 0.057 & 0.203 & \textbf{0.277} & \textbf{0.179} & \textbf{0.185} \\
\bottomrule
\end{tabular}
\end{adjustbox}
\end{table}

Table~\ref{tab:ic_recall} resolves the main recall average into the six
JRT-style cloze splits.  The DeltaNet and GDN blocks show the clearest
effect on repeated contexts, especially SWDE-twice and SQuAD-twice,
where the preconditioned rows get a second opportunity to calibrate
recurring key directions.  In the KDA block, OSKDA mainly improves
single-pass recall, while OSKDA-APF is the variant that recovers the
repeated-context average.

\subsection{Mechanism diagnostic: residual contraction with task-wise breakdown}

\begin{table}[htbp]
\centering\scriptsize
\setlength{\tabcolsep}{4.2pt}
\renewcommand{\arraystretch}{1.12}
\caption{\textbf{Direct mechanism diagnostic on repeated-recall prompts,
both scales.}  Lower residual ratios indicate stronger per-token
contraction of the inner Delta-rule regression loss.
\(q_{\mathrm{geo}}\) is the geometric mean of \(q_t = f_t(S_t)/f_t(S_{t-1})\);
\(q_{\mathrm{arith}}\) is the arithmetic mean.  The last three columns
report task-wise \(q_{\mathrm{geo}}\).  Boldface marks the best value
within each scale block.}
\label{tab:mechanism_residual_ratio}
\begin{adjustbox}{max width=\linewidth}
\begin{tabular}{l|cc|ccc}
\toprule
\textbf{Model} &
\(\boldsymbol{q_{\mathrm{geo}}}\downarrow\) &
\(\boldsymbol{q_{\mathrm{arith}}}\downarrow\) &
\shortstack[c]{\textbf{FDA-tw.}\\[-0.15ex]\tiny \(q_{\mathrm{geo}}\downarrow\)} &
\shortstack[c]{\textbf{SWDE-tw.}\\[-0.15ex]\tiny \(q_{\mathrm{geo}}\downarrow\)} &
\shortstack[c]{\textbf{SQuAD-tw.}\\[-0.15ex]\tiny \(q_{\mathrm{geo}}\downarrow\)} \\
\midrule
DeltaNet           & 0.537 & 0.636 & 0.555 & 0.490 & 0.592 \\
\osdnrow
OSDN              & 0.433 & \textbf{0.566} & 0.453 & 0.385 & 0.487 \\
\apfrow
OSDN-APF          & \textbf{0.425} & 0.573 & \textbf{0.452} & \textbf{0.378} & \textbf{0.459} \\
\hdashline
DeltaNet \textit{\scriptsize(1.3B)} & 0.432 & 0.546 & 0.455 & 0.387 & 0.473 \\
\apfrow
\apf{} \textit{\scriptsize(1.3B)}    & \textbf{0.265} & \textbf{0.484} & \textbf{0.375} & \textbf{0.309} & \textbf{0.102} \\
\bottomrule
\end{tabular}
\end{adjustbox}
\end{table}

Table~\ref{tab:mechanism_residual_ratio} is the direct diagnostic behind
the repeated-recall interpretation.  At 340M, both \name{} and \apf{}
reduce the geometric residual ratio on every repeated-recall task, with
\apf{} giving the lowest overall \(q_{\mathrm{geo}}\).  The 1.3B pair
keeps the same direction and strengthens it, most visibly on SQuAD-twice,
where the residual ratio drops from 0.473 to 0.102.

\subsection{Commonsense reasoning, full task breakdown}

\begin{table}[htbp]
\centering\scriptsize
\setlength{\tabcolsep}{3.4pt}
\renewcommand{\arraystretch}{1.13}
\caption{\textbf{Commonsense and short-context language understanding,
full matched 340M sweep.}  Common.\ averages zero-shot PIQA, HellaSwag,
WinoGrande, ARC-Easy, ARC-Challenge, SIQA, BoolQ, and LAMBADA accuracy
($\uparrow$).  PIQA, HSwag, ARC-E, and ARC-C use normalised accuracy;
the other tasks use accuracy.  The no-APF \name{} row reports the best
completed no-APF cell for each task before averaging.}
\label{tab:commonsense_breakdown}
\begin{adjustbox}{max width=\linewidth}
\begin{tabular}{l|cccccccc|c}
\toprule
\textbf{Model} &
\shortstack[c]{\textbf{PIQA}\\[-0.15ex]\tiny \(\uparrow\)} &
\shortstack[c]{\textbf{HSwag}\\[-0.15ex]\tiny \(\uparrow\)} &
\shortstack[c]{\textbf{WinoG.}\\[-0.15ex]\tiny \(\uparrow\)} &
\shortstack[c]{\textbf{ARC-E}\\[-0.15ex]\tiny \(\uparrow\)} &
\shortstack[c]{\textbf{ARC-C}\\[-0.15ex]\tiny \(\uparrow\)} &
\shortstack[c]{\textbf{SIQA}\\[-0.15ex]\tiny \(\uparrow\)} &
\shortstack[c]{\textbf{BoolQ}\\[-0.15ex]\tiny \(\uparrow\)} &
\shortstack[c]{\textbf{LAMB.}\\[-0.15ex]\tiny \(\uparrow\)} &
\shortstack[c]{\textbf{Avg.}\\[-0.15ex]\tiny \(\uparrow\)} \\
\midrule
DeltaNet             & 0.650 & 0.391 & \textbf{0.519} & \textbf{0.526} & 0.262 & 0.381 & \textbf{0.611} & 0.312 & \textbf{0.457} \\
\osdnrow
OSDN                & \textbf{0.662} & 0.393 & \textbf{0.520} & 0.498 & 0.281 & \textbf{0.385} & 0.591 & \textbf{0.320} & 0.456 \\
\apfrow
OSDN-APF            & 0.651 & \textbf{0.399} & 0.516 & 0.506 & \textbf{0.286} & 0.381 & 0.594 & \textbf{0.319} & 0.456 \\
\hdashline
GDN                  & 0.666 & \textbf{0.410} & 0.507 & \textbf{0.520} & \textbf{0.282} & 0.389 & 0.601 & 0.330 & \textbf{0.463} \\
\osdnrow
OSGDN               & \textbf{0.669} & 0.405 & 0.507 & 0.516 & 0.276 & \textbf{0.390} & \textbf{0.607} & \textbf{0.332} & 0.463 \\
\apfrow
OSGDN-APF           & 0.647 & 0.408 & \textbf{0.511} & 0.513 & 0.281 & 0.378 & 0.595 & 0.330 & 0.458 \\
\hdashline
KDA                  & 0.666 & 0.416 & 0.520 & 0.525 & 0.275 & \textbf{0.393} & 0.610 & 0.356 & 0.470 \\
\osdnrow
OSKDA               & 0.668 & \textbf{0.418} & \textbf{0.538} & \textbf{0.541} & \textbf{0.288} & 0.390 & 0.558 & \textbf{0.358} & 0.470 \\
\apfrow
OSKDA-APF           & \textbf{0.681} & 0.417 & 0.532 & 0.531 & 0.283 & 0.378 & \textbf{0.611} & 0.351 & \textbf{0.473} \\
\bottomrule
\end{tabular}
\end{adjustbox}
\end{table}

Table~\ref{tab:commonsense_breakdown} gives the per-task view behind the
Common.\ average.  The row-wise differences are small and the winning
cells are mixed across tasks, which is why we treat commonsense accuracy
as a scope check rather than as the headline claim.  The strongest broad
average in this table is still a KDA-family row, while the DeltaNet and
GDN online-scaled variants stay essentially at parity with their matched
hosts.

\subsection{Short-context, retrieval-LM-eval, and length-extrapolation checks}

\begin{table}[htbp]
\centering\small
\setlength{\tabcolsep}{6pt}
\renewcommand{\arraystretch}{1.16}
\caption{\textbf{General benchmark checks at matched 340M scale.} GDN denotes
Gated DeltaNet.  Higher is better for commonsense, recall, and LongBench; lower is
better for PG-19 PPL.  OSKDA-APF is the data-dependent preconditioner-forgetting variant.
Boldface marks the best value within each dashed block.}
\label{tab:general_checks}
\begin{tabular}{lcccc}
\toprule
\textbf{Model} & \textbf{Common.}\(\uparrow\) & \textbf{Recall}\(\uparrow\) & \textbf{LongBench}\(\uparrow\) & \textbf{PG-19 PPL}\(\downarrow\) \\
\midrule
DeltaNet                   & \textbf{0.457} & 0.150 & 0.072 & 20.78 \\
\osdnrow
\name{}                    & 0.456 & \textbf{0.198} & \textbf{0.087} & 20.02 \\
\apfrow
\apf                       & 0.456 & 0.176 & 0.073 & \textbf{19.85} \\
\hdashline
GDN                        & \textbf{0.463} & 0.154 & 0.073 & 20.11 \\
\osdnrow
OSGDN                     & \textbf{0.463} & 0.182 & 0.073 & \textbf{19.70} \\
\apfrow
OSGDN-APF                 & 0.458 & \textbf{0.203} & \textbf{0.080} & 20.21 \\
\hdashline
KDA                        & 0.470 & 0.168 & 0.088 & 18.73 \\
\osdnrow
OSKDA                     & 0.470 & 0.175 & 0.090 & \textbf{18.53} \\
\apfrow
OSKDA-APF                 & \textbf{0.473} & \textbf{0.185} & \textbf{0.098} & 19.00 \\
\bottomrule
\end{tabular}
\end{table}

Table~\ref{tab:general_checks} compresses the appendix into the same
axes used in the main text.  The targeted pattern is visible across all
three host families: online scaling improves recall in each block, while
the broader commonsense and LongBench columns remain close to the
matched baselines.  APF is most useful when repeated recall or long,
non-stationary contexts matter, whereas vanilla online scaling is often
the cleaner PG-19 variant within a host block.

\begin{figure}[htbp]
\centering
\includegraphics[width=\linewidth]{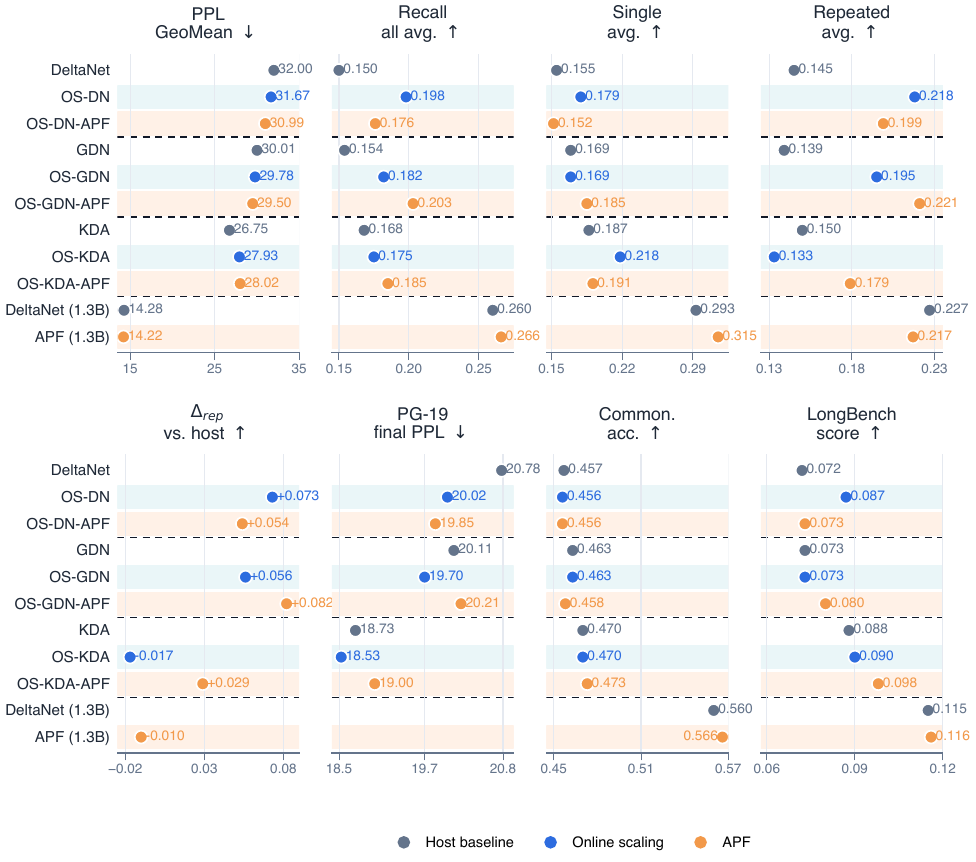}
\caption{\textbf{Visual summary across matched 340M rows
and the 1.3B DeltaNet scale-up.} This is the visual companion to
Table~\ref{tab:main_results}. GDN denotes Gated DeltaNet. PPL is the
WikiText/LAMBADA GeoMean; recall is \textit{contains} accuracy at 2K context;
``single'' averages FDA, SWDE, and SQuAD, while ``repeated'' averages the
corresponding \emph{-twice} variants. \(\Delta_{\mathrm{rep}}\) is the
repeated-recall lift relative to the matched host baseline. PG-19 is the
20K-token length-extrapolation perplexity (lower is better); the 1.3B
PG-19 cells are blank, consistent with the
Appendix~\ref{sec:app_scale_1p3b} reporting protocol. Common.\ averages
PIQA, HellaSwag, WinoGrande, ARC-E/-C, SIQA, BoolQ, and LAMBADA.
LongBench averages 14 tasks.}
\label{fig:aggregate_summary_app}
\end{figure}

Figure~\ref{fig:aggregate_summary_app} is included only as a visual
index to the same numbers, so the table values remain the authoritative
source for comparisons.  Its purpose is to make the separation between
targeted recall gains, broader benchmark parity, and the 1.3B
mechanism-level contraction easier to scan.

\begin{table}[tbh]
\centering\small
\setlength{\tabcolsep}{7pt}
\renewcommand{\arraystretch}{1.12}
\caption{\textbf{FW-Edu checkpoint screen.}  Perplexity is evaluated on the
fixed FineWeb-Edu sample-10BT slice (\texttt{train[-10000:]}, 10.0M labels).
The DeltaNet row is the retrained 4-GPU fair baseline; headline GDN elsewhere
corresponds to GDN v2.}
\label{tab:fwedu_screen}
\begin{tabular}{lc}
\toprule
\textbf{Model} & \textbf{FW-Edu}\(\downarrow\) \\
\midrule
DeltaNet      & 12.43 \\
DeltaNet+gate & 12.54 \\
GDN v2        & 11.97 \\
\osdnrow
OSGDN        & 12.04 \\
\apfrow
OSGDN-APF    & 12.07 \\
KDA           & \textbf{11.38} \\
\osdnrow
\name{}       & 12.32 \\
\apfrow
\apf{}        & 12.39 \\
\bottomrule
\end{tabular}
\end{table}

Table~\ref{tab:fwedu_screen} records the in-domain validation check used
during the checkpoint screen.  KDA and GDN v2 have the lowest FW-Edu
perplexities among these rows, but the refreshed vanilla \name{} run
also improves over the matched DeltaNet baseline.  OSGDN and OSGDN-APF
stay close to GDN v2 on this in-domain slice, so their downstream
differences are not explained by a large validation-perplexity gap.

\begin{table}[htbp]
\centering\small
\setlength{\tabcolsep}{7pt}
\renewcommand{\arraystretch}{1.12}
\caption{\textbf{Retrieval LM-eval checks.} DROP reports F1;
NQ-Open and TriviaQA report exact match after whitespace normalization
(\(\uparrow\)).  Boldface marks the best value within each dashed block.}
\label{tab:oskda_retrieval_lmeval}
\begin{tabular}{l|ccc|c}
\toprule
\textbf{Model} & \textbf{DROP F1} & \textbf{NQ-Open EM} & \textbf{TriviaQA EM} & \textbf{Avg.} \\
\midrule
DeltaNet                   & 0.029 & \textbf{0.016} & 0.003 & 0.016 \\
\osdnrow
\name{}                    & 0.027 & \textbf{0.018} & \textbf{0.004} & \textbf{0.016} \\
\apfrow
\apf{}                     & \textbf{0.031} & 0.013 & \textbf{0.004} & \textbf{0.016} \\
\hdashline
GDN                        & 0.027 & 0.011 & 0.005 & 0.015 \\
\apfrow
OSGDN-APF                 & \textbf{0.028} & \textbf{0.012} & \textbf{0.007} & \textbf{0.016} \\
\hdashline
KDA                        & 0.024 & \textbf{0.024} & 0.005 & \textbf{0.018} \\
\osdnrow
OSKDA                     & 0.024 & 0.012 & \textbf{0.005} & 0.014 \\
\apfrow
OSKDA-APF                 & \textbf{0.032} & 0.009 & 0.003 & 0.014 \\
\bottomrule
\end{tabular}
\end{table}

Table~\ref{tab:oskda_retrieval_lmeval} checks whether the recall gains
also appear on standard retrieval LM-eval tasks.  The absolute scores are
low at this scale and the effects are small: DeltaNet-family online
scaling ties the rounded average, OSGDN-APF slightly improves the GDN
average, and KDA remains the strongest retrieval LM-eval baseline.  We
therefore keep the paper's retrieval claim tied to the controlled
JRT-style cloze setting rather than to these open-domain QA probes.

Taken together, these breakdowns support a deliberately narrow reading of
the 340M sweep.  Online scaling consistently helps the controlled recall
and residual-contraction measurements, but the broader commonsense,
perplexity, LongBench, and retrieval LM-eval checks remain host-dependent.
We therefore use these tables to bound the claim: \name{} is a targeted
associative-retrieval mechanism, not a universal broad-benchmark lift.

\section{PG-19 Length Extrapolation Details}
\label{sec:app_pg19_details}

\begin{figure}[htbp]
\centering
\includegraphics[width=\linewidth]{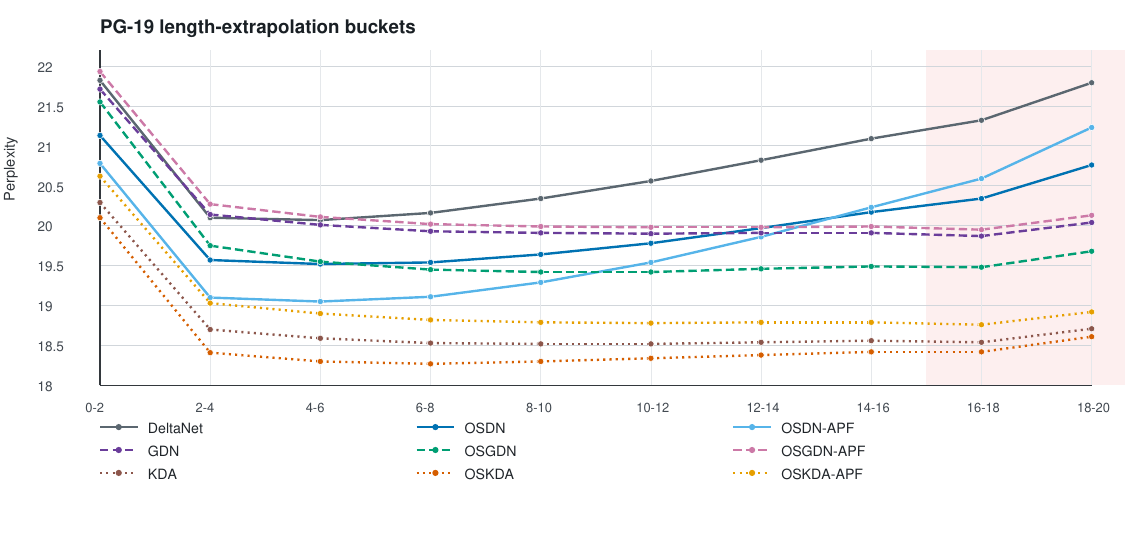}
\caption{\textbf{Auxiliary PG-19 length-extrapolation diagnostic.}
Each GPU consumes a 65{,}536-token packed batch, but training segments are
variable-length FineWeb-Edu documents capped at \(4\)K tokens with
\texttt{cu\_seqlens}-aligned recurrent-state resets at every segment
boundary, so the effective recurrent training context is at most \(4\)K
tokens.  Models are then evaluated on 20K-token PG-19 blocks, well beyond
any contiguous segment seen in training.  Curves show perplexity by
2K-token position bucket; lower is better.  The red background marks the
late buckets used to diagnose drift.  This book-scale perplexity check is
reported as appendix evidence rather than as a main retrieval benchmark.}
\label{fig:pg19_buckets}
\end{figure}

\begin{table}[htbp]
\centering\small
\setlength{\tabcolsep}{3pt}
\caption{\textbf{PG-19 perplexity by 2K-token bucket} (\(\downarrow\)). Final
PPL is the corpus-level evaluation output; \(\Delta\) is \(18\)--\(20\)K minus
\(2\)--\(4\)K and measures late-context drift.  Boldface marks the best value
within each dashed block.}
\label{tab:pg19_buckets}
\scriptsize
\begin{adjustbox}{max width=\linewidth}
\begin{tabular}{lcccccccccc|cc}
\toprule
\textbf{Bucket (K)} & 0--2 & 2--4 & 4--6 & 6--8 & 8--10 & 10--12 & 12--14 & 14--16 & 16--18 & 18--20 & \textbf{Final} & \(\boldsymbol\Delta\) \\
\midrule
DeltaNet        & 21.82 & 20.10 & 20.07 & 20.16 & 20.34 & 20.56 & 20.82 & 21.09 & 21.32 & 21.79 & 20.78 & \(+1.69\) \\
\osdnrow
\name{}         & 21.13 & 19.57 & 19.52 & 19.54 & 19.64 & 19.78 & 19.97 & \textbf{20.17} & \textbf{20.34} & \textbf{20.76} & 20.02 & \(\mathbf{+1.19}\) \\
\apfrow
\apf            & \textbf{20.78} & \textbf{19.10} & \textbf{19.05} & \textbf{19.11} & \textbf{19.29} & \textbf{19.54} & \textbf{19.86} & 20.23 & 20.59 & 21.23 & \textbf{19.85} & \(+2.13\) \\
\hdashline
GDN             & 21.71 & 20.14 & 20.01 & 19.93 & 19.91 & 19.90 & 19.91 & 19.91 & 19.87 & 20.04 & 20.11 & \(-0.10\) \\
\osdnrow
OSGDN          & \textbf{21.55} & \textbf{19.75} & \textbf{19.55} & \textbf{19.45} & \textbf{19.42} & \textbf{19.42} & \textbf{19.46} & \textbf{19.49} & \textbf{19.48} & \textbf{19.68} & \textbf{19.70} & \(-0.07\) \\
\apfrow
OSGDN-APF      & 21.93 & 20.27 & 20.11 & 20.02 & 19.99 & 19.98 & 19.98 & 19.99 & 19.95 & 20.13 & 20.21 & \(\mathbf{-0.14}\) \\
\hdashline
KDA             & 20.29 & 18.70 & 18.59 & 18.53 & 18.52 & 18.52 & 18.54 & 18.56 & 18.54 & 18.71 & 18.73 & \(+0.01\) \\
\osdnrow
OSKDA          & \textbf{20.10} & \textbf{18.41} & \textbf{18.30} & \textbf{18.27} & \textbf{18.30} & \textbf{18.34} & \textbf{18.38} & \textbf{18.42} & \textbf{18.42} & \textbf{18.61} & \textbf{18.53} & \(+0.20\) \\
\apfrow
OSKDA-APF      & 20.62 & 19.03 & 18.90 & 18.82 & 18.79 & 18.78 & 18.79 & 18.79 & 18.76 & 18.92 & 19.00 & \(\mathbf{-0.11}\) \\
\bottomrule
\end{tabular}
\end{adjustbox}
\end{table}

\(\Delta\) is not the headline metric because the gated baselines have the
smallest gaps.  Its role is diagnostic: the refreshed no-APF \name{} run lowers
DeltaNet-row PG-19 final perplexity from 20.78 to 20.02 and gives the smallest
DeltaNet-row late-bucket drift, while \apf{} still gives the lowest absolute
final perplexity at 19.85.  The bounded OSGDN run improves the
GDN final PG-19 perplexity from 20.11 to 19.70 while keeping a small negative
late-bucket drift; OSGDN-APF finishes slightly above GDN at 20.21 but gives the
smallest GDN-row late-bucket drift at \(-0.14\).  The KDA extension shows a
different tradeoff: the refreshed bounded OSKDA no-DD run gives the strongest
absolute PG-19 curve in the block, improving the KDA final perplexity from
18.73 to 18.53, while OSKDA-APF trades that absolute PPL for negative
late-bucket drift at \(-0.11\).

\section{OSDN Variant Ablations}
\label{sec:app_variant_ablations}

\begin{table}[htbp]
\centering\small
\setlength{\tabcolsep}{6pt}
\caption{\textbf{Ablation over \name{} variants on the DeltaNet backbone.}
Boldface marks the best value in each column.}
\label{tab:ablation}
\begin{tabular}{l|cccc|c}
\toprule
\textbf{Variant} & \textbf{Common.}\(\uparrow\) & \textbf{Recall}\(\uparrow\) & \textbf{LongBench}\(\uparrow\) & \textbf{PG-19 PPL}\(\downarrow\) & \(\boldsymbol\Delta\)\(\downarrow\) \\
\midrule
DeltaNet baseline      & 0.457 & 0.150 & 0.072 & 20.78 & \(+1.69\) \\
\midrule
\name{}                & 0.456 & \textbf{0.198} & \textbf{0.087} & 20.02 & \textbf{\(+1.19\)} \\
\name{}+\(D_0\)        & 0.459 & 0.172 & 0.069 & 21.57 & \(+5.34\) \\
\name{}+\(D_0^\star\)  & \textbf{0.461} & 0.164 & 0.063 & 21.42 & \(+4.25\) \\
\apf                   & 0.454 & 0.176 & 0.073 & \textbf{19.85} & \(+2.13\) \\
\bottomrule
\end{tabular}
\end{table}

These ablations are diagnostic rather than headline methods. Learning $D_0$ helps commonsense slightly but does not fix length drift; relaxing the $D_0$ projection improves commonsense further but weakens retrieval. The refreshed no-APF \name{} screen improves the rounded commonsense and LongBench averages relative to the original full-evaluation checkpoint and also reduces PG-19 drift; \apf{} still gives the lowest DeltaNet-row PG-19 final perplexity.

\section{Training-Loss Diagnostic}
\label{sec:app_training_loss}

\begin{figure}[htbp]
\centering
\includegraphics[width=\linewidth]{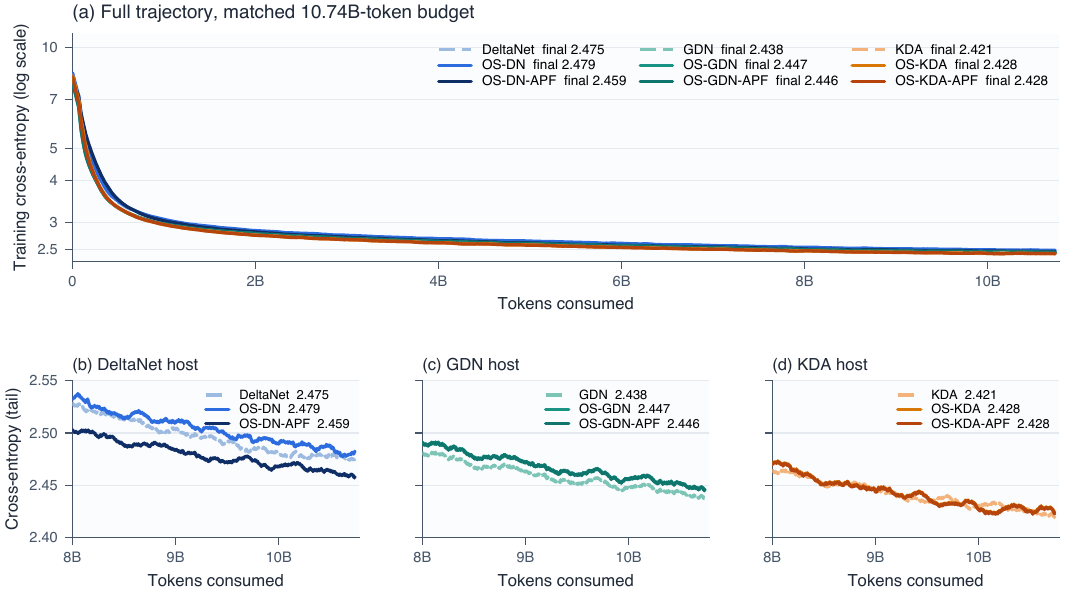}
\caption{\textbf{Token-vs-CE training trajectories at matched 340M scale.}
Each curve consumes the same \(10.74\)B-token budget at \(524{,}288\) tokens
per optimizer step (Appendix~\ref{sec:app_reproducibility}); curves are
rendered as a centred \(257\)-step rolling mean.  Within each panel, the
baseline uses a lighter dashed tone, online-scaled (OS) variants a
mid-saturation tone, and APF variants the deepest tone.
Panel~(a) plots the full trajectory on a logarithmic \(y\)-axis.
Panels~(b)/(c)/(d) zoom into the final \(25\%\) of the budget on a linear
scale, separated by baseline.  Final CE (mean over the last \(256\)
steps) lies in \([2.421, 2.479]\) across all plotted completed
configurations, a spread of \(0.058\) nats.}
\label{fig:training_loss}
\end{figure}

Figure~\ref{fig:training_loss} reports training cross-entropy for the
matched 340M sweep on the FineWeb-Edu sample-10BT slice; the
matched-budget protocol is enforced in Appendix~\ref{sec:app_reproducibility}
rather than diagnosed from this figure.  The plotted completed final-CE
values fall in a \(0.058\)-nat band; within each dashed group the
baseline and its online-scaled variants finish within a few hundredths of
a nat of each other---a separation that is small relative to single-step CE
fluctuation on this corpus.  We therefore do not read training CE as a
mechanism diagnostic for online preconditioning, and do not draw any
within-group ranking from it.  The mechanism-level claims in
Section~\ref{subsec:exp_mechanism} are stated on the per-token regression
residual ratio \(q_t = f_t(S_t)/f_t(S_{t-1})\), with
\(f_t(S_{t-1})=\tfrac12\|S_{t-1}k_t-v_t\|^2\), which is the quantity
controlled by Theorem~\ref{thm:superlinear} and specialised to the
no-conflict repeated-key regime by Corollary~\ref{cor:repeated_keys}; the
downstream evidence is the JRT-style repeated-recall gains and the PG-19
length-bucket profile reported in Section~\ref{sec:exp}.

\section{1.3B / 100B Scaling Breakdown}
\label{sec:app_scale_1p3b}

This appendix records the per-task breakdowns underlying the 1.3B / 100B
scaling check in Section~\ref{subsec:exp_scale}. The optimizer schedule,
training data, evaluation protocols, and tokenizer match the 340M sweep; the
scale-specific settings (architecture, token budget, hardware) are summarised
at the end of Appendix~\ref{sec:app_reproducibility}.

\paragraph{Language modelling perplexity.}
Table~\ref{tab:scale_1p3b_ppl} reports the same WikiText, LAMBADA, and
FineWeb-Edu validation sources as the 340M Table~\ref{tab:model_ppl}; the
1.3B FW-Edu rows use the matched packed \texttt{train[-10000:]} slice (11.7M
labels).  PG-19 length-extrapolation perplexity is not part of the
1.3B / 100B reporting protocol because the full 20K-token sweep did not
complete on a matched harness for both rows; it is therefore omitted from
this scale's perplexity summary.

\begin{table}[H]
\centering\small
\setlength{\tabcolsep}{6pt}
\renewcommand{\arraystretch}{1.15}
\caption{\textbf{1.3B / 100B perplexity summary.} Lower is better.  GeoMean
is computed over the completed zero-shot PPL columns: WikiText and LAMBADA.
\(\Delta\mathrm{NLL}\) is the log-GeoMean difference relative to the
DeltaNet baseline at this scale.}
\label{tab:scale_1p3b_ppl}
\begin{adjustbox}{max width=\linewidth}
\begin{tabular}{l|c|cc|cc}
\toprule
\textbf{Model} &
\shortstack[c]{\textbf{FW-Edu val}\\[-0.15ex]\tiny PPL \(\downarrow\)} &
\shortstack[c]{\textbf{WikiText}\\[-0.15ex]\tiny PPL \(\downarrow\)} &
\shortstack[c]{\textbf{LAMBADA}\\[-0.15ex]\tiny PPL \(\downarrow\)} &
\shortstack[c]{\textbf{GeoMean}\\[-0.15ex]\tiny PPL \(\downarrow\)} &
\shortstack[c]{\(\boldsymbol{\Delta}\mathrm{NLL}\)\\[-0.15ex]\tiny \(\downarrow\)} \\
\midrule
DeltaNet & 8.71 & 17.23 & 11.84 & 14.28 & -- \\
\apfrow
\apf{}   & 9.10 & 18.42 & 10.98 & 14.22 & \(-0.004\) \\
\bottomrule
\end{tabular}
\end{adjustbox}
\end{table}

\paragraph{In-context recall.}
Table~\ref{tab:scale_1p3b_recall} expands the JRT-style cloze breakdown,
mirroring the 340M Table~\ref{tab:ic_recall}.  At this scale, the SQuAD and
SQuAD-twice splits did not produce a usable JRT \textit{contains}-accuracy
signal under the matched evaluation harness (returned values were within
generation-failure noise rather than tracking the model's recall on the
prompt) and are therefore excluded from this table; the 1.3B recall
discussion is restricted to FDA and SWDE, where the harness completed cleanly
on both checkpoints.

\begin{table}[H]
\centering\scriptsize
\setlength{\tabcolsep}{3.6pt}
\renewcommand{\arraystretch}{1.13}
\caption{\textbf{1.3B / 100B in-context recall, JRT-style cloze at 2K
context.}  Reported in \textit{contains} accuracy ($\uparrow$).  Single
averages FDA and SWDE; Repeated averages their \emph{-twice} variants;
Overall averages all four task accuracies.  SQuAD splits are omitted because
the evaluation harness did not return reliable contains-accuracy values for
them at this scale.}
\label{tab:scale_1p3b_recall}
\begin{adjustbox}{max width=\linewidth}
\begin{tabular}{l|cc|c|cc|cc}
\toprule
\textbf{Model} &
\shortstack[c]{\textbf{FDA}\\[-0.15ex]\tiny contains acc. \(\uparrow\)} &
\shortstack[c]{\textbf{SWDE}\\[-0.15ex]\tiny contains acc. \(\uparrow\)} &
\shortstack[c]{\textbf{Single}\\[-0.15ex]\tiny avg. \(\uparrow\)} &
\shortstack[c]{\textbf{FDA-tw.}\\[-0.15ex]\tiny contains acc. \(\uparrow\)} &
\shortstack[c]{\textbf{SWDE-tw.}\\[-0.15ex]\tiny contains acc. \(\uparrow\)} &
\shortstack[c]{\textbf{Repeated}\\[-0.15ex]\tiny avg. \(\uparrow\)} &
\shortstack[c]{\textbf{Overall}\\[-0.15ex]\tiny avg. \(\uparrow\)} \\
\midrule
DeltaNet & 0.215 & 0.371 & 0.293 & 0.061 & 0.392 & 0.227 & 0.260 \\
\apfrow
\apf{}   & \textbf{0.241} & \textbf{0.388} & \textbf{0.315} & \textbf{0.073} & 0.360 & 0.217 & \textbf{0.266} \\
\bottomrule
\end{tabular}
\end{adjustbox}
\end{table}

\paragraph{Commonsense reasoning.}
Table~\ref{tab:scale_1p3b_commonsense} expands the eight-task commonsense
average, mirroring the 340M Table~\ref{tab:commonsense_breakdown}.

\begin{table}[H]
\centering\scriptsize
\setlength{\tabcolsep}{3.4pt}
\renewcommand{\arraystretch}{1.13}
\caption{\textbf{1.3B / 100B commonsense and short-context language
understanding.}  The Avg.\ column averages zero-shot PIQA, HellaSwag,
WinoGrande, ARC-Easy, ARC-Challenge, SIQA, BoolQ, and LAMBADA accuracy
($\uparrow$). PIQA, HSwag, ARC-E, and ARC-C use normalised accuracy; the other
tasks use accuracy.}
\label{tab:scale_1p3b_commonsense}
\begin{adjustbox}{max width=\linewidth}
\begin{tabular}{l|cccccccc|c}
\toprule
\textbf{Model} &
\shortstack[c]{\textbf{PIQA}\\[-0.15ex]\tiny norm acc. \(\uparrow\)} &
\shortstack[c]{\textbf{HSwag}\\[-0.15ex]\tiny norm acc. \(\uparrow\)} &
\shortstack[c]{\textbf{WinoG.}\\[-0.15ex]\tiny acc. \(\uparrow\)} &
\shortstack[c]{\textbf{ARC-E}\\[-0.15ex]\tiny norm acc. \(\uparrow\)} &
\shortstack[c]{\textbf{ARC-C}\\[-0.15ex]\tiny norm acc. \(\uparrow\)} &
\shortstack[c]{\textbf{SIQA}\\[-0.15ex]\tiny acc. \(\uparrow\)} &
\shortstack[c]{\textbf{BoolQ}\\[-0.15ex]\tiny acc. \(\uparrow\)} &
\shortstack[c]{\textbf{LAMB.}\\[-0.15ex]\tiny acc. \(\uparrow\)} &
\shortstack[c]{\textbf{Avg.}\\[-0.15ex]\tiny acc. \(\uparrow\)} \\
\midrule
DeltaNet & 0.727 & 0.581 & 0.598 & 0.687 & 0.418 & 0.415 & 0.579 & 0.473 & 0.560 \\
\apfrow
\apf{}   & 0.733 & 0.592 & 0.606 & 0.690 & 0.404 & 0.417 & 0.605 & 0.480 & 0.566 \\
\bottomrule
\end{tabular}
\end{adjustbox}
\end{table}

\paragraph{LongBench.}
Table~\ref{tab:scale_1p3b_longbench} reports the LongBench English 14-task
average. The per-task breakdown is omitted to match the 340M reporting
convention.

\begin{table}[H]
\centering\small
\setlength{\tabcolsep}{8pt}
\renewcommand{\arraystretch}{1.12}
\caption{\textbf{1.3B / 100B LongBench English 14-task average.}}
\label{tab:scale_1p3b_longbench}
\begin{tabular}{lc}
\toprule
\textbf{Model} & \textbf{LongBench} \(\uparrow\) \\
\midrule
DeltaNet & 0.115 \\
\apfrow
\apf{}   & 0.116 \\
\bottomrule
\end{tabular}
\end{table}

\paragraph{Reading the 1.3B per-task tables.}
The per-token residual-ratio diagnostic of
Section~\ref{subsec:exp_mechanism} was rerun on the 1.3B / 100B checkpoints
under the same repeated-recall prompt protocol; the resulting numbers are
reported alongside the 340M rows in
Table~\ref{tab:mechanism_residual_ratio}, where \apf{} achieves the lowest
\(q_{\mathrm{geo}}\) recorded across either scale.  The per-task tables in
this appendix complement that diagnostic by making the underlying
language-modelling and capability axes explicit: \apf{} improves single-pass
FDA / SWDE recall from 0.293 to 0.315 and LAMBADA perplexity from 11.84 to
10.98, while the WikiText / LAMBADA GeoMean (14.28 vs.\ 14.22), the
eight-task commonsense average (0.560 vs.\ 0.566), the LongBench English
average (0.115 vs.\ 0.116), and repeated FDA / SWDE recall (0.227 vs.\
0.217) all stay at parity with the matched DeltaNet baseline.  Together,
the appendix tables and the main mechanism table support the same reading:
at 1.3B / 100B the OSDN-APF residual-ratio contraction transfers and
continues to amplify relative to the 340M sweep, while downstream
language-modelling and capability averages stay at parity with DeltaNet.

\section{Inference Throughput Protocol and Results}
\label{sec:app_throughput}

This appendix records the throughput numbers for all matched 340M
variants and the 1.3B scaling pair, the measurement protocol, the
recurrent-state size formula, and the reference single-token kernel used
for the throughput summary referenced in the main text.

\begin{table}[H]
\centering\scriptsize
\setlength{\tabcolsep}{4pt}
\renewcommand{\arraystretch}{1.16}
\caption{\textbf{Inference throughput and persistent recurrent-state size
at matched 340M scale.}  Single H100 80GB, \texttt{batch=1},
2{,}048-token prefill + 128-token greedy decode, bfloat16, median of 5
timed repeats.  $\Delta$ is relative to the baseline within each dashed
group.  KV/state is the per-sequence persistent
recurrent-state size (excludes weights, activations, and short-conv
cache); the OSGM diagonal vector adds $\le 0.05\%$ to the recurrent state
size.}
\label{tab:throughput}
\begin{tabular}{@{}lcccc@{}}
\toprule
\textbf{Model} &
\shortstack[c]{\textbf{tokens/sec}\\[-0.15ex]\tiny \(\uparrow\)} &
\shortstack[c]{\textbf{decode ms}\\[-0.15ex]\tiny \(\downarrow\)} &
\shortstack[c]{\(\boldsymbol{\Delta}\)\textbf{tokens/sec}\\[-0.15ex]\tiny vs.\ base} &
\shortstack[c]{\textbf{KV/state}\\[-0.15ex]\tiny MiB bf16} \\
\midrule
DeltaNet                   & \textbf{784.5} & \textbf{2735.5} & --              & 6.000 \\
\osdnrow
\name{}                   & 782.7 & 2740.5 & \(\mathbf{-0.2\%}\) & 6.047 \\
\apfrow
\apf{}                    & 767.2 & 2796.2 & \(-2.2\%\)      & 6.047 \\
\hdashline
GDN                        & \textbf{882.9} & \textbf{2429.7} & --              & 7.875 \\
\osdnrow
OSGDN                     & 865.1 & 2469.7 & \(\mathbf{-2.0\%}\) & 7.906 \\
\apfrow
OSGDN-APF                 & 865.4 & 2466.6 & \(\mathbf{-2.0\%}\) & 7.906 \\
\hdashline
KDA                        & 761.1 & 2817.6 & --              & 5.750 \\
\osdnrow
OSKDA                     & \textbf{803.0} & \textbf{2665.9} & \(\mathbf{+5.5\%}\) & 5.795 \\
\apfrow
OSKDA-APF                 & 771.9 & 2774.0 & \(+1.4\%\)      & 5.795 \\
\bottomrule
\end{tabular}
\end{table}

The same script and prompt/decode protocol were also run on the 1.3B / 100B
DeltaNet scaling pair, using a single H100 80GB on \texttt{phoenix2}.  The
\apf{} row remains close to the matched DeltaNet baseline, with a larger
single-GPU slowdown than the 340M DeltaNet-family rows.

\begin{table}[H]
\centering\scriptsize
\setlength{\tabcolsep}{4pt}
\renewcommand{\arraystretch}{1.16}
\caption{\textbf{Inference throughput for the 1.3B / 100B scaling pair.}
Single H100 80GB, \texttt{batch=1}, 2{,}048-token prefill + 128-token greedy
decode, bfloat16, median of 5 timed repeats.  $\Delta$ is relative to the
matched DeltaNet baseline.}
\label{tab:throughput_1p3b}
\begin{tabular}{@{}lcccc@{}}
\toprule
\textbf{Model} &
\shortstack[c]{\textbf{tokens/sec}\\[-0.15ex]\tiny \(\uparrow\)} &
\shortstack[c]{\textbf{decode ms}\\[-0.15ex]\tiny \(\downarrow\)} &
\shortstack[c]{\(\boldsymbol{\Delta}\)\textbf{tokens/sec}\\[-0.15ex]\tiny vs.\ base} &
\shortstack[c]{\textbf{KV/state}\\[-0.15ex]\tiny MiB bf16} \\
\midrule
DeltaNet \textit{\scriptsize(1.3B)} & \textbf{794.0} & \textbf{2702.4} & --         & 12.000 \\
\apfrow
\apf{} \textit{\scriptsize(1.3B)}   & 739.9          & 2899.2          & \(-6.8\%\) & 12.094 \\
\bottomrule
\end{tabular}
\end{table}

The pattern is consistent across the three baselines: every OS-* variant lands
within $\pm 5.5\%$ of its baseline on tokens/sec, and the persistent
recurrent state grows by $\le 0.05\%$ from the OSGM diagonal vector.
Online preconditioning is essentially throughput-neutral at this scale.  At
1.3B, the same APF mechanism adds \(0.094\)~MiB of persistent state and runs
6.8\% slower than the matched DeltaNet checkpoint under the same single-GPU
generation benchmark.

\paragraph{Hardware and protocol.}
Single H100 80GB SXM, bfloat16 weights and activations with float32
reductions inside the recurrent state. \texttt{batch\_size}\,=\,1, prompt
length 2{,}048 tokens, decode length 128 greedy tokens. Prefill is one
forward call; decode is emitted token-by-token through HuggingFace's
\texttt{past\_key\_values} cache so that the recurrent kernel is exercised
realistically. Each variant is loaded from its trained HuggingFace
checkpoint, warmed up twice, then timed five times; medians of
prefill\,/\,decode milliseconds are reported. The reported tokens/sec is
$(2048+128) / (\texttt{prefill\_ms} + \texttt{decode\_ms}) \cdot 10^3$. All
nine variants are measured back-to-back on the same physical GPU.

\paragraph{Persistent recurrent state.}
The KV/state column counts the per-sequence persistent recurrent state
(excluding model weights, activations, allocator overhead, and the
short-convolution cache):
\[
  \texttt{state\_elements}    = L \cdot H \cdot d_k \cdot d_v, \quad
  \texttt{osgm\_d\_elements}  = L \cdot H \cdot d_k \;\;(\text{only if use\_osgm}),
\]
\[
  \texttt{state\_MiB\_bf16}   = (\texttt{state\_elements} + \texttt{osgm\_d\_elements}) \cdot 2 / 1024^2.
\]
At our 340M shape the OSGM diagonal contributes $0.047$~MiB on the
DeltaNet/KDA backbone and $0.031$~MiB on the GDN backbone -- $\le 0.05\%$
of the recurrent state size in every case.

\paragraph{Single-token recurrence kernel for OSGDN.}
At decode time (\texttt{q\_len}~$\le 64$) the OSGDN forward dispatches to a
fused Triton kernel that performs the GDN forget gate
$\alpha_t = \exp(g_t)$, the post-gate-regret OSGM update
\[
  d_{t+1} \;=\; \gamma_t\,d_t \;+\; \eta\,\beta_t\bigl(1 - \beta_t \langle d_t,\,k_t^2 \rangle\bigr)\,k_t^2
\]
with optional clamp into $[d_{\min}, d_{\max}]$, and the rank-1
delta-rule write
\[
  S_t \;=\; \alpha_t\, S_{t-1} \;+\; (k_t \odot d_t) \otimes \beta_t\bigl(v_t - (\alpha_t S_{t-1})^\top k_t\bigr)
\]
inside one kernel launch per token. The kernel covers
\texttt{decay\_mode}~$\in$~\{\texttt{none}, \texttt{data\_dependent}\}; for
the data-dependent variant the decay signal is aliased to the same raw
log-decay $g_t$ that drives GDN's forget gate ($\gamma_t = \alpha_t$).
Both production OSGM-on-GDN configurations (no-DD and APF) use this
dispatch.

\paragraph{Numerical equivalence with the chunk forward.}
We verified the single-token kernel against the chunk training reference
on synthetic inputs ($T=64$, bfloat16 activations, fp32 state). End-to-end
output relative error is $\le 6.5\times 10^{-3}$, final state error is
$\le 6.7\times 10^{-3}$, and final $d$ error is $\le 2.0\times 10^{-3}$,
all within the bfloat16 cross-schedule tolerance for two-mode reduction
(chunk-parallel vs.\ sequential). A layer-level check -- prefill of
$T=128$ followed by a single decode token, compared against a chunk-only
forward of $T=129$ -- gives an end-to-end output relative error
$\le 5.8\times 10^{-3}$ across both \texttt{decay\_mode} settings.

\section{Extended Related Work Taxonomy}
\label{sec:app_related}

\paragraph{Linear attention and state-space models.}
Linear attention~\citep{katharopoulos_transformers_2020,choromanski_rethinking_2021}
replaces the softmax kernel with a feature map, collapsing attention to a
constant-size matrix-valued state $S_t$ updated by an additive Hebbian
recurrence $S_t = S_{t-1} + v_t k_t^\top$, and reducing inference cost
from $\mathcal{O}(N^2)$ to $\mathcal{O}(N)$. The purely additive update
has documented retrieval limitations~\citep{schlag_linear_2021,Sun2023RetentiveNA,arora_zoology_2024,arora_simple_2024}:
writes to the same key direction superpose, and the model lacks a
mechanism to overwrite a stale association. A first remedy is to introduce
a multiplicative \emph{decay} or \emph{gate}, which has produced a rich
family of recurrent linear-attention architectures: RetNet's constant
decay~\citep{Sun2023RetentiveNA}, RWKV's time-mixing channel
decay~\citep{peng_eagle_2024,peng_rwkv7_2025}, GLA's data-dependent
diagonal gate~\citep{yang_gated_2024}, mLSTM/xLSTM~\citep{beck_xlstm_2024},
HGRN-2~\citep{qin_hgrn2_2024}, and GSA~\citep{zhang_gated_2024}. A
parallel line investigates the structured-state-space view: S4 and its
diagonal simplifications~\citep{gu_efficiently_2021,smith_simplified_2022,smith_simplified_2023,orvieto_resurrecting_2023},
the long-convolution Hyena~\citep{poli_hyena_2023}, the input-selective
Mamba~\citep{gu_mamba_2024}, Mamba-2's structured-state-space
duality~\citep{dao_transformers_2024}, and the more expressive
Mamba-3~\citep{gu_mamba3_2026}. \name{} is orthogonal to the choice of
decay or selectivity mechanism: it modifies the \emph{write-scale
geometry} of the rank-one update without changing the surrounding
architecture, and the chunkwise UT-transform in
Section~\ref{subsec:recurrent_chunk} preserves the SSD/WY computation
pipeline shared by these models.

\paragraph{The delta rule and fast-weight programmers.}
\name{} belongs to the lineage of \emph{fast-weight
programmers}~\citep{schmidhuber_learning_1992,ba_using_2016,schlag_linear_2021},
which view a sequence layer as a network whose ``fast'' weights are
written and read by the slow network on the fly.
\citet{schlag_linear_2021} formalised this view for linear attention by
showing that DeltaNet's read-then-write update is exactly an online
gradient step on a per-token regression loss with scalar learning rate
$\beta_t$. Subsequent work has improved the parallelism, expressivity, or
gating of this update: \citet{yang_parallelizing_2024} introduce a
chunkwise WY parallelisation that closes the speed gap with softmax
attention; \citet{yang_gated_2024-1} add a head-wise data-dependent forget
gate (Gated DeltaNet); \citet{kimiteam2025kimilinearexpressiveefficient}
replace the scalar gate with a fine-grained per-channel decay and
demonstrate strong long-context retrieval (KDA / Kimi Linear);
\citet{siems_deltaproduct_2025} stack Householder transitions to raise
per-step expressivity (DeltaProduct); and RWKV-7~\citep{peng_rwkv7_2025}
arrives at a closely related diagonal-plus-low-rank transition.
\citet{liu_longhorn_2024} take a complementary route, deriving the
delta-rule update as the closed-form solution of an instantaneous
quadratic regression. \name{} differs from these in a single, focused way:
prior work either keeps the write step scalar ($\beta_t$, $\alpha_t$) or
solves the per-token regression in closed form, whereas \name{} retains
$\beta_t$ and learns a per-feature multiplier $d_t \in \mathbb{R}^K$ via
online optimisation on the next-step loss; this is shown to be
mathematically equivalent to a key scaling that preserves the tensor
shapes of DeltaNet, Gated DeltaNet, and KDA kernels, with the substitution
appearing on KDA's storage side under its transposed state convention.

\paragraph{Sequence layers as online optimisers.}
A growing body of work casts the sequence-mixing layer as an inner
optimiser running on a regression objective implicitly built from the
prefix. \citet{vonoswald_transformers_2023,akyurek_what_2023} identify a
single linear-attention layer with one step of gradient descent on
in-context linear regression, and \citet{vonoswald_uncovering_2024} show
that deeper transformers internally implement multi-step
``mesa-optimisation''. The \emph{Test-Time Training} (TTT) family makes
this explicit by parameterising the recurrent state as a small model
trained by SGD on a self-supervised inner loss~\citep{sun_learning_2024};
Titans~\citep{behrouz_titans_2024} and Atlas~\citep{behrouz_atlas_2025}
extend this with momentum and sliding-window contexts.
\citet{wang_testtime_2025} unify many of these architectures under a
``test-time regression'' framework. The closest second-order point on
this spectrum is MesaNet~\citep{vonoswald_mesanet_2025}, whose Mesa Layer
solves a regularised cumulative least-squares problem to optimality at
every token via conjugate gradient, at the cost of an extra
$\mathcal{O}(d_k^2)$ Gram matrix and a $k$-step inner solver. \name{}
sits between these endpoints: it retains the first-order TTT view (one
preconditioned gradient step per token) and learns a non-trivial diagonal
preconditioner online via the hypergradient framework
of~\citet{gao2024gradient}. The associated convergence statement is an
idealised full-gradient quadratic comparator result, rather than an
end-to-end guarantee for the implemented diagonal stochastic layer.

\begin{table}[H]
\centering
\footnotesize
\setlength{\tabcolsep}{2.2pt}
\renewcommand{\arraystretch}{1.08}
\caption{\textbf{Delta-rule linear-attention layers and TTT architectures as inner optimisers.}
Each row instantiates the inner-optimiser view with a decay, in-context
loss, and step rule; shaded rows are \name{} variants. We omit normalisers,
feature maps, and read-out; $\Lambda$ is a fixed regulariser.}
\label{tab:ttt_view}
\begin{tabularx}{\linewidth}{@{}
  >{\hsize=0.94\hsize\raggedright\arraybackslash}X
  >{\hsize=0.46\hsize\raggedright\arraybackslash}X
  >{\hsize=1.11\hsize\scriptsize\raggedright\arraybackslash}X
  >{\hsize=1.73\hsize\scriptsize\raggedright\arraybackslash}X
  >{\hsize=0.76\hsize\scriptsize\raggedright\arraybackslash}X
  @{}}
\toprule
\textbf{Model} & \textbf{Decay $G_t$} & \textbf{Loss $\mathcal{L}_t(S)$} & \textbf{Step rule} & \textbf{Type} \\
\midrule
Linear Attn.~\citep{katharopoulos_transformers_2020}
  & $I$
  & $-\langle S k_t,\, v_t\rangle$ (Hopfield)
  & $S_t = \tilde S_{t-1} + v_t k_t^\top$
  & Hebbian \\
RetNet~\citep{Sun2023RetentiveNA}
  & $\alpha I$
  & $\mathcal{L}^{\text{Hopf}}_t + \tfrac{1}{2}\,\|\sqrt{1{-}\alpha}\, \tilde S_{t-1}\|_F^2$
  & $S_t = \tilde S_{t-1} + \beta_t v_t k_t^\top$
  & Hebb.+decay \\
GLA~\citep{yang_gated_2024}
  & $\mathrm{Diag}(\alpha_t)$
  & $\mathcal{L}^{\text{Hopf}}_t$ + diag.\ regulariser
  & $S_t = \tilde S_{t-1} + v_t k_t^\top$
  & Hebb.+diag. \\
DeltaNet~\citep{schlag_linear_2021,yang_parallelizing_2024}
  & $I$
  & $\tfrac{1}{2}\,\|S k_t - v_t\|^2$
  & $S_t = \tilde S_{t-1} - \beta_t \nabla \mathcal{L}_t(\tilde S_{t-1})$
  & scalar OGD \\
Gated DeltaNet~\citep{yang_gated_2024-1}
  & $\alpha_t I$
  & $\tfrac{1}{2}\,\|S k_t - v_t\|^2$
  & $S_t = \tilde S_{t-1} - \beta_t \nabla \mathcal{L}_t(\tilde S_{t-1})$
  & OGD+decay \\
KDA~\citep{kimiteam2025kimilinearexpressiveefficient}
  & $\mathrm{Diag}(\boldsymbol\alpha_t)$
  & $\tfrac{1}{2}\,\|S^\top k_t - v_t\|^2$
  & $S_t = \tilde S_{t-1} - \beta_t \nabla \mathcal{L}_t(\tilde S_{t-1})$
  & OGD+diag. \\
Longhorn~\citep{liu_longhorn_2024}
  & $I$
  & $\tfrac{1}{2}\,\|S k_t - v_t\|^2$
  & $S_t = \arg\min_S \mathcal{L}_t(S)$ (closed form)
  & closed form \\
MesaNet~\citep{vonoswald_mesanet_2025}
  & $\gamma_t I$
  & $\tfrac{1}{2}\sum_{i\le t}\!\|S k_i {-} v_i\|^2 + \tfrac{1}{2}\mathrm{Tr}(S^\top \Lambda S)$
  & $S_t = \arg\min_S \mathcal{L}_t(S)$ via $k$-step CG
  & prefix solve \\
\midrule
\osdnrow
\textbf{\name{}}
  & $I$
  & $\tfrac{1}{2}\,\|S k_t - v_t\|^2$
  & $S_t = \tilde S_{t-1} - \beta_t\nabla \mathcal{L}_t(\tilde S_{t-1})\, \mathrm{Diag}(d_t)$
  & \textbf{precond. OGD} \\
\apfrow
\textbf{\apf}
  & $I$
  & $\tfrac{1}{2}\,\|S k_t - v_t\|^2$
  & idem, with $d_{t+1} = \boldsymbol r_t \odot d_t - \eta\, \nabla_d h_t$
  & \textbf{APF precond.} \\
\bottomrule
\end{tabularx}
\end{table}

\paragraph{Online preconditioning and hypergradient methods.}
Adapting the optimiser's step size while it runs has a long history.
Sutton's IDBD~\citep{sutton_adapting_1992} learns a per-feature scalar
gain by gradient descent on a meta-loss; \citet{baydin_online_2018}
revive this idea as ``hypergradient descent'' for modern stochastic
optimisation. On the analytical side, online convex
optimisation~\citep{zinkevich_online_2003,hazan_logarithmic_2007,hazan_introduction_2016}
provides regret bounds for OGD, ONS, and AdaGrad-style adaptive
methods~\citep{duchi_adaptive_2011,kingma_adam_2015,gupta_shampoo_2018}.
\citet{gao2024gradient} unify these threads in the \emph{Online Scaled
Gradient Method} (OSGM) framework, treating the preconditioner $P$ as the
decision variable in a surrogate online-learning problem and proving
sublinear-regret-to-convergence reductions, including a super-geometric
rate on quadratic objectives. To our knowledge, this framework has been
studied only on generic convex programs; \name{} is the first
instantiation in a sequence-modelling layer, and it exploits the exact
quadratic structure of the DeltaNet regression loss to eliminate the
Hessian-Lipschitz residual that limits its analysis on generic smooth
losses.

\paragraph{Associative recall, expressivity limits, and benchmarks.}
The diagnostic axis on which \name{} is most directly evaluated is
\emph{in-context associative recall}. \citet{olsson_incontext_2022}
attribute much of attention's recall ability to ``induction heads'';
\citet{ramsauer_hopfield_2021} cast attention as continuous Hopfield
retrieval. Linear-time recurrent models are constrained by their fixed
state size: \citet{arora_zoology_2024} introduce the MQAR benchmark and
prove a recall-versus-state-size trade-off that
BASED~\citep{arora_simple_2024} maps out empirically, while
\citet{wen_rnns_2024,jelassi_repeat_2024} establish formal separation
results for copying and exact recall. We therefore use JRT-style cloze
recall~\citep{arora_just_2024} and LongBench~\citep{bai_longbench_2024}
as the retrieval and long-context diagnostics; PG-19~\citep{rae_compressive_2020}
appears only as an auxiliary book-scale perplexity check in the appendix.
\name{} pushes the recall-throughput frontier in the same direction as
Gated DeltaNet, KDA, and the BASED family, but along a complementary
axis---a learned per-feature write multiplier---and we observe the
largest gains on repeated-context tasks (SWDE-twice, FDA-twice,
SQuAD-twice), the regime in which recurring key directions give online
preconditioning time to take effect.

\ifosdnarxiv
\else
  \section*{NeurIPS Paper Checklist}

\begin{enumerate}

\item {\bf Claims}
    \item[] Question: Do the main claims made in the abstract and introduction accurately reflect the paper's contributions and scope?
    \item[] Answer: \answerYes{}.
    \item[] Justification: The abstract and introduction state three concrete claims, each scoped to a corresponding section.  \emph{(i)~Methodology}: the diagonal right-preconditioner $D_t$ is algebraically equivalent to a per-feature scaling of the write-side key, $\tilde k_t = d_t \odot k_t$ (Section~\ref{subsec:equivalence}); the OSGM hypergradient is decoupled from $S_t$, $v_t$, and the residual, and the full sequence $\{d_t\}$ admits an $\mathcal O(K)$-state affine recurrence over keys and scalar gates (Section~\ref{subsec:hypergradient}, Algorithm~\ref{alg:precond_online}); the chunkwise WY pipeline is preserved under the single substitution $K\mapsto\tilde K$ on the storage side (Section~\ref{subsec:recurrent_chunk}, Appendix~\ref{app:chunk_impl}); APF is a token-wise / head-wise retention gate applied only to $d_t$, not to the high-dimensional state $S_t$ (Section~\ref{subsec:apf}).  \emph{(ii)~Theory}: a population-limit super-geometric contraction against the right-Newton comparator (Theorem~\ref{thm:main_population}) and an algorithm-aligned token-local residual-contraction bound on the actual sequence of writes (Theorem~\ref{thm:main_tokenlocal}); both rest on the exact-quadratic structure of the inner per-token loss and are stated with explicit assumptions, with full proofs in Appendix~\ref{sec:theory}.  \emph{(iii)~Empirical}: at matched 340M / 10B-token compute, vanilla \name{} raises overall JRT-style \emph{contains}-accuracy recall by 32\% over DeltaNet and reduces the directly measured token-local residual-ratio geometric mean from $0.537$ to $0.433$, while \apf{} retains a 17\% recall gain and is the more stable variant on long-context perplexity (Sections~\ref{subsec:exp_main}--\ref{subsec:exp_mechanism}); at 1.3B / 100B tokens the residual-ratio reduction nearly doubles to $0.432\to0.265$, while WikiText/LAMBADA, commonsense, and LongBench averages stay at parity with DeltaNet (Section~\ref{subsec:exp_scale}, Appendix~\ref{sec:app_scale_1p3b}).  The abstract restricts the 1.3B comparison to a single matched DeltaNet vs.\ \apf{} pair on FDA / SWDE recall (SQuAD splits omitted at this scale, see Appendix~\ref{sec:app_scale_1p3b}) and the empirical "amplifies" qualifier is attached only to the residual-ratio contraction, not to downstream metrics.  Section~\ref{sec:discussion} reiterates the population-vs-implementation theory gap, the inner-regression vs.\ next-token cross-entropy gap, the single-seed empirical scope, and the harness-level limitations of the 1.3B sweep.

\item {\bf Limitations}
    \item[] Question: Does the paper discuss the limitations of the work performed by the authors?
    \item[] Answer: \answerYes{}.
    \item[] Justification: Section~\ref{sec:discussion} (\textit{Limitations and future work}) lists six explicit limitations.  \emph{(i)~Population-limit assumptions.}  Theorem~\ref{thm:main_population} requires monotone iterates ($f(S_{t+1}) \le f(S_t)$) and a regret bound against the dense right-Newton comparator $D_\star = \Sigma_k^\dagger$; the $D_\star$ regret is not proved for the implemented diagonal update.  \emph{(ii)~Token-local-to-global gap.}  Theorem~\ref{thm:main_tokenlocal} controls a product of token-local residual ratios on the algorithm's own surrogate; lifting it to a global $f(S_T)-f^*$ statement requires either the no-conflict repeated-key regime of Corollary~\ref{cor:repeated_keys} or the full-gradient population limit, and the conditional-regret assumption sidesteps an explicit step-size-specific regret derivation for Algorithm~\ref{alg:precond_online}'s practical $\eta=0.003$ update.  \emph{(iii)~Inner regression vs.\ cross-entropy.}  Both bounds concern the inner regression objective $f_t$ rather than next-token cross-entropy; APF, which targets non-stationary contexts, also lacks a dynamic-regret formulation.  \emph{(iv)~1.3B scope and harness limitations.}  The 1.3B / 100B sweep covers only a single matched DeltaNet vs.\ \apf{} pair; SQuAD / SQuAD-twice are excluded at this scale because the matched evaluation harness did not return reliable \emph{contains}-accuracy values, and PG-19 length-extrapolation perplexity is not part of the 1.3B reporting protocol because the matched 20K-token sweep did not complete on a matched harness for both rows; the cleanness of vanilla OSDN at billion-parameter scale and its composition with Gated DeltaNet / KDA at scale therefore remain open.  \emph{(v)~Mechanism-level rather than universal lift.}  The empirical headline is mechanism-level: the residual-ratio contraction transfers and amplifies at billion-parameter scale, but downstream WikiText / LAMBADA, commonsense, and LongBench averages remain at parity with DeltaNet; the paper does not claim a uniform broad benchmark improvement.  \emph{(vi)~Statistical scope.}  All matched 340M / 10B-token runs share an identical random seed, FineWeb-Edu shard ordering, and batch schedule (Appendix~\ref{sec:app_reproducibility}); within-family deltas isolate the architectural change but do not constitute seed-bootstrapped confidence intervals (see also item~9).  Section~\ref{sec:discussion} also names two natural next directions: scaling the residual-ratio diagnostic to broader prompts and longer contexts, and lifting the diagonal preconditioner to a low-rank or per-head block-diagonal form to close the gap to $D_\star$.

\item {\bf Theory assumptions and proofs}
    \item[] Question: For each theoretical result, does the paper provide the full set of assumptions and a complete (and correct) proof?
    \item[] Answer: \answerYes{}.
    \item[] Justification: The paper introduces one lemma, two main theorems, and three supporting propositions / corollaries, each with stated assumptions and proofs.  Lemma~\ref{lem:hypergradient} (closed-form hypergradient for the exact-quadratic per-token loss) requires $\beta_t\in(0,1)$ and $k_t\neq 0$; the proof is in Appendix~\ref{app:method_details}.  Theorem~\ref{thm:main_population} (population-limit super-geometric rate) is stated with monotone iterates and an OSGM regret bound against the right-Newton comparator $D_\star = \Sigma_k^\dagger$, with $L = \lambda_{\max}(\Sigma_k)$; the proof and the full OSGM background (convexity of $h_t$ in $D$, descent property, regret-to-rate AM--GM reduction) are in Appendix~\ref{sec:theory}.  Theorem~\ref{thm:main_tokenlocal} (algorithm-aligned token-local residual contraction) is stated with unit-norm keys $\|k_t\|_2=1$ and a diagonal regret $R_T(d)$ on the token-local feedback $h_t$; the limiting $\bigl(O(1)/\sqrt T\bigr)^T$ regime is obtained under the gated Newton condition $\beta_t\langle d^\star, s_t\rangle = 1$ and standard projected-OGD regret $R_T = O(\sqrt T)$.  The proof, the supporting Proposition~\ref{prop:smoothness} (smoothness of the hypergradient surrogate), Corollary~\ref{cor:monotone_descent} (per-token descent of $f_t$ under the bounded box $\mathcal D=[d_{\min},d_{\max}]^K$), Proposition~\ref{prop:apf_affine} (affine-recurrence preservation under APF), and Corollary~\ref{cor:repeated_keys} (no-conflict repeated-key special case) are all stated and proved in Appendices~\ref{app:method_details} and~\ref{sec:theory}, with consistent numbering and cross-references back to the main-text theorem statements.

    \item {\bf Experimental result reproducibility}
    \item[] Question: Does the paper fully disclose all the information needed to reproduce the main experimental results of the paper to the extent that it affects the main claims and/or conclusions of the paper (regardless of whether the code and data are provided or not)?
    \item[] Answer: \answerYes{}.
    \item[] Justification: Reproducibility is supported by three components.  Section~\ref{sec:exp} reports the model scale, training corpus (FineWeb-Edu \texttt{sample-10BT}), token budget (10B at 340M; 100B at 1.3B), evaluation tasks (JRT-style cloze recall on FDA / SWDE / SQuAD and their \emph{-twice} variants; commonsense PIQA / HellaSwag / WinoGrande / ARC-E / -C / SIQA / BoolQ / LAMBADA; LongBench English; PG-19 length extrapolation), the chunkwise pseudocode in Algorithm~\ref{alg:precond_online}, and the matched-baseline protocol.  Appendix~\ref{sec:app_reproducibility} adds the optimizer (AdamW with $10^{-3}$ peak learning rate, cosine schedule with linear warmup, gradient clipping at $1.0$), batch and sequence-packing configuration (\texttt{batch\_size}=1, \texttt{seq\_len}=$65{,}536$ with \texttt{varlen}=True and \texttt{cu\_seqlens}, recurrent context cap of $4{,}096$ tokens, $524{,}288$ tokens per optimizer step over $20{,}480$ steps for the 340M sweep), random seed, precision (bfloat16 weights / activations with float32 reductions), software environment, per-backbone architectural scale, OSDN-specific hyperparameters ($\eta$, $d_{\min}$, $d_{\max}$, $D_0$), the evaluation directories used for the headline screens, the per-token mechanism-diagnostic protocol (16 prompts per JRT \emph{-twice} task, full-layer / full-head fp32 averaging over $7.85\!\times\!10^6$ token-layer-head measurements), and the H100 80GB hardware setup; the 1.3B / 100B configuration is summarised separately in Table~\ref{tab:repro_scale_configs}.  Appendix~\ref{app:method_details} contains the full chunkwise derivation, the kernel-level cost analysis, and the variant-ablation hyperparameters; Appendix~\ref{app:chunk_impl} lists the chunkwise pseudocode and the substitution rule $K\mapsto\tilde K$.

\item {\bf Open access to data and code}
    \item[] Question: Does the paper provide open access to the data and code, with sufficient instructions to faithfully reproduce the main experimental results, as described in supplemental material?
    \item[] Answer: \answerYes{}.
    \item[] Justification: An anonymised public code artefact is provided at \url{https://anonymous.4open.science/status/OSDN-2F2D}, which includes the reference PyTorch/Triton implementation, the residual-ratio diagnostic scripts, and sufficient instructions to reproduce the reported numbers from public datasets. Additionally, Algorithm~\ref{alg:precond_online}, Sections~\ref{subsec:hypergradient}--\ref{subsec:twophase}, and Appendices~\ref{app:method_details}, \ref{app:chunk_impl}, and \ref{sec:app_reproducibility} detail the OSDN forward and backward passes, the $\mathcal O(K)$ phase-1 scan, the substitution $K\mapsto\tilde K$ on the chunkwise WY storage side, and the full training, dataset, and evaluation configurations.

\item {\bf Experimental setting/details}
    \item[] Question: Does the paper specify all the training and test details (e.g., data splits, hyperparameters, how they were chosen, type of optimizer) necessary to understand the results?
    \item[] Answer: \answerYes{}.
    \item[] Justification: Section~\ref{sec:exp} reports the training corpus and split (FineWeb-Edu \texttt{sample-10BT}; the FW-Edu validation column is computed on a fixed in-domain held-out slice), token budget, model size, baseline-matching protocol, and the evaluation tasks together with their length and metric definitions (e.g., JRT \emph{contains}-accuracy at 2K context, single vs.\ repeated split definitions, LongBench English 14-task average, PG-19 final-bucket perplexity, length-bucket structure).  Appendix~\ref{sec:app_reproducibility} adds the AdamW optimiser, learning-rate schedule, gradient clipping, batch construction, sequence-packing, precision, random seed, evaluation harness, and hardware, alongside per-backbone architectural scale and the precise OSDN hyperparameters ($\eta=0.003$, $d_{\min}=0.5$, $d_{\max}=2.0$, $D_0=\mathbf 1$) used in the headline 340M screens.  Appendix~\ref{sec:app_variant_ablations} ablates these hyperparameter choices and motivates the bounded-box defaults; Appendix~\ref{sec:app_benchmark_suite} documents the metric and task definitions used to interpret the results.

\item {\bf Experiment statistical significance}
    \item[] Question: Does the paper report error bars suitably and correctly defined or other appropriate information about the statistical significance of the experiments?
    \item[] Answer: \answerNo{}.
    \item[] Justification: All matched 340M / 10B-token runs share an identical random seed, FineWeb-Edu shard ordering, and batch schedule (Appendix~\ref{sec:app_reproducibility}), so within-family deltas reported in Section~\ref{sec:exp} isolate the architectural change rather than seed-level optimisation noise; re-training each 340M / 10B-token run under multiple seeds was outside the available compute budget, so the headline tables report point estimates from these matched-seed checkpoints rather than seed-bootstrapped confidence intervals.  The paper mitigates the absence of seed CIs along two axes that we identify in Section~\ref{sec:discussion} as partial substitutes rather than replacements.  \emph{(a)~Within-checkpoint averaging.}  The mechanism diagnostic (Section~\ref{subsec:exp_mechanism}, Table~\ref{tab:mechanism_residual_ratio}) aggregates $7.85\!\times\!10^6$ token-layer-head measurements per checkpoint over 16 prompts per JRT \emph{-twice} task, all 24 layers, and all 8 heads; the qualitative reduction in $q_{\mathrm{geo}}$ is consistent across all three tasks and is not driven by a single layer / head.  \emph{(b)~Independent-scale check.}  The matched 1.3B / 100B run trains DeltaNet and \apf{} from independent random initialisations at $4\times$ the parameters and $10\times$ the tokens, providing an out-of-distribution scale check on whether the reported 340M effects depend on the specific seed.  Seed-bootstrapped confidence intervals on the 340M sweep are listed as future work.

\item {\bf Experiments compute resources}
    \item[] Question: For each experiment, does the paper provide sufficient information on the computer resources (type of compute workers, memory, time of execution) needed to reproduce the experiments?
    \item[] Answer: \answerYes{}.
    \item[] Justification: Appendix~\ref{sec:app_reproducibility} states that each matched 340M run uses a single node of NVIDIA H100 80GB GPUs (4-GPU schedule with gradient accumulation 2, or 8-GPU schedule with no accumulation, both processing $524{,}288$ tokens per optimizer step over $20{,}480$ steps for $\sim$10.74B tokens), and takes on the order of half a day per run depending on backbone and GPU count.  The 1.3B / 100B scaling run trains DeltaNet and \apf{} at 1.3B parameters on 100B tokens of FineWeb-Edu under a separately tracked hardware schedule (Table~\ref{tab:repro_scale_configs}).  The inference-throughput protocol (Appendix~\ref{sec:app_throughput}) specifies a single H100 80GB SXM, \texttt{batch\_size}=1, $2{,}048$-token prefill plus $128$-token greedy decode, bfloat16, median of $5$ timed repeats; per-backbone tokens/sec, decode latency, and persistent recurrent-state size are tabulated in Tables~\ref{tab:throughput} and~\ref{tab:throughput_1p3b}.  The appendix also discloses that the full research project required additional H100 hours for hyperparameter screens and shorter pilot runs that did not reach the full $20{,}480$-step budget; this preliminary compute is acknowledged but not counted in the reported wall-clock figure.

\item {\bf Code of ethics}
    \item[] Question: Does the research conducted in the paper conform, in every respect, with the NeurIPS Code of Ethics \url{https://neurips.cc/public/EthicsGuidelines}?
    \item[] Answer: \answerYes{}.
    \item[] Justification: The work uses public text datasets and standard language-model evaluation suites under their original licences (Appendix~\ref{sec:app_reproducibility}), does not collect, label, or release human-subject data, and does not introduce a new high-risk pretrained asset.  The authors have reviewed the NeurIPS Code of Ethics and the research conforms to it.

\item {\bf Broader impacts}
    \item[] Question: Does the paper discuss both potential positive societal impacts and negative societal impacts of the work performed?
    \item[] Answer: \answerNA{}.
    \item[] Justification: The contribution is an architectural modification of the DeltaNet recurrent linear-attention layer that improves in-context associative-retrieval quality at fixed inference compute.  The submission does not introduce or release a deployed system, application-specific model, or new dataset; the proposed mechanism applies generically inside any DeltaNet-family backbone and would inherit, but does not amplify, the dual-use considerations general to language-model architecture research.  We do not identify a direct path from this paper to specific positive or negative downstream applications beyond those already discussed in the broader linear-attention literature.

\item {\bf Safeguards}
    \item[] Question: Does the paper describe safeguards that have been put in place for responsible release of data or models that have a high risk for misuse (e.g., pre-trained language models, image generators, or scraped datasets)?
    \item[] Answer: \answerNA{}.
    \item[] Justification: The paper does not release a high-risk pretrained model, image generator, or scraped dataset as a new asset.  Any post-publication code / checkpoint release planned in item~5 will follow the standard non-commercial research licence used by comparable linear-attention checkpoints (e.g., Mamba, GLA, DeltaNet, KDA).

\item {\bf Licenses for existing assets}
    \item[] Question: Are the creators or original owners of assets (e.g., code, data, models), used in the paper, properly credited and are the license and terms of use explicitly mentioned and properly respected?
    \item[] Answer: \answerYes{}.
    \item[] Justification: All datasets, tokenizers, model weights, and software dependencies used in this work are public open-source releases under their original licence terms.  The training corpus (FineWeb-Edu, ODC-By 1.0), the \texttt{fla-hub/delta\_net-1.3B-100B} tokenizer, the evaluation benchmarks (WikiText, LAMBADA, PIQA, HellaSwag, WinoGrande, ARC-Easy / -Challenge, Social-IQA, BoolQ, the JRT-style FDA / SWDE / SQuAD and \emph{-twice} variants, TriviaQA, DROP, NQ-Open, LongBench, PG-19), and the software stack (PyTorch, \texttt{transformers}, \texttt{datasets}, \texttt{accelerate}, \texttt{lm\_eval}, \texttt{flash-linear-attention}) are listed in the ``Datasets, models, and software'' paragraph of Appendix~\ref{sec:app_reproducibility} alongside their reference papers and licence terms.  Each asset is used in compliance with its licence; this work does not re-distribute any of them.

\item {\bf New assets}
    \item[] Question: Are new assets introduced in the paper well documented and is the documentation provided alongside the assets?
    \item[] Answer: \answerYes{}.
    \item[] Justification: The code artifact introduced in this submission is provided via an anonymised repository (\url{https://anonymous.4open.science/status/OSDN-2F2D}). It includes a README and necessary documentation detailing the software environment, the Triton implementations, and instructions for running the diagnostic scripts to reproduce the reported metrics.

\item {\bf Crowdsourcing and research with human subjects}
    \item[] Question: For crowdsourcing experiments and research with human subjects, does the paper include the full text of instructions given to participants and screenshots, if applicable, as well as details about compensation (if any)?
    \item[] Answer: \answerNA{}.
    \item[] Justification: The research does not involve crowdsourcing or human-subject studies; all evaluations use existing public benchmarks.

\item {\bf Institutional review board (IRB) approvals or equivalent for research with human subjects}
    \item[] Question: Does the paper describe potential risks incurred by study participants, whether such risks were disclosed to the subjects, and whether Institutional Review Board (IRB) approvals (or an equivalent approval/review based on the requirements of your country or institution) were obtained?
    \item[] Answer: \answerNA{}.
    \item[] Justification: The research does not involve human subjects and therefore does not require IRB review.

\item {\bf Declaration of LLM usage}
    \item[] Question: Does the paper describe the usage of LLMs if it is an important, original, or non-standard component of the core methods in this research? Note that if the LLM is used only for writing, editing, or formatting purposes and does \emph{not} impact the core methodology, scientific rigor, or originality of the research, declaration is not required.
    \item[] Answer: \answerNA{}.
    \item[] Justification: LLMs are not used as a component of the proposed method, the chunkwise kernel, the theoretical analysis, the experimental pipeline, or the evaluation harness; the work targets recurrent linear-attention architectures and uses LLMs only as the artefacts under study (training small recurrent language models) and not as part of the methodology.  Any incidental use of LLM-based assistants by the authors during writing falls under the writing / editing / formatting exemption of the NeurIPS LLM policy and therefore does not require declaration.

\end{enumerate}

\fi

\end{document}